\newtheorem{theorem}{Theorem}
\newtheorem{assumption}[theorem]{Assumption}
\newtheorem{definition}{Definition}
\newtheorem{remark}{Remark}
\newcommand\Ccancel[2][black]{\renewcommand\CancelColor{\color{#1}}\cancel{#2}}
\DeclareMathOperator*{\sgn}{sign}
\newcommand{\bi}{\begin{itemize}}
\newcommand{\ei}{\end{itemize}}
\newcommand{\bd}{\begin{displaymath}}
\newcommand{\ed}{\end{displaymath}}
\newcommand{\be}{\begin{align*}}
\newcommand{\ee}{\end{align*}}
\begin{document}

\begin{frontmatter}

\title{A Fast Saddle-Point Dynamical System Approach to Robust Deep Learning\footnote{*This work was partially supported by NSF CAREER Grant (CNS$\#1845969$)}}

\author[myprimaryaddress]{Yasaman Esfandiari}
\author[myprimaryaddress]{Aditya Balu}
\author[myprimaryaddress]{Keivan Ebrahimi}
\author[mysecondaryaddress]{Umesh Vaidya}
\author[mythirdaddress]{Nicola Elia}
\author[myprimaryaddress]{Soumik Sarkar}

\address[myprimaryaddress]{Iowa State University}
\address[mysecondaryaddress]{Clemson University}
\address[mythirdaddress]{University of Minnesota}
\cortext[mycorrespondingauthor]{Corresponding author}
\ead{soumiks@iastate.edu}

\journal{Neural Networks}

\begin{abstract}
Recent focus on robustness to adversarial attacks for deep neural networks produced a large variety of algorithms for training robust models. Most of the effective algorithms involve solving the min-max optimization problem for training robust models (min step) under worst-case attacks (max step). However, they often suffer from high computational cost from running several inner maximization iterations (to find an optimal attack) inside every outer minimization iteration. Therefore, it becomes difficult to readily apply such algorithms for moderate to large size real world data sets.
To alleviate this, we explore the effectiveness of iterative descent-ascent algorithms where the maximization and minimization steps are executed in an alternate fashion to simultaneously obtain the worst-case attack and the corresponding robust model. Specifically, we propose a novel discrete-time dynamical system-based algorithm 
that aims to find the saddle point of a min-max optimization problem in the presence of uncertainties. Under the assumptions that the cost function is convex and uncertainties enter concavely in the robust learning problem, we analytically show that our algorithm converges asymptotically to the robust optimal solution under a general adversarial budget constraints as induced by $\ell_p$ norm, for $1\leq p\leq \infty$. Based on our proposed analysis, we devise a fast robust training algorithm for deep neural networks. Although such training involves highly non-convex robust optimization problems, empirical results show that the algorithm can achieve significant robustness compared to other state-of-the-art robust models on benchmark data sets. 
\end{abstract}

\begin{keyword}
Adversarial Training\sep Robust Deep Learning\sep Robust optimization 
\end{keyword}

\end{frontmatter}


\section{Introduction}
\label{Introduction}
The success of adversarial perturbations to input data for deep learning models poses a significant challenge for the machine learning community. The motivation for robustness against these adversaries stems from safety and life-critical applications of deep learning-based perception system such as self-driving cars, infrastructure assessment and security applications~\cite{sitawarin2018darts,biggio2013evasion, haghighat2020applications,hosseini2021prediction}. While pure white-box attacks~\cite{biggio2013evasion,carlini2017towards,goodfellow2014explaining,kurakin2016adversariala,szegedy2013intriguing,moosavi2016deepfool,yao2019trust} (where an adversary has full knowledge of the machine learning model) could be difficult to execute in practice, researchers have shown strong transferability of attacks~\cite{papernot2017practical} that can still cause significant damage. 

As attacks became more and more powerful, several defense strategies have also been proposed. A popular category of defense strategy is adversarial training, where adversarial examples are added to the training set followed by training the network using the augmented dataset~\cite{goodfellow2014explaining,szegedy2013intriguing}. However, such methods seem to be quite sensitive to adversarial budget used for generating the adversarial examples as well as other training hyper-parameters. A more powerful and stable defense mechanism originates from the min-max robust optimization problem~(${\cal RO}$)~\cite{bertsimas2011theory,ben2009robust,madry2017towards} by decoupling the minimization and maximization parts using the Danskin's theorem~\cite{danskin1966theory}. Here, the inner maximization refers to finding the adversarial perturbation that would maximize the training loss. On the other hand, the outer minimization deals with minimization of the training loss for the perturbed inputs. The decoupling process leads to the class of algorithms where, at a training epoch, one can find the worst-case attacks concerning the current model. Then a model parameter update step is executed following the traditional training process using perturbed training set with the worst-case attacks. However, finding the worst-case perturbation for deep learning models is quite non-trivial and cannot be guaranteed primarily due to the highly non-convex nature of the cost surface and non-concave nature of adversarial perturbations. Typically, powerful attacks such as projected gradient descent (PGD)~\cite{madry2017towards}, trade off-inspired adversarial defense via surrogate-loss minimization (TRADES)~\cite{DBLP:journals/corr/abs-1901-08573}, Free adversarial training~\cite{shafahi2019adversarial}, etc. are run in order to find the worst-case perturbations at every training epoch. However, it is observed empirically that the attacks with higher computational budgets seem to be more successful in approximating the worst-case perturbations, e.g., 20 step PGD is much stronger than a single step PGD. Therefore, it is usually quite expensive computationally to find a robust deep learning model with this algorithmic principle. Besides, there still remains a significant gap in the literature, in crafting theoretically sound algorithms for robust learning. There are only a few studies exploring the robust learning problem analytically. \citet{fawzi2016robustness} first analyzes robustness of nonlinear classifiers in a general noise regime, \citet{shaham2018understanding} proposed a framework to justify the performance of adversarial training theoretically, \citet{DBLP:journals/corr/abs-1901-08573} considered the trade off between robustness and accuracy, and the recent paper by \citet{lin2019gradient} proposes gradient descent ascent approach to solve the robust learning problem. Additionally,~\citet{sinha2017certifying} proposed an algorithm for solving the adversarial training problem from a distributionally robust optimization lens for finding the worst-case adversary based on the Lagrangian formulation of the objective function.

In this paper, we introduce a new algorithm for adversarial training of deep neural networks, which is motivated by the ${\cal RO}$ problem. Traditionally, robust optimization problems are solved iteratively by alternative steps of ascent and descent for achieving the worst case attack and finding the corresponding robust model for the attack. Our proposed defense approach called the stochastic saddle point dynamical systems (SSDS) algorithm, follows this principle to devise a fast algorithm for robust learning. 
We first formulate the min-max ${\cal RO}$ problem that arises in adversarial training of deep neural networks (DNNs) and show analytically that under strictly-convex and strictly-concave assumption for the loss function and the adversarial perturbations in the robust learning problem, SSDS algorithm converges asymptotically to the robust optimal solution, the min-max saddle point, under a general adversarial budget constraints as induced by $\ell_p$ norm, for $1\leq p\leq \infty$. 

Unlike existing approaches, our proposed algorithm does not decouple the minimization and maximization problems involved in robust optimization. Instead, it attempts to solve both problems simultaneously by evolving both the model parameters and the adversarial perturbations through the training epochs. As we do not attempt to find the worst-case perturbations at every training epoch, we save significant computation overhead as compared to other methods such as PGD-training~\cite{madry2017towards}, TRADES ~\cite{DBLP:journals/corr/abs-1901-08573}, and YOPO~\cite{zhang2019you}. While there are recent efforts to mitigate the computational overhead by using random projections ~\cite{wong2018scaling} instead of finding the worst case attacks, our approach is fundamentally different as we still try to solve the coupled robust optimization problem (that includes finding the worst case perturbations) while reducing the computations overhead. 

\textbf{Contributions:} Main contributions of this work are: (i) We propose a new Stochastic Saddle-point Dynamical Systems (SSDS) algorithm for robust learning that ensure satisfaction of the attack budget (induced by $\ell_p$ norm, for $1\leq p\leq \infty$) asymptotically without hard projections; (ii) We analyze the convergence characteristics for the proposed SSDS algorithm under the strictly-convex assumption for the loss function and strictly-concave assumption for the adversarial perturbations in the robust learning problem; (iii) We propose a mini-batch variant of the SSDS algorithm to make it feasible for deep learning problems; (iv) Finally, we provide empirical results (based on CIFAR-10~\cite{krizhevsky2014cifar} and other benchmark datasets) to show that the proposed approach is a computationally inexpensive way to train robust models for white- and black-box attacks in comparison with the state-of-the-art PGD and TRADES approaches. We also provide comparison with the stochastic gradient descent ascent algorithm (SGDA)~\cite{lin2019gradient} that can be treated as a baseline for our proposed approach.
\section{Related Work}
\label{RelatedWork}

Due to space constraints and a large amount of recent progress made by the adversarial machine learning community, our discussion of related work is necessarily brief. Here, we attempt to discuss the most recent \& relevant literature. We divide the section as: (1) adversarial attack/defense and (2) robust optimization.

\textbf{Adversarial Attack and Defense:} Initial evidence for vulnerability of deep classifiers to imperceptible adversarial perturbations was shown by \citet{szegedy2013intriguing}. Around the same time, \citet{biggio2013evasion} showed that SVMs could malfunction in security-sensitive applications and proposed a regularization term in the classifiers. In the deep learning community, \citet{goodfellow2014explaining} and \citet{kurakin2016adversariala} proposed the Fast Gradient Sign Method (FGSM) and its iterative variants as powerful attack strategies to fool deep learning models. adopting a similar notion, PGD~\cite{madry2017towards} attacks are crafted by going through iterative steps of generating attacks (e.g. $7$ and $10$ steps) to achieve more powerful adversaries. As the computational expenses associated with such attacks are significant, researchers have recently suggested different methods to mitigate that. \citet{shafahi2019adversarial} suggested using the gradients which is calculated for updating the model parameters to save computations and showed comparable results with state-of-the-art. \citet{wong2020fast} showed that comparably accurate models can be achieved in less time using FGSM along with several techniques from DAWNBench submissions~\citep{coleman2017dawnbench} (e.g., random initialization) to improve the performance. Additionally,~\citet{zhang2019you} showed that by restricting most of the forward and back propagation
within the first layer of the network during adversary updates, the computational time for achieving a robust model can be reduced. While these methods mainly focused on white box attacks, \citet{papernot2017practical} introduced the notion of black-box attacks where the adversary does not have complete knowledge of the learning model. Attacks can also be categorized into test-time~\cite{biggio2013evasion} and train-time (also known as data poisoning) attacks~\cite{khalid2018security}, and targeted and non-targeted attacks~\cite{barreno2010security}. In this paper, we only focus on test-time, non-targeted attacks. Apart from deep perception models such as Convolutional Neural Networks (CNN), researchers have also shown similar vulnerabilities of deep reinforcement learning (RL) models using similar philosophy~\cite{ijcai2017-525, havens2018online, tan2020robustifying, lee2020query, tan2020robust}. The notion of adversarial robustness is additionally useful in other areas of machine learning such as image synthesis~\cite{DBLP:journals/corr/abs-1906-09453,joshi2019semantic}.

Several defense approaches have been proposed in the literature, such as using denoising autoencoders-based Deep Contractive Networks~\cite{gu2014towards}, and defining a network robustness metric~\cite{NIPS2016_6339}. However, as discussed in the previous section, the most popular robust deep learning methods involve some form of adversarial training~\cite{2017arXiv170507204T,madry2017towards,goodfellow2014explaining,shaham2018understanding,DBLP:journals/corr/abs-1901-08573, zhang2019defense}. Defensive distillation~\cite{papernot2016distillation} is also another method of defense which showed fascinating results. However, \citet{carlini2017towards} could break such a defense mechanism by proposing multiple adversarial loss functions. \citet{athalye2018obfuscated} further analyzed various defense approaches and demonstrated that most existing defenses could be beaten by approximating gradients over defensively trained models. Recently, \citet{zhang2019defense} introduced Feature Scatter algorithm which increased the accuracy of robust models notably. This approach involves generating adversarial images through feature scattering in the latent space.

\textbf{Robust Optimization:}
\citet{shaham2018understanding} show that adversarial training of neural networks is, in fact, robustification of the network optimization, which we can exploit to increase the local stability of neural networks. Robust optimization has also been used in~\cite{Chen2017RobustOF} to find an approximately optimal min-max solution that optimizes for non-convex objectives. This method is based on a reduction from robust optimization to stochastic optimization. Here a $\alpha$-approximate stochastic oracle is given, and $\alpha$-approximate robust optimization in a convexified solution space is obtained. Nonetheless, ideas from robust optimization (closely related to regularization in machine learning~\cite{sra2011,xu2009}) for solving robust learning problems has not been explored sufficiently. Also, while there is recent work in robust optimization using continuous-time dynamical systems for the saddle point dynamics~\cite{ebrahimi2019}, it is deterministic, and its application to deep learning problems is not explored.

\section{Problem Formulation}
\label{Problem Formulation}
In this section, we first formally state the robust learning problem from a robust optimization point of view. Then we formulate the saddle-point dynamical system framework for solving robust learning problems. We consider a standard classification task under a data distribution ${\cal D}$ over the dataset $I = \{I^{1}, I^{2}, \cdots, I^{N}\}$, where, $I^{i} \in \mathbf{R}^m$ with set of labels, $y$. The loss function ({\it e.g.}, cross-entropy loss) is denoted by $L(I,y,w)$ with $w \in \mathbf{R}^{n}$ as the model parameters (decision variables). From a robust optimization (${\cal RO}$) perspective~\cite{bertsimas2011theory,ben2009robust,shaham2018understanding,ebrahimi2019}, robust learning can be written as
\begin{align} \label{RO}
{\cal RO}:=&\min_{w} \; \mathbb{E}_{(I,y) \sim {\cal D}} \; \Big[\underset{u \in {\cal U}}{\max}\;\; L(I+u,y,w)\Big]\;
\end{align}
where, the loss function $L$ is also a function of additive perturbation or uncertainty $u$ (constrained by uncertainty set $\cal U$) to the input. 

We approximate the expected loss using the standard empirical risk minimization for a finite number of i.i.d training samples, $I^{i} \ \text{for}\  i\in\{1,2,\cdots N\}$. We consider $u^{i}$ as the corresponding uncertainty for the data point $I^{i}$. Hence, ${\cal RO}$ problem (Eq.~\ref{RO}) can be written as
\begin{align}\label{ROdecomposable}
{\cal RO}:=&\min_{w} \textstyle \sum_{i=1}^N \underset{u^{i} \in {\cal U}^{i}}{\max}\; L(I^{i}+u^{i},y^{i},w)\;
\end{align}

The fundamental assumption in ${\cal RO}$ is that the uncertainty variables reside within the uncertainty sets
\begin{align}
{\cal U}^{i}:=\{u^{i}\in \mathbf{R}^{m}: h^{i}(u^{i})\leq 0\},\; i=1,\ldots,N\;,\label{budget_constraints}
\end{align}
where the $h^{i}$ functions representing the uncertainty sets are typically assumed to be convex functions such as norm-bound budgets. The goal here is to obtain model parameters, $w$, (e.g., weights and biases for neural networks) that are robust for all possible uncertainty parameter realizations within the uncertainty sets.

We assume that the ${\cal RO}$ problem (Eq. \ref{RO}) has at least one robust feasible solution.  Then we rewrite Eq.~\ref{RO} in an epigraph form~\cite{boyd2004}.
\begin{align} \label{RO_epigraph}
&\quad\quad\quad{\cal RO}:=\min_{w,t} \; t \nonumber\\
\text{s.t} & \sum_{i=1}^N \underset{u^{i} \in {\cal U}^{i}}{\max} L(I^{i}+u^{i},y^{i},w)-t \leq 0
\end{align}
This is an equivalent, albeit more convenient form for our framework, where $t$ is being added to the vector of model parameters as an auxiliary decision variable.

We define a Lagrangian multiplier $\lambda\geq0$ and the vector of model parameters as $x:=(w,t)$. We can write Eq.~\ref{RO_epigraph} as
\begin{align*}
\underset{x=(w,t)}{\min}\;\underset{\lambda \geq 0} \max \; \Big\{ t+\lambda \; \big(\sum_{i=1}^N \underset{u^{i} \in {\cal U}^{i}}{\max}\;L(I^{i}+u^{i},y^{i},w)-t\big)
\Big\}\;
\end{align*}   

Then the total Lagrangian can be written as
\begin{align}
{\cal L}(x,\lambda,u,v):=&\; t+\lambda \;\Big( \textstyle \sum_{i=1}^N \big(L(I^{i}+u^{i},y^{i},w) \nonumber\\ 
& \; -v^{i} \; h^{i}(u^{i}) \big)-t \Big) \;
\label{lag}
\end{align}
where the set $\{v^{i}\}$ are the Lagrangian multipliers for the inner 
maximization problem. Typically, loss function in an ${\cal RO}$ framework is considered to be a function of just the model parameters, not the uncertainty. Therefore, we formulate the algorithm with Lagrangian multipliers that ensure satisfaction of the attack budget asymptotically without hard projections. 
\begin{remark}
The ability to solve ${\cal RO}$ with general budget constraints without involving hard projection is advantageous from a computational perspective, as imposing hard projection may involve solving an optimization problem in itself.
\end{remark}

Derivation of the Lagrangian function along with the definition and properties of the saddle and KKT point of ${\cal RO}$ problem can be found in the supplementary material.

\section{Stochastic Saddle-Point Dynamical System Algorithm} \label{sec4}
Based on the problem setup laid out in the previous section, we now propose the stochastic saddle-point dynamical system (SSDS) algorithm for robust learning. We then introduce the mini-batch variant of the SSDS algorithm conducive to deep learning. Additionally, we also state the stochastic gradient descent ascent (SGDA) algorithm~(\citet{lin2019gradient}), which can be considered as a simplified baseline for the proposed SSDS algorithm.


\subsection{Algorithm Formulation}\label{SSDS}

In the previous section, we provided the total Lagrangian for obtaining the saddle and KKT point of ${\cal RO}$ problem. While that is sufficient to understand the saddle point dynamics of the ${\cal RO}$ problem, the implementation for deep learning problems necessitates the formulation of a stochastic discrete-time saddle point dynamical system to find the saddle point of the Lagrangian function. Based on the epigraph form of the optimization problem (Eq.~\ref{RO_epigraph}), we define $x:=(t,w),\ f(x):=t,\ g(x,u,\xi):=L(I^\xi+u^\xi,y,w)-t$, where $\xi\in \{1,\ldots,N\}$ is a random variable modeling the process for randomly selecting a data point out of $N$ possible samples. 

With these notations, we propose the following update rules for the parameters:
\begin{align}
&x_{k+1}=x_k-\alpha_k (\partial_x f(x_k)+\lambda_k \partial_x g(x_k,u_k,\xi_k))\label{sspd1}\;,\\
&\lambda_{k+1}=\big[\lambda_k+\alpha_k \big(g (x_k,u_k,\xi_k)-\textstyle \sum_{i=1}^N v_k^{i} h^{i}(u^{i}_k)\big)\big]_+\label{sspd2}\;,\\
&u^{i}_{k+1}=u^{i}_k+\alpha_k (\partial_{u^{i}} g(x_k,u_k)-v_k^{i} \partial_{u^{i}} h^{i}(u^{i}_k))\label{sspd3}\;,\\
&v^{i}_{k+1}=[v_k^{i}+\alpha_k \lambda_k h^{i}(u^{i}_k)]_+\;\;i=1,\ldots,N.\label{sspd4}
\end{align}
where, $[\cdot]_{+}$ is positive projection, $\xi_k$ is assumed to be an independent and identically distributed random process and $\alpha_k$ is the adaptive step-size with the following characteristics

\begin{align}\label{alpha}
    &\alpha_k=\frac{\gamma_k}{\Vert T(z_k)\Vert_{2}}\;,{\rm with} \;\;\gamma_k>0\;,\nonumber\\
&\sum_{k=1}^{\infty}\gamma_k=\infty\;,\; \textstyle \sum_{k=1}^{\infty} \gamma_k^{2}<\infty\;.
\end{align}
The following assumptions are made on $f$, $g$ and $h^{i}$ s. Note that the weights $x$ is updated using the loss function information for a randomly selected sample $I^\xi$. However, the uncertainty variables $u^i$ and associated multipliers $v^i$ are updated corresponding to all samples.   
\begin{assumption}\label{assume_ssds}
We assume that 
$f(x)$ is  convex in $x$ and each $h^{i}(u^{i})$ is convex in $u^{i}$.
Moreover, $g(x,u,\xi)$ is convex in $x$ and is strictly concave in $u$ for any fixed value of \;$\xi$.  
\end{assumption}

\begin{remark}
Clearly, the assumptions of strict convexity of $f$ and concavity of $g$ are not satisfied in the DNN setting. However, the strict convexity and strict concavity assumptions could be relaxed with weaker convergence results than the one reported below.

\end{remark}

The following theorem is the main result for asymptotic convergence of the discrete-time saddle point algorithm with diminishing step-size. We show that the update rules in (\ref{sspd1})-(\ref{sspd4}) lead to convergence to the KKT point (equivalent to the saddle point as specified in the supplementary material) of the ${\cal RO}$ problem.

\begin{theorem}
Let Assumption~\ref{assume_ssds} hold and we also assume that $\lambda^\star>0$ where $\lambda^\star$ is the saddle point of the Lagrangian~(for $\lambda$ in Eq.\ref{lag}), then, following is true for the SSDS algorithm with adaptive step-size $\alpha_k$ satisfying Eq.~\ref{alpha}.
\begin{align}
\lim_{k\to \infty}\mathbb{E}_{\xi_0^k}[x_k]=x^\star,&
\lim_{k\to \infty}\mathbb{E}_{\xi_0^k}[u_k]=u^\star\;,\\
\text{where}\quad \xi_0^k&=\{\xi_0,\ldots,\xi_k\}.\nonumber
\vspace{-20pt}
\end{align}
\end{theorem}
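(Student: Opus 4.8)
The plan is to treat the four update rules \eqref{sspd1}--\eqref{sspd4} as a single projected stochastic recursion on the augmented state $z_k := (x_k,\lambda_k,u_k,v_k)$ and to show that the squared distance to the saddle point, $V_k := \norm{z_k - z^\star}_2^2$, behaves as an approximate supermartingale. Write $z_{k+1} = \Pi\big(z_k + \alpha_k T(z_k,\xi_k)\big)$, where $T$ collects the right-hand sides of \eqref{sspd1}--\eqref{sspd4} and $\Pi$ denotes the componentwise positive projection acting on the $\lambda$ and $v$ coordinates. Since the saddle point has nonnegative $\lambda$ and $v$ coordinates, this projection is nonexpansive toward $z^\star$, so I may drop it and bound $V_{k+1}\le \norm{z_k+\alpha_k T(z_k,\xi_k)-z^\star}_2^2$. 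Expanding gives the familiar decomposition into a cross term $2\alpha_k\langle T(z_k,\xi_k),\,z_k-z^\star\rangle$ and a quadratic term $\alpha_k^2\norm{T(z_k,\xi_k)}_2^2$.

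First I would exploit the adaptive normalization in \eqref{alpha}: because $\alpha_k=\gamma_k/\norm{T(z_k)}_2$, the quadratic term collapses to exactly $\gamma_k^2$, which is summable by assumption. The crux is then the sign of the cross term, and here I would invoke Assumption~\ref{assume_ssds}: convexity of $f$ and of each $h^i$, together with convexity of $g$ in $x$ and strict concavity of $g$ in $u$. Splitting $\langle T, z-z^\star\rangle$ into its primal (descent in $x$) and dual (ascent in $\lambda,u,v$) blocks and applying the first-order characterizations of convexity and concavity, the cross term is bounded above by the saddle gap $\mathcal{L}(x^\star,\lambda,u,v)-\mathcal{L}(x,\lambda^\star,u^\star,v^\star)$ evaluated along the iterate; by the saddle inequalities at $z^\star$ this gap is nonpositive, and strictly negative whenever $z_k\neq z^\star$ owing to the strict concavity in $u$ (with the assumed $\lambda^\star>0$ keeping the $u$-block active). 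This monotonicity is what drives the contraction.

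Next I would handle the stochasticity by conditioning on the filtration $\mathcal{F}_k$ generated by $\xi_0^{k-1}$. Under uniform sampling of $\xi_k$, the random quantities $g(x_k,u_k,\xi_k)$ and $\partial_x g(x_k,u_k,\xi_k)$ are unbiased (up to the $1/N$ scaling) for their full-batch counterparts, so $\mathbb{E}_{\xi_k}[\langle T(z_k,\xi_k),z_k-z^\star\rangle]$ reproduces the deterministic cross term and inherits its nonpositivity. Assembling the pieces yields a recursion of Robbins--Siegmund type, $\mathbb{E}[V_{k+1}\mid\mathcal{F}_k]\le V_k-b_k+\gamma_k^2$, with $b_k\ge 0$ proportional to $\gamma_k$ times a strictly positive function of $\norm{z_k-z^\star}_2$. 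The supermartingale convergence theorem then gives that $V_k$ converges and $\sum_k b_k<\infty$; combined with $\sum_k\gamma_k=\infty$, the only way $\sum_k b_k$ can stay finite is for $z_k\to z^\star$, first almost surely and then in mean square under the uniform bounds below, after which Jensen's inequality delivers $\mathbb{E}_{\xi_0^k}[x_k]\to x^\star$ and $\mathbb{E}_{\xi_0^k}[u_k]\to u^\star$ as claimed.

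The main obstacle I anticipate is twofold and lives in the interaction between the normalization and the stochastic direction. Because $\alpha_k=\gamma_k/\norm{T(z_k)}_2$ is itself a function of $\xi_k$, the product $\alpha_k\langle T(z_k,\xi_k),z_k-z^\star\rangle$ does not factor cleanly under the conditional expectation, so the unbiasedness argument above must be supported by a uniform two-sided bound $0<c\le\norm{T(z_k)}_2\le C$ along the trajectory (which the boundedness of $V_k$ obtained from the summable quadratic term helps establish) in order to compare $\alpha_k$ with $\gamma_k$. Second, since Assumption~\ref{assume_ssds} provides only strict convexity and concavity rather than strong convexity, I cannot expect a quadratic lower bound $b_k\ge c\,\gamma_k\norm{z_k-z^\star}_2^2$; extracting genuine convergence rather than mere boundedness therefore requires the divergence $\sum_k\gamma_k=\infty$ to force the strictly-negative cross term to accumulate, which is the delicate step that must be argued carefully, for instance by a $\liminf$--then--$\limsup$ argument showing the limit of $V_k$ cannot be positive.
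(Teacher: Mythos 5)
Your overall architecture --- squared distance to the saddle point as a Lyapunov function, nonexpansiveness of the positive projection, the normalized step size collapsing the quadratic term to $\gamma_k^2$, and divergence of $\sum_k\gamma_k$ forcing the accumulated cross term to vanish --- is the same as the paper's. But there are two concrete gaps at exactly the points you gloss over. First, the unweighted Lyapunov function $V_k=\norm{z_k-z^\star}_2^2$ does not work here, because the $u$-update (\ref{sspd3}) ascends the \emph{inner} Lagrangian $\partial_{u^{i}} g - v^{i}\partial_{u^{i}}h^{i}$ with no factor of $\lambda_k$, whereas $\partial_u{\cal L}=\lambda\,(\partial_u g-\sum_i v^i\partial_u h^i)$. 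Consequently the $u$-block of your cross term is $\langle \partial_u g - v\,\partial_u h,\,u-u^\star\rangle$ and it cannot be recombined with the $x$- and $\lambda$-blocks into the saddle gap ${\cal L}(x,\lambda^\star,u^\star,v)-{\cal L}(x^\star,\lambda,u,v^\star)$, which carries a factor $\lambda^\star$ in front of $g(x,u^\star)$. The paper fixes this by using the weighted norm ${\cal N}(\Vert z-z^\star\Vert_2^2)=\Vert x-x^\star\Vert_2^2+\Vert\lambda-\lambda^\star\Vert_2^2+\lambda^\star\Vert u-u^\star\Vert_2^2+\Vert v-v^\star\Vert_2^2$; the hypothesis $\lambda^\star>0$ is precisely what makes this a genuine norm on the $u$-block, not merely a device to ``keep the $u$-block active'' as you describe. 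Without the weight, the telescoping into the Lagrangian fails and the sign of the cross term is not controlled.

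Second, you ``invoke the saddle inequalities at $z^\star$'' as if they were the standard convex--concave minimax fact, but the Lagrangian (\ref{lag}) is \emph{not} jointly concave in $(\lambda,u)$ (it contains the product $\lambda\,g(x,u,\xi)$), so the existence of a saddle point satisfying ${\cal L}(x^\star,\lambda,u,v^\star)\le{\cal L}(x^\star,\lambda^\star,u^\star,v^\star)\le{\cal L}(x,\lambda^\star,u^\star,v)$ is itself a theorem requiring proof; the paper devotes a separate argument (an exchange-of-$\min$/$\max$ computation via strong duality in the inner and outer problems, plus the KKT--saddle equivalence) to establish it. Your proposal assumes the hardest structural ingredient. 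On the remaining points you are closer to the mark: your worry about $\alpha_k=\gamma_k/\norm{T(z_k)}_2$ depending on $\xi_k$ is legitimate, though the paper sidesteps unbiasedness entirely by showing the integrand $\alpha_k T_k^\top{\cal N}(z_{k+1}-z^\star)$ is nonnegative pointwise for every realization of $\xi_k$ (the saddle inequality holds per sample), so only the sign, not the mean, of the cross term is needed; and your observation that strict (not strong) convexity/concavity yields only a $\liminf$-type conclusion matches the step where the paper argues the limit of the normalized cross term must vanish and then reads off $x_k\to x^\star$, $u_k\to u^\star$ in expectation from strictness. The paper also proves convergence of expectations directly rather than routing through almost-sure convergence and Robbins--Siegmund as you propose.
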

\begin{proof}
The proof of the theorem is provided in the supplementary material.

\end{proof}
The basic idea behind the convergence proof relies on proving the existence of saddle point $(x^\star, \lambda^\star, v^\star, u^\star)$ for the Lagrangian function, $\cal L$ 
\begin{eqnarray}
{\cal L}(x,\lambda,u,v):=f(x)+\lambda \\\left(g(x,u,\xi)-\sum_{i=1}^N v^i h^i(u^i)\right)
\end{eqnarray}
satisfying the following inequalities for all realization of the random variable $\xi$

${\cal L}(x^\star, \lambda, v^\star, u,\xi)\leq {\cal L}(x^\star, \lambda^\star, v^\star, u^\star,\xi)\leq {\cal L}(x, \lambda^\star, v, u^\star,\xi)$

It is important to emphasize that although the function $f$, $g$, and $h$ are assumed to be convex with respect to $x$ and concave with respect to $u$ (Assumption \ref{assume_ssds}), the Lagrangian function, ${\cal L}$, is not jointly concave  with respect to $(u,\lambda)$. The lack of joint concavity makes the saddle point proof nontrivial and different from the saddle point problem for a general min-max optimization problem~(\citet{lin2019gradient}). The existence of saddle point for min-max optimization problem under convexity-concavity assumption is a standard result in convex optimization and is at the heart of the convergence proof of the stochastic gradient descent-ascent algorithm.

\begin{remark}
Although the stability is not shown in this paper, we observe in practice that the dynamics without $\lambda$ in the $v$-update (Eq.\ref{sspd4}) works for both active and inactive constraints (whether $\lambda^\star$ is positive or zero) of the ${\cal RO}$ problem and converges to the KKT point. Therefore, $\lambda$ can be removed from $v$-update in practice for solving the ${\cal RO}$ problem.
\end{remark}

\begin{algorithm}[ht]
\caption{Mini-batch SSDS algorithm}
\begin{algorithmic}[1]\label{ssds}
     \STATE \textbf{Input}: $\varepsilon$, \textit{lr}, $p$, $C_1$, $C_2$
     \STATE \textbf{Initialization}: $\lambda_0$, $\alpha_0$,$w_0$, $t_0$,$u_0$, $v_0$ 
    \FOR{$k\in\{1,...,K\}$}
    \STATE distribute mini-batches as $m=\{m_0,m_1,..,m_n\}$
    \STATE $w_k^{m_0}=w_k$
    \STATE $\lambda_k^{m_0}=\lambda_k$
        \FOR{$m_j\in\{m_0,m_1,...,m_n\}$} 
        \STATE $ \partial_{w_k} = \partial_{w_k}\sum_{j \in m_j}  L(I^{j}+u_k^{j},y^{j},w_k^{j})$
        \STATE $w_k^{m_{j+1}}=w_k^{m_{j}}-lr  \;\lambda_k(\partial_{w_k})$ 
        \ENDFOR
    \STATE $t_{k+1}=t_k+\alpha_k(\lambda_k-1)$
    \STATE $v_{k+1}^{m_j}=v_k^{m_j}+\alpha_k(\Vert u_k^{m_j}\Vert_\infty-\varepsilon)$
    \STATE $\partial_{u_k} = \partial_{u_k} L(I^{m_j}+u_k^{m_j},y^{m_j},w_k^{m_j})$
    \STATE $u_{k+1}^{m_j}=u_{k}^{m_j}+\alpha_k(\partial_{u_k}-C_1 \; v_k^{m_j}\sgn(u_k^{m_j}))$
    \FOR {$j \in m_j$}
        \STATE $B^{j} =  (\Vert u_k^{j}\Vert_\infty-\varepsilon)$
        \STATE $U^{j} = \big(L(I^{j}+u_k^{j},y^{j},w_k^{j})-v_k^{j}\big)B^{j}$
    \ENDFOR
    
    \STATE $U = \sum_{j \in m_j} U^{(j)}$
    \STATE $\lambda_{k+1}^{m_j}=\lambda_{k}^{m_j}+C_2 \;
    \alpha_k \Big(U -t_k\Big)$
    \STATE $w_k=w_k^{m_n}$
    \STATE $\lambda_k=\lambda_{k+1}^{m_n}$
    \STATE $\alpha_{k+1}=\alpha_k e^{-k\textit{p}}$
    \ENDFOR
\end{algorithmic}
\end{algorithm}

\subsection {Mini-batch Implementation of SSDS Algorithm}\label{ssdsmini}

In an attempt to use the proposed approach for robust training of DNNs, we propose a mini-batch variant of the SSDS algorithm to achieve a more stable convergence. As stated above, the SSDS algorithm involves the decision variable $x:=(t,w)$, where the set $\{w\}$ is the parameters of DNN. For simplicity of implementation, we first separate the update rule for $x$, described in Eq.~\ref{sspd1}. In other words, we split the updates of $w$ and $t$ that also enables us to use standard learning rates (denoted by \textit{lr}) for the $w$ updates. For the updates of $t$ and other SSDS variables, such as $\lambda$, $u$ and $v$, we use a diminishing step-size $\alpha_k$. However, we refrain from applying the diminishing step-size described in (Eq.~\ref{alpha}) due to the sheer complexity involved in taking the norm of the parameters for a large-scale neural network. Instead, we use an exponentially decaying diminishing step-size, $\alpha_{k+1}=\alpha_k e^{-k\textit{p}}$, where $p$ is the decay rate for exponentially diminishing step-size and $k$ is the epoch number. Note that the updates of $u$ and $\lambda$ can experience scaling issues depending on the values of the gradients and variable $v_k$. Therefore, we add two scaling factors $C_1$ and $C_2$ in the update rules of $u$ and $\lambda$ to bring different terms of the update laws to the same scale. For a given data set and model architecture, we find appropriate values by a few trial and error steps. Due to the separation of the updates, another small departure in our implementation from the prescribed algorithm is - while we perform $w$ updates for every mini-batch ($m_j$ refers to the $j^{th}$ mini-batch of $k^{th}$ epoch), we update other parameters once every epoch. Also, in the original formulation, we continuously update $u$ corresponding to all the images while the $w$ is updated using the gradient information of the loss function evaluated at randomly selected images. In the algorithm implementation, however, $u$ is also updated only corresponding to the randomly selected images based on which the network weights are updated. This approach helps in reducing computation for large training sets. 

Based on the above setup, we present the mini-batch SSDS algorithm (Algorithm~\ref{ssds}). We can also craft attacks using this. To do that, we run iterative updates of the perturbations $u$ given a test sample along with its corresponding Lagrangian multiplier $v$, keeping the model $\{w^\star\}$ and $\lambda^*$ fixed. We discuss the attack algorithm in the supplementary material. 

\subsection{Stochastic Gradient Descent Ascent (SGDA)}\label{sec3:SGDA}
In order to study the effectiveness of the Lagrangian formulation to solve the robust optimization problem, we consider the baseline SGDA algorithm~(\citet{lin2019gradient}), which is essentially a simplification of the proposed SSDS algorithm as discussed below. We provide a mini-batch variant of the SGDA algorithm in Algorithm~\ref{SGDA} to make it applicable in the deep learning setting. Comparing these two algorithms, we can readily see that algorithm~\ref{SGDA} is a special case of algorithm~\ref{ssds} where the Lagrangian multipliers ($\lambda$ and $v$) are removed. Note, an important analytical advantage of the robust optimization-based formulation involving the Lagrangian function (Eq.~\ref{lag}) is that it allows us to incorporate general convex budget constraints (Eq.~\ref{budget_constraints}) on the adversary without involving complicated projection operations. In particular, with the Lagrangian multipliers $\lambda$ and $v^{i}$, we can ensure that the budget constraints are met asymptotically. It is important to emphasize that the use of projection operator to impose general convex budget constraints involves solving an optimization problem in itself and hence it could considerably increase the computational time. However, when the budget constraints are simple such as $\ell_\infty$ norm constraints, the projection operation is straight forward. For such cases, gradient descent ascent algorithm can be utilized for adversarial training without involving the multipliers.

\begin{algorithm}
\caption{Mini-batch SGDA algorithm}
\begin{algorithmic}[1]\label{SGDA}
     \STATE \textbf{Input}: $\varepsilon$, \textit{lr}, $p$
     \STATE \textbf{Initialization}: $\alpha_0$,$w_0$,$u_0$
    \FOR{$k\in\{1,...,K\}$}
    \STATE distribute mini-batches as $m=\{m_0,m_1,..,m_n\}$
    \STATE $w_k^{(m_0)}=w_k$
        \FOR{$m_j\in\{m_0,m_1,...,m_n\}$} 
        \STATE $ \partial_{w_k} = \partial_{w_k}\sum_{j \in m_j}  L(I^{j}+u_k^{j},y^{j},w_k^{j})$
        \STATE $w_k^{(m_{j+1})}=w_k^{(m_{j})}-lr  \;(\partial_{w_k})$ 
        \ENDFOR
    \STATE $u_{k+1}^{m_j}=u_{k}^{m_j}+\alpha_k(\partial_{u_k} L(I^{m_j}+u_k^{m_j},y^{m_j},w_k^{m_j})$
    \STATE $w_k=w_k^{m_n}$
    \STATE $\alpha_{k+1}=\alpha_k e^{-k\textit{p}}$
    \ENDFOR
\end{algorithmic}
\end{algorithm}

\begin{figure*} [ht]
\centering
    \begin{subfigure}{0.49\textwidth}
    \centering
        \includegraphics[width=0.6\linewidth]{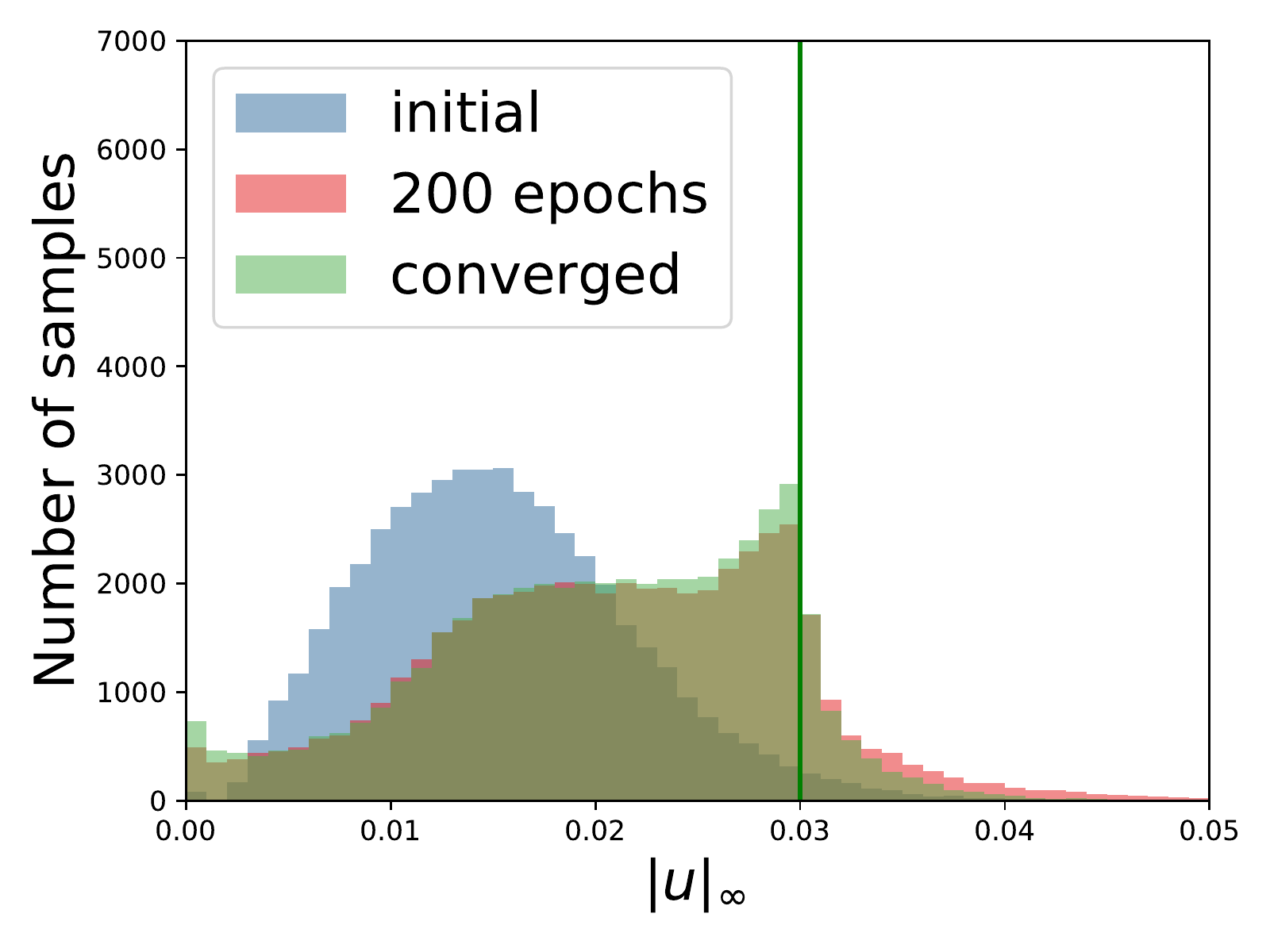}
        \caption{}
        \label{fig:ssdshist}
    \end{subfigure}
        \begin{subfigure}{0.49\textwidth}
    \centering
        \includegraphics[width=\linewidth,trim={0in, 1.5in, 0in, 1.5in}, clip]{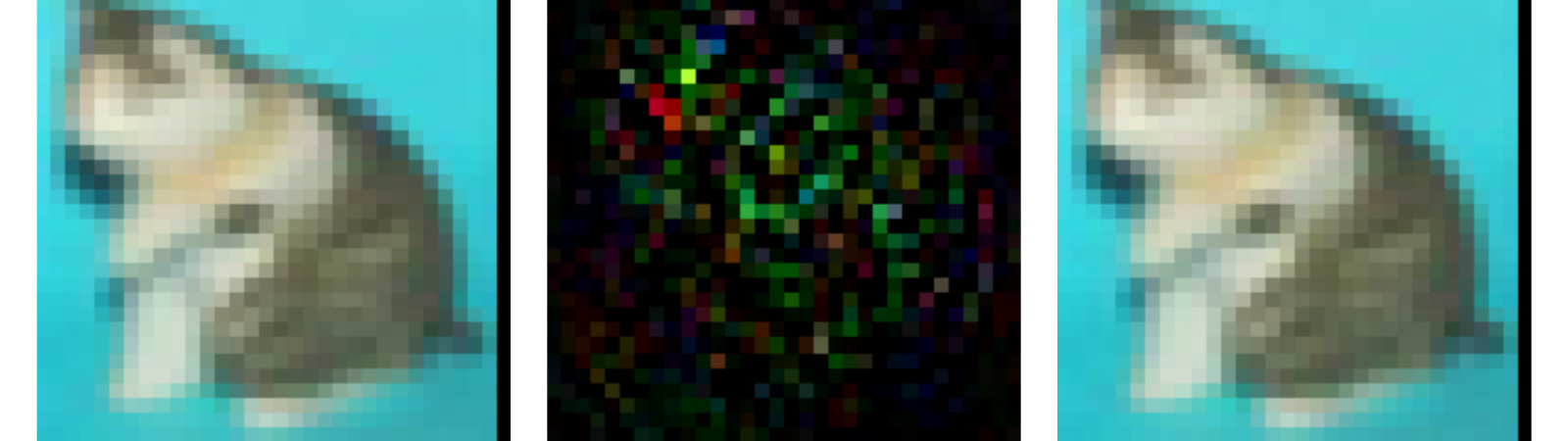}
        \caption{}
        \label{fig:ssdsimg}
    \end{subfigure}\\
    \hfill\begin{subfigure}{0.33\textwidth}
        \includegraphics[width=0.8\linewidth]{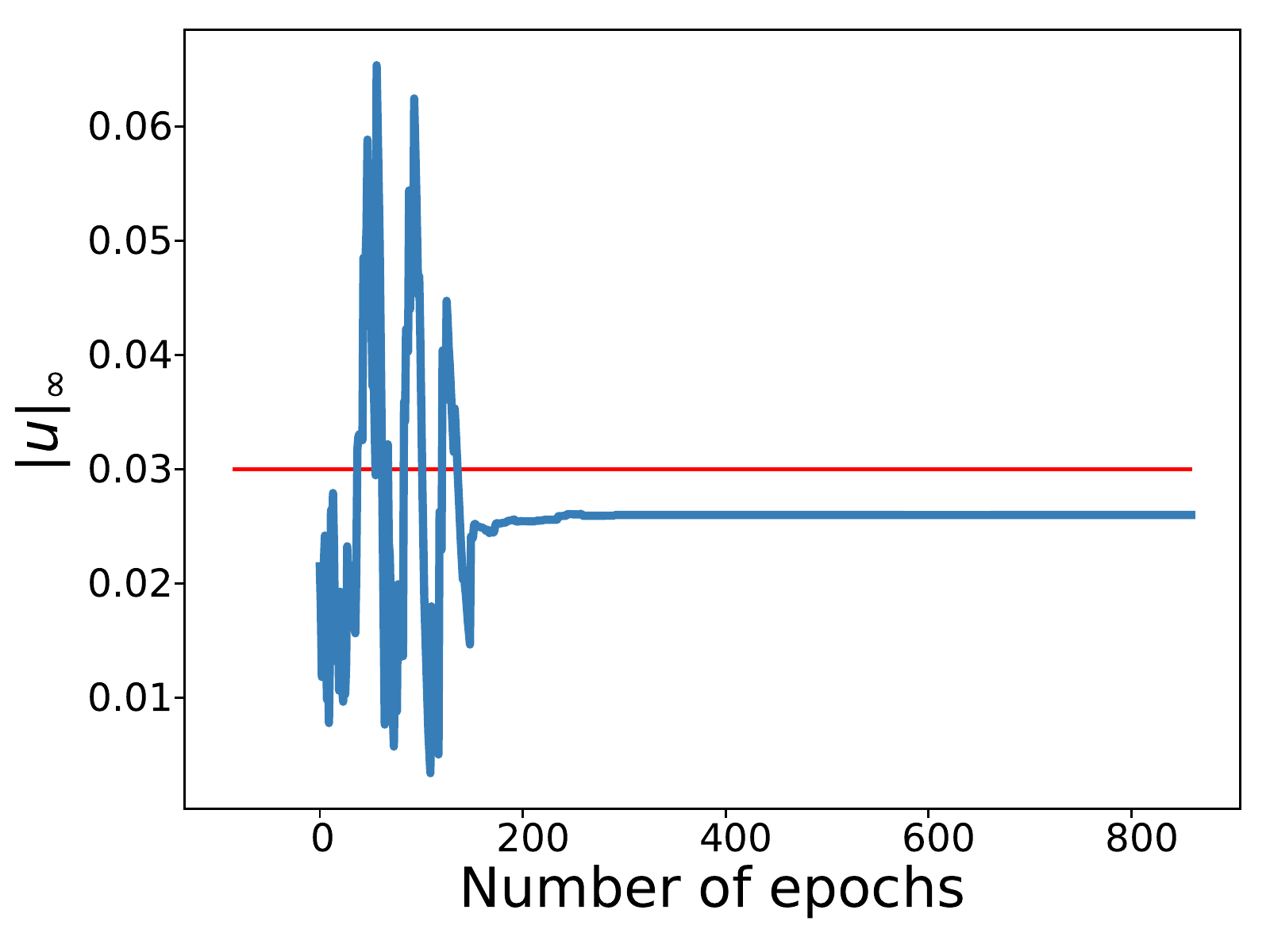}
        \caption{}
        \label{fig:uplot}
    \end{subfigure}\hfill
    \begin{subfigure}{0.33\textwidth}
        \includegraphics[width=0.8\linewidth]{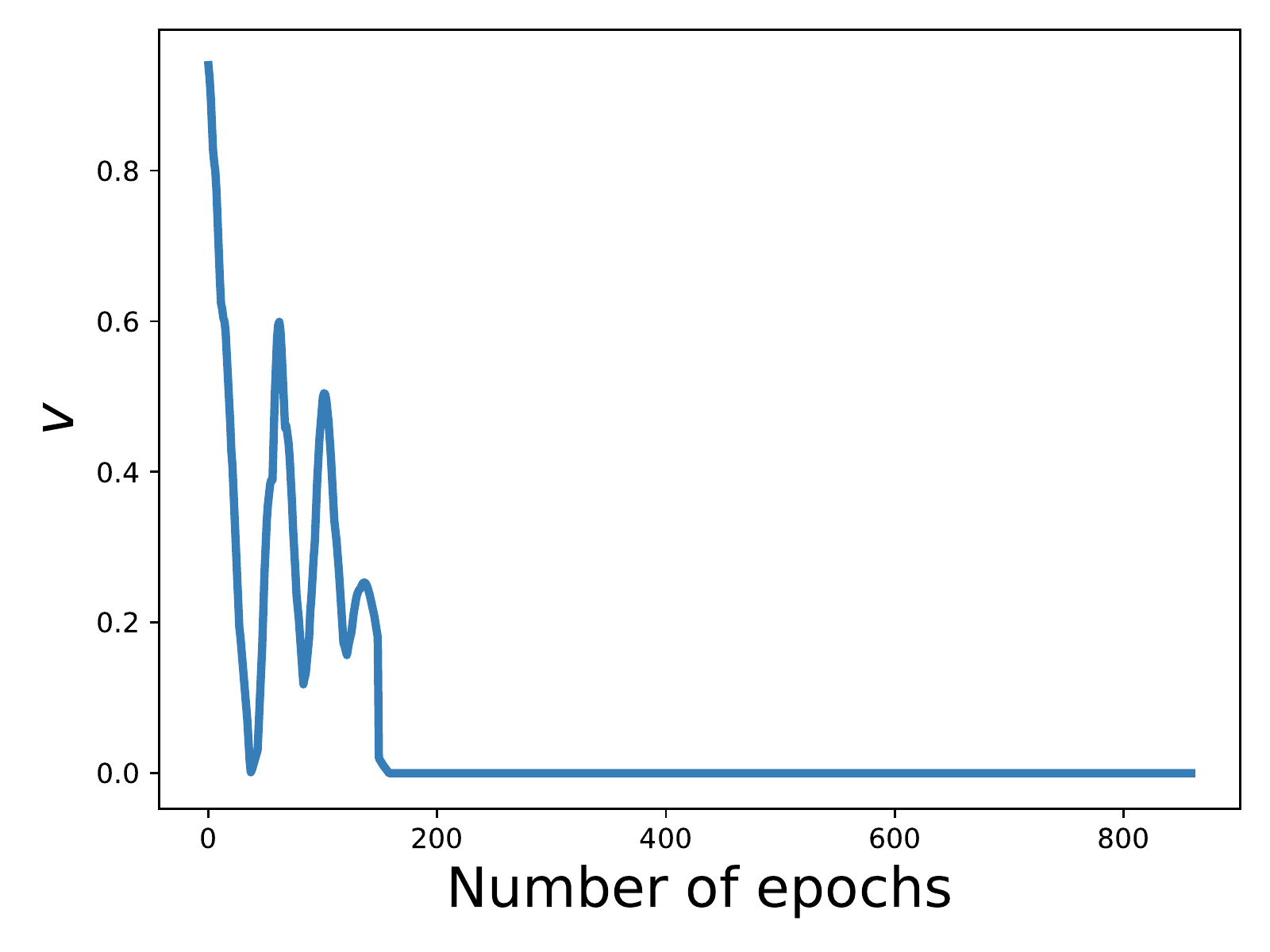}
        \caption{}
        \label{fig:vplot}
    \end{subfigure}\hfill
    \begin{subfigure}{0.33\textwidth}
        \includegraphics[width=0.8\linewidth]{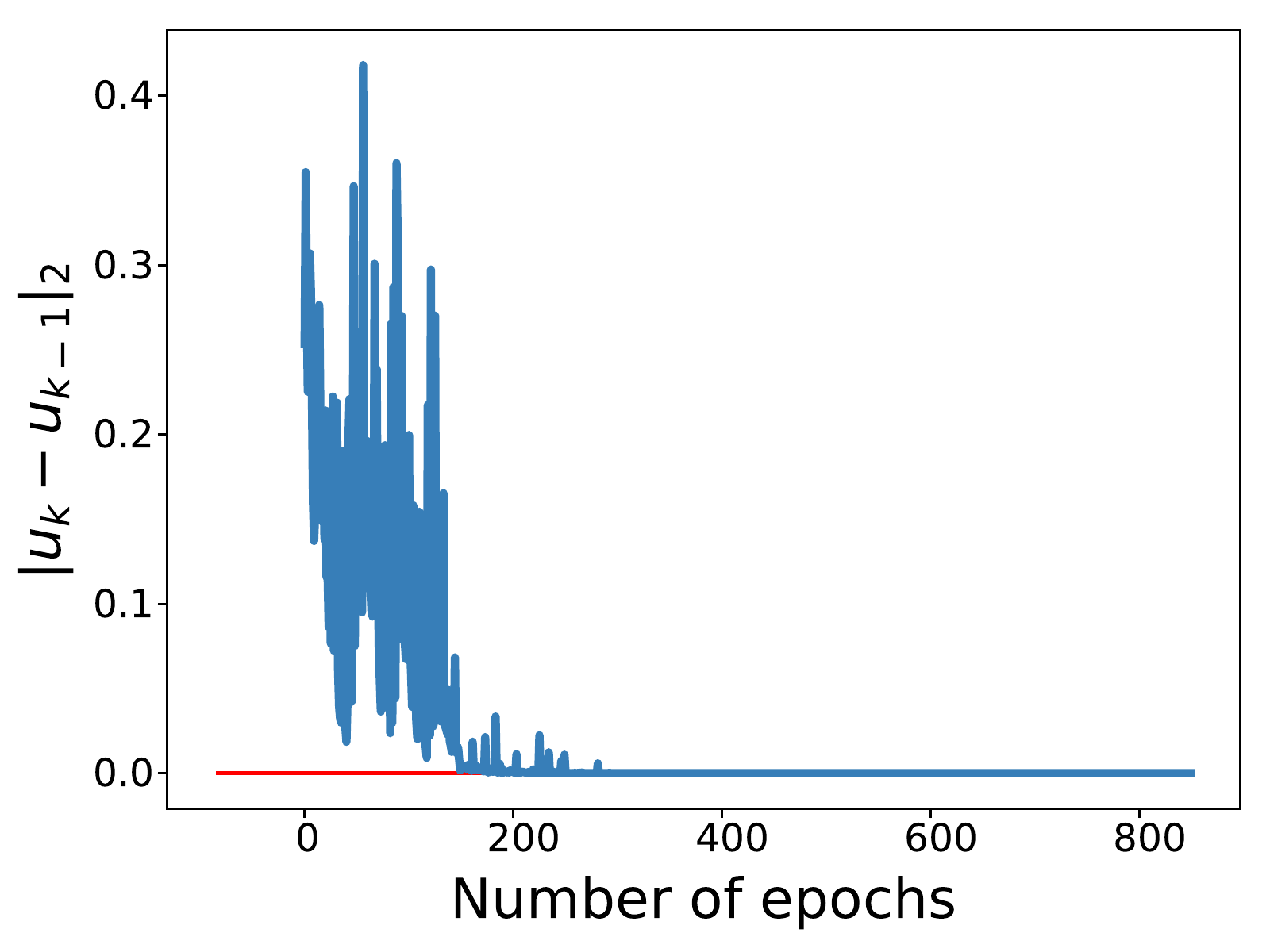}
        \caption{}
        \label{fig:difplot}
    \end{subfigure}
    \caption{\textit{Results on CIFAR-10 dataset using VGG19 model}: (a) $\Vert u\Vert_\infty$ histogram (b) Randomly chosen image for SSDS training (left), with its corruption(center) and the corrupted image(right) (c) Evolution of $\Vert u \Vert_\infty$ for the above image  (d) values of $v$ for the above image  (e) Evolution of $\Vert u_{k}-u_{k-1} \Vert_2$ for the above image}\label{ssdsvis}
\end{figure*}

\section{Experimental Results} \label{sec5}
In this section, we analyze the performance of SSDS algorithm empirically. We compare the effectiveness of our algorithms with the baseline SGDA algorithm and other state-of-the-art methods (e.g., PGD, TRADES, Free-m, YOPO,etc).

\textbf{Experiment Setup:} We present the empirical studies on the CIFAR-10, CIFAR-100 and SVHN datasets. We use VGG19~\cite{VGG19}, ResNet50~\cite{he2015deep}, and WideResNet~\cite{wideresnets} for classification.
The perturbation size is $\varepsilon=0.03$, and the exponential step-size decay parameter $p$ is set to be $0.001$. We initialize $\lambda=4,\;\nu=1.\;C_1=C_2=0.01,\;t=1$ for SSDS training. The perturbation step-size is set to be $\eta=2$ for SGDA and SSDS training. For the purpose of consistency with other research work in this area, we use $\eta=0.007$ ($\frac{2}{255}$) for crafting PGD and CW adversaries.
The code is available at our GitHub repository: \href{https://github.com/yasesf93/SSDS}{https://github.com/yasesf93/SSDS}.

\subsection{SSDS Convergence Characteristics}
We begin the discussion on SSDS algorithm characteristics by observing the behavior of the perturbations involved in SSDS training. We plot the histogram of $\ell_{\infty}$-norms of final perturbations ($\textit{u}$) added to the training images for a few epochs during the training process (see Fig.~\ref{fig:ssdshist}). This is to verify the theoretical claim that the final perturbations for the training images should converge at or below the budget. From the empirical results shown in Fig.~\ref{fig:ssdshist}, we make the observation that although perturbations for some images spill over the threshold value ($\varepsilon= 0.03$) during the course of the training process, most of the perturbations converge within the bound eventually (interestingly, a large number of the perturbations settle below the threshold). Next, we focus on a specific training sample to understand how the dynamics for different variables in the algorithm evolve during the course of the training process. We observe the dynamics of $v$ over the training epochs. In this experiment, $v$ was initialized at $1$ (for all of the images in the training set), and it converges to $0$ after around epoch $180$. Similarly, as shown in Fig.~\ref{fig:uplot}, $\ell_{\infty}$-norm of the perturbation generated for the chosen image converges to $0.026$ which is below the specified budget. However, the actual perturbations for individual pixels still continue to evolve even after the $\ell_{\infty}$-norm for the entire perturbation matrix settles down. To monitor the perturbations for the individual pixels, we plot $\ell_2$ norm of the difference between the perturbations for two consecutive epochs. We see that this metric finally converges to $0$ around epoch $300$. At this point, the overall training process also converges except for small changes due to the stochastic nature of the training algorithm. 

\begin{remark}
Analytically and experimentally, we show that the final perturbation should remain within the budget after convergence. However, for the purpose of consistency with other algorithms we compare against, we apply the projection term to the perturbations to ensure the perturbations never exceed the budget and we call this variant SSDS-p.
\end{remark} 

\begin{algorithm}
\caption{Mini-batch SSDS-p algorithm}
\begin{algorithmic}[1]\label{ssdsp}
     \STATE \textbf{Input}: $\varepsilon$, \textit{lr}, $p$, $C_1$, $C_2$
     \STATE \textbf{Initialization}: $\lambda_0$, $\alpha_0$,$w_0$, $t_0$,$u_0$, $v_0$ 
    \FOR{$k\in\{1,...,K\}$}
    \STATE distribute mini-batches as $m=\{m_0,m_1,..,m_n\}$
    \STATE $w_k^{m_0}=w_k$
    \STATE $\lambda_k^{m_0}=\lambda_k$
        \FOR{$m_j\in\{m_0,m_1,...,m_n\}$} 
        \STATE $ \partial_{w_k} = \partial_{w_k}\sum_{j \in m_j}  L(I^{j}+u_k^{j},y^{j},w_k^{j})$
        \STATE $w_k^{m_{j+1}}=w_k^{m_{j}}-lr  \;\lambda_k(\partial_{w_k})$ 
        \ENDFOR
    \STATE $t_{k+1}=t_k+\alpha_k(\lambda_k-1)$
    \STATE $v_{k+1}^{m_j}=v_k^{m_j}+\alpha_k(\Vert u_k^{m_j}\Vert_\infty-\varepsilon)$
    \STATE $\partial_{u_k} = \partial_{u_k} L(I^{m_j}+u_k^{m_j},y^{m_j},w_k^{m_j})$
    \STATE $u_{k+1}^{m_j}=u_{k}^{m_j}+\alpha_k(\partial_{u_k}-C_1 \; v_k^{m_j}\sgn(u_k^{m_j}))$
    \FOR {$j \in m_j$}
        \STATE $B^{j} =  (\Vert u_k^{j}\Vert_\infty-\varepsilon)$
        \STATE $U^{j} = \big(L(I^{j}+u_k^{j},y^{j},w_k^{j})-v_k^{j}\big)B^{j}$
    \ENDFOR
    
    \STATE $U = \sum_{j \in m_j} U^{(j)}$
    \STATE $\lambda_{k+1}^{m_j}=\lambda_{k}^{m_j}+C_2 \;
    \alpha_k \Big(U -t_k\Big)$
    \STATE $\lambda_{k+1}^{m_j} \leftarrow \Pi(\lambda_{k+1}^{m_j})$ \; ($\Pi$ is the projection operator)
    \STATE $w_k=w_k^{m_n}$
    \STATE $\lambda_k=\lambda_{k+1}^{m_n}$
    \STATE $\alpha_{k+1}=\alpha_k e^{-k\textit{p}}$
    \ENDFOR
\end{algorithmic}
\end{algorithm}

\subsection{SSDS vs SSDS-p Convergence Results}
As discussed in the main paper, we can show theoretically and experimentally that although the perturbations might spill over the threshold value ($\varepsilon=0.03$) for some epochs, they will converge at or below the threshold after the convergence. However, to be consistent with other research works in this area, we apply a projection term to ensure that the perturbations always remain within the budge during the training process which results in the SSDS-p which is presented in Algorithm~\ref{ssdsp}.

As Fig~\ref{ssdsptrfigs} shows, by applying SSDS-p training the $\ell_\infty$ norm of the perturbations ($\Vert u\Vert_\infty$) always stays withing the boundary (Fig~\ref{fig:ssdsphist}) whereas without the projection (Fig~\ref{fig:ssdshist2}) the perturbations may spill over the boundary.

\begin{figure*}[ht]
    \centering
    \begin{subfigure}{0.49\textwidth}
        \includegraphics[width=0.99\linewidth]{Figures/infdelnormhist.pdf}
        \caption{\label{fig:ssdshist2}}
    \end{subfigure}
    \begin{subfigure}{0.49\textwidth}
        \includegraphics[width=0.99\linewidth]{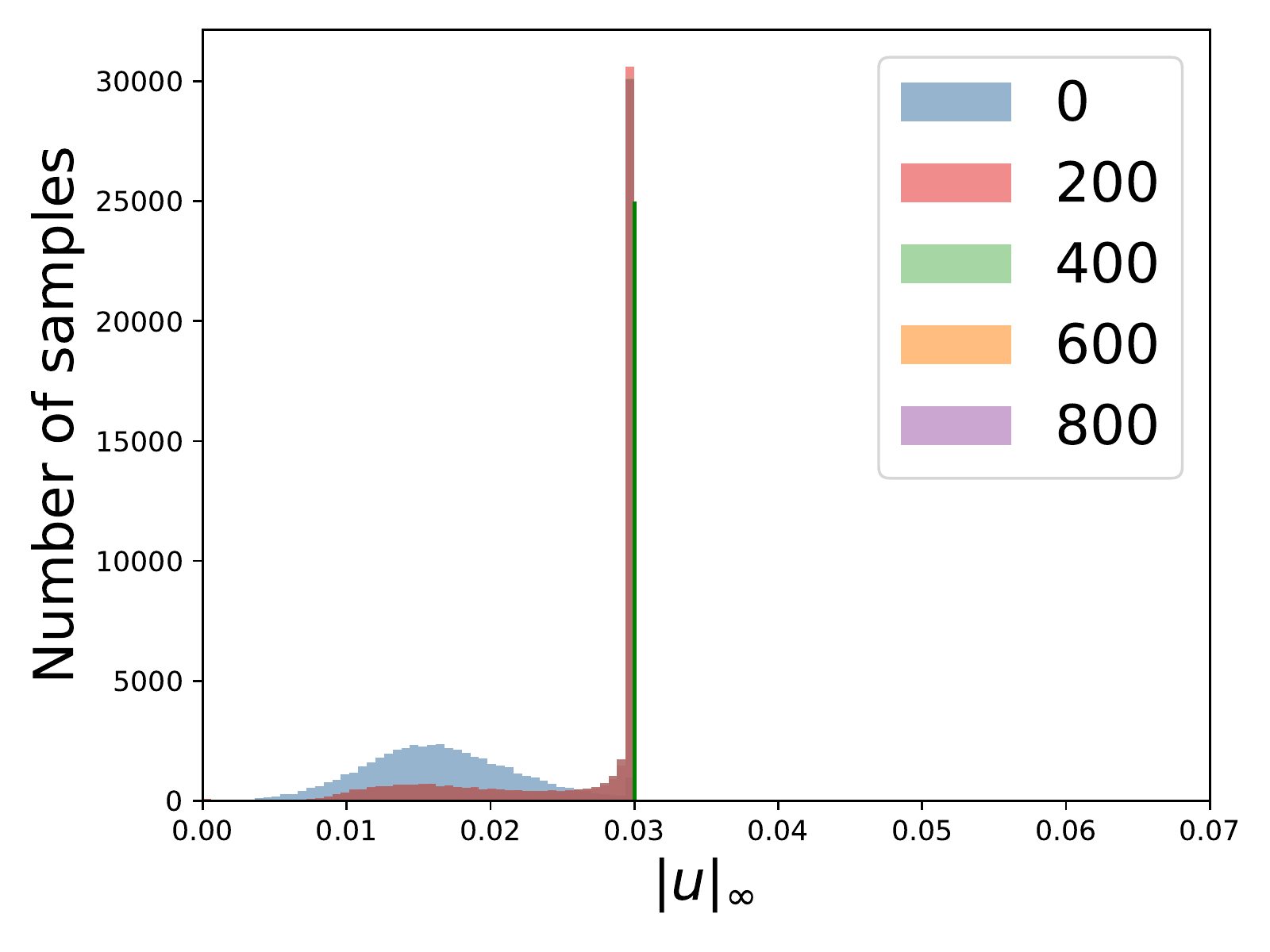}
        \caption{\label{fig:ssdsphist}}
    \end{subfigure}
    \caption{\textit{CIFAR-10 dataset trained using SSDS vs SSDS-p algorithm, VGG19 model architecture}: (a) Histogram of $\Vert u\Vert_\infty$ for SSDS (b) Histogram of $\Vert u\Vert_\infty$ for SSDS-p }\label{ssdsptrfigs}
\end{figure*}

\subsection{SSDS-p performance comparison with SGDA as defense methods}
Unlike the computationally expensive techniques such as PGD training, both SGDA and SSDS-p compute the optimal perturbations/attacks cumulatively over the training epochs. Therefore, we begin this comparison between SGDA and SSDS-p algorithms by comparing the training schemes associated with each algorithm. As figures ~\ref{fig:twoacc} and \ref{fig:twoloss} show, the training scheme is very similar for SGDA and SSDS-p. However, the perturbation evolution is quite different. Fig~\ref{deltafigs} shows the $\ell_\infty$ norm of the perturbations added to a randomly selected images. In the case of SSDS-p training, the perturbation evolves gradually through the training epochs. On the other hand, for the SGDA method, the perturbation rapidly reaches the budget, and the method does not seem to explore the perturbations with lower magnitudes as much as SSDS-p. We note that this is a typical observation for these two algorithms. 
Based on the results in Table~\ref{wboxcomp}, SSDS-p algorithm performs slightly better for both clean and adversarially perturbed (across all attack algorithms considered here) input data and the marginal improvement can be attributed to better exploration of the perturbation space by the SSDS-p algorithm. 

\begin{table} [ht]
    \caption{SGDA and SSDS-p defense model (WideResNet) performance comparison under white-box attacks}
    \begin{subtable}{0.49\linewidth}
        \centering
        \resizebox{0.5\textwidth}{!}{
        \begin{tabular}{c|c}
            Attack & Accuracy\\
            \hline
            Clean & 81.97\% \\
            FGSM & 80.4\% \\
            PGD & 44.32\% \\
            SGDA & 48.58\% \\
            SSDS-p & 51.64\% \\
        \end{tabular}
        }
      \caption{SGDA training}
      \label{tab:SGDA}
    \end{subtable}
    \hfill
    \begin{subtable}{0.49\linewidth}
        \centering
        \resizebox{0.5\textwidth}{!}{
      \begin{tabular}{c|c}
            Attack & Accuracy\\
            \hline
            Clean & 82.91 \% \\
            FGSM & 81.21 \% \\
            PGD & 45.89\% \\
            SGDA & 49.18 \% \\
            SSDS-p & 53.53 \% \\
        \end{tabular}
        }
        \caption{SSDS-p training}
        \label{tab:ssds}
     \end{subtable}
    \label{wboxcomp}
\end{table}

We then visualize the attack on randomly selected images. As Fig~\ref{viscomp} suggests, the two algorithms seem to generate similar attacks although the SSDS-p attacks (Fig~\ref{fig:imgssdsp}) looks more gradient based. As the Lagrangian multiplier corresponding to generating the attacks ($v$) controls the gradients in each epoch, the gradients play a more significant part in generation the attacks.

\begin{figure}[ht]
    \centering
    \begin{subfigure}{0.49\textwidth}
    \includegraphics[width=0.99\linewidth,trim={0in 0.05in 0in 0in},clip]{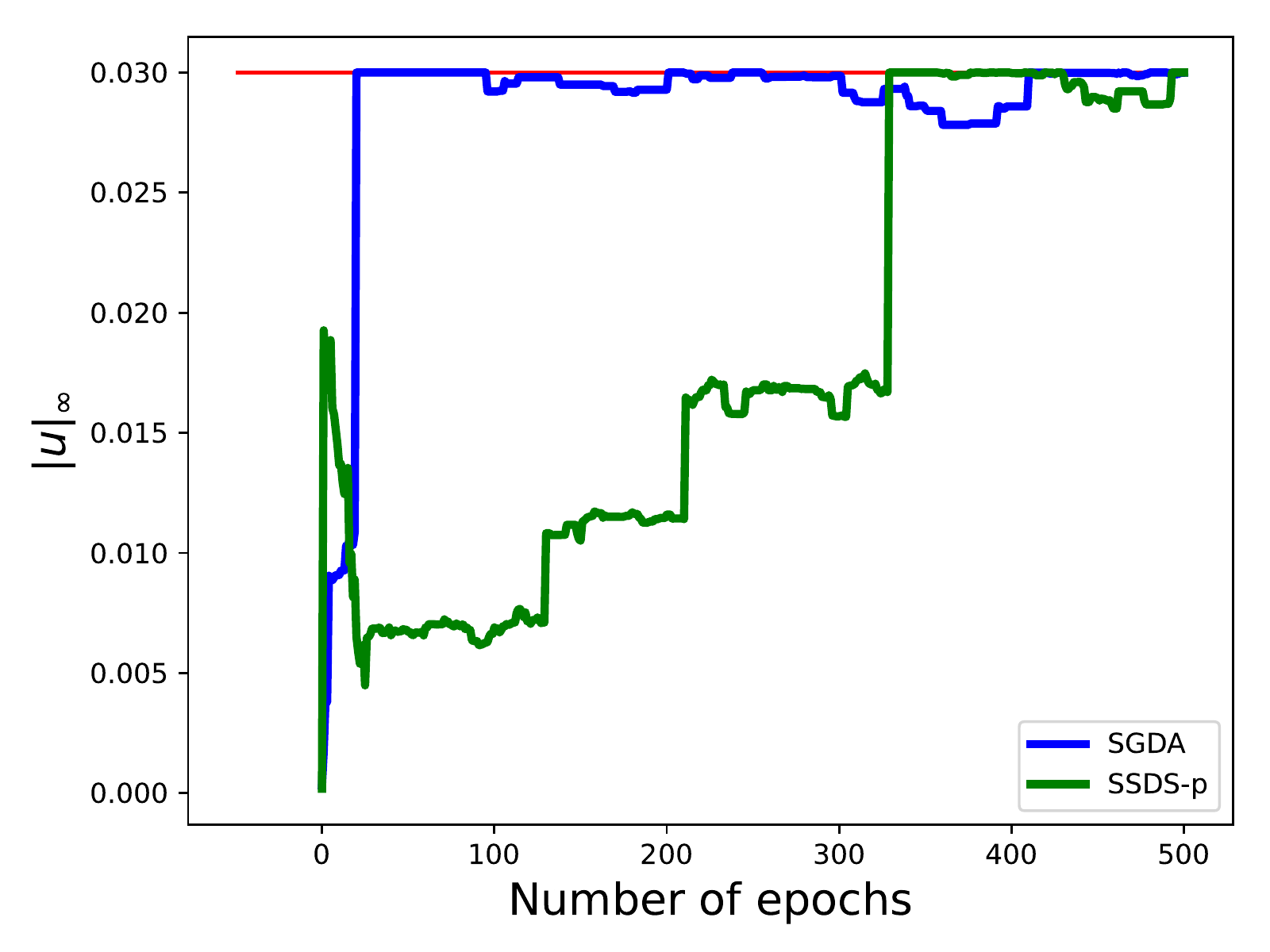}
    \caption{}
    \end{subfigure}
    \begin{subfigure}{0.49\textwidth}
        \includegraphics[width=0.99\linewidth]{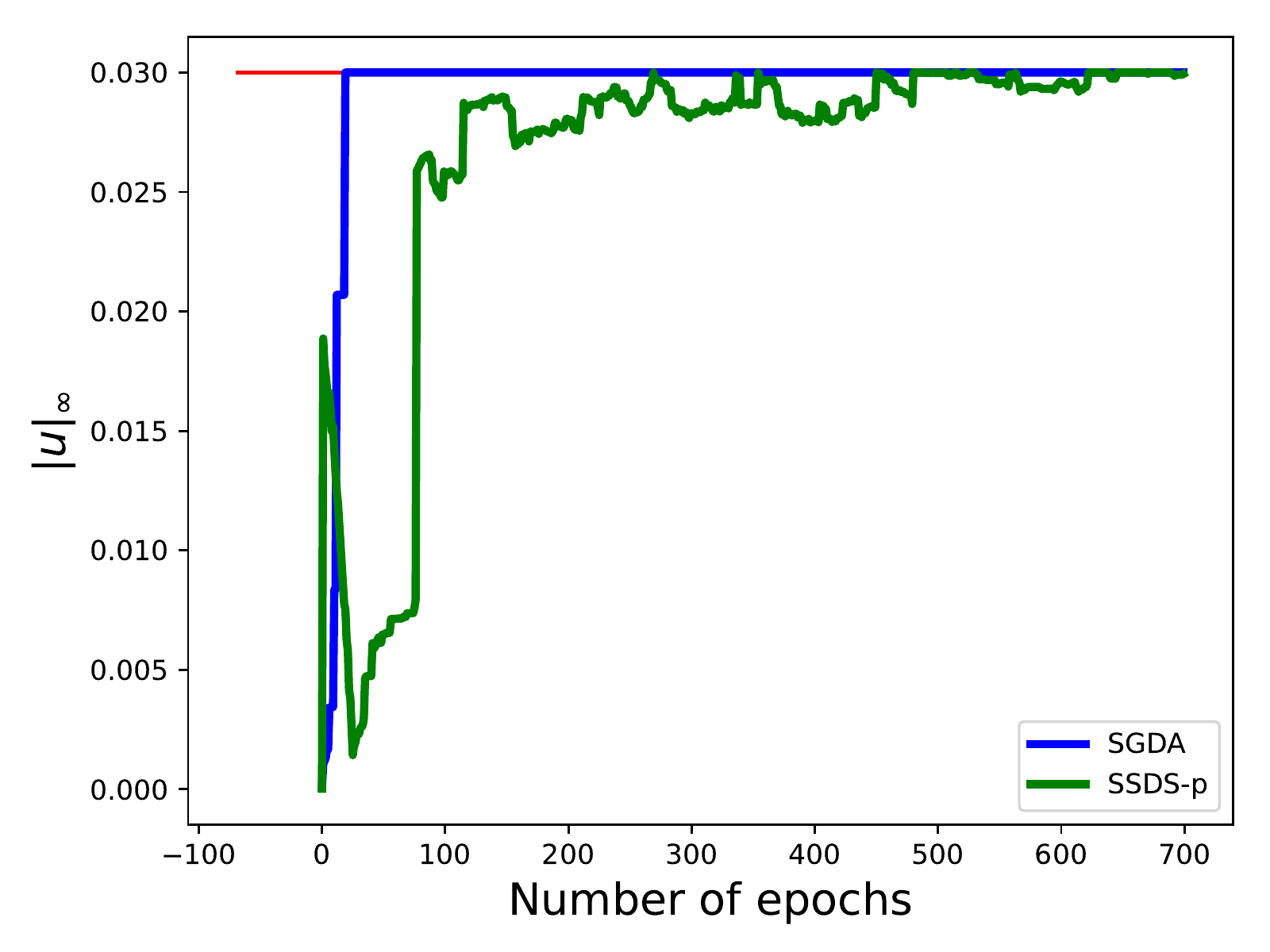}
        \caption{\label{fig:twodelta}}
    \end{subfigure}
    \caption{Evolution of perturbation/attack on two randomly selected images under SGDA and SSDS-p training}
    \label{deltafigs}
\end{figure}

\begin{figure*}[ht]
    \centering
    \begin{subfigure}{0.49\textwidth}
        \includegraphics[width=0.99\linewidth]{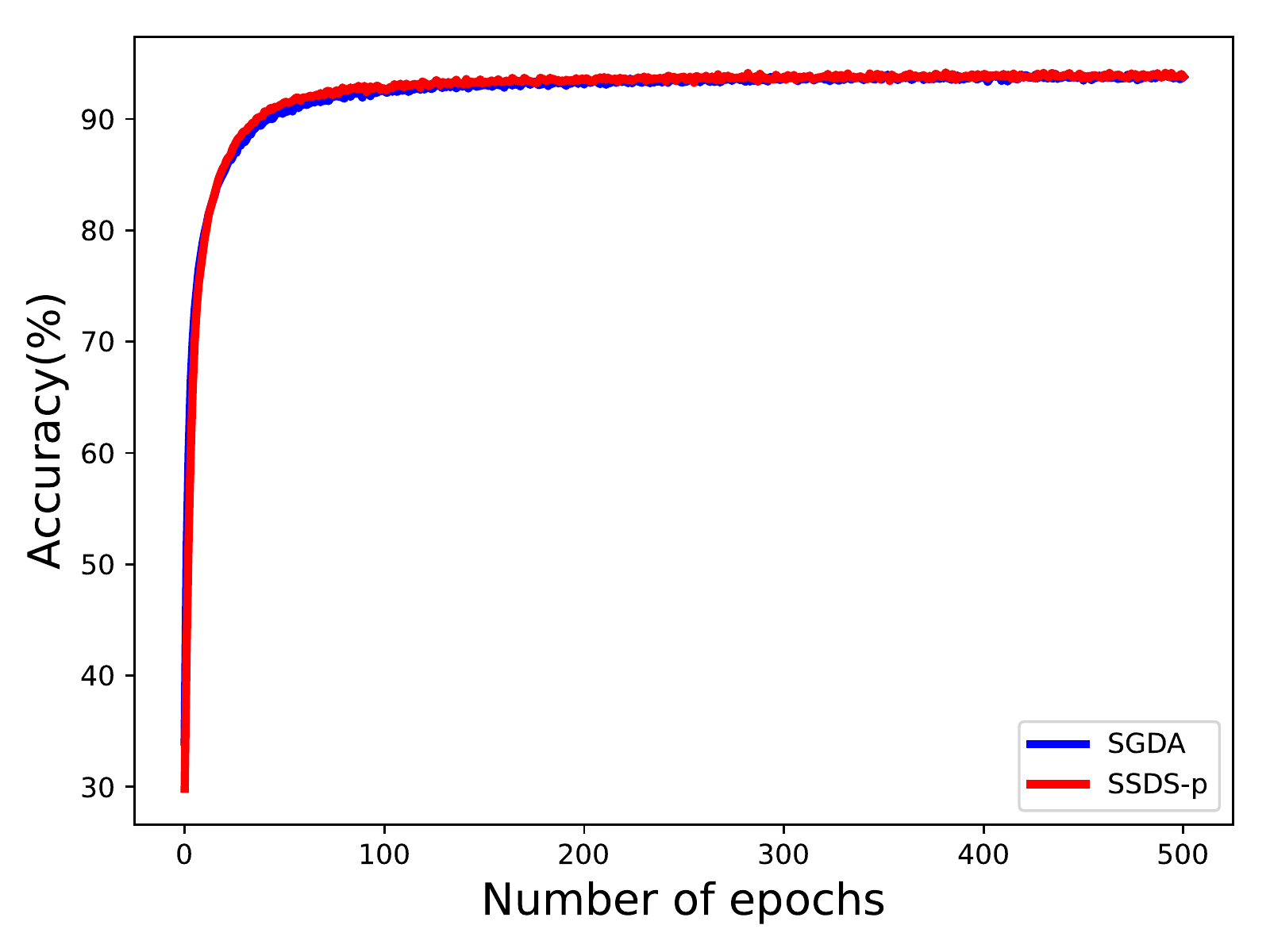}
        \caption{}
        \label{fig:twoacc}
    \end{subfigure}
    \begin{subfigure}{0.49\textwidth}
        \includegraphics[width=0.99\linewidth]{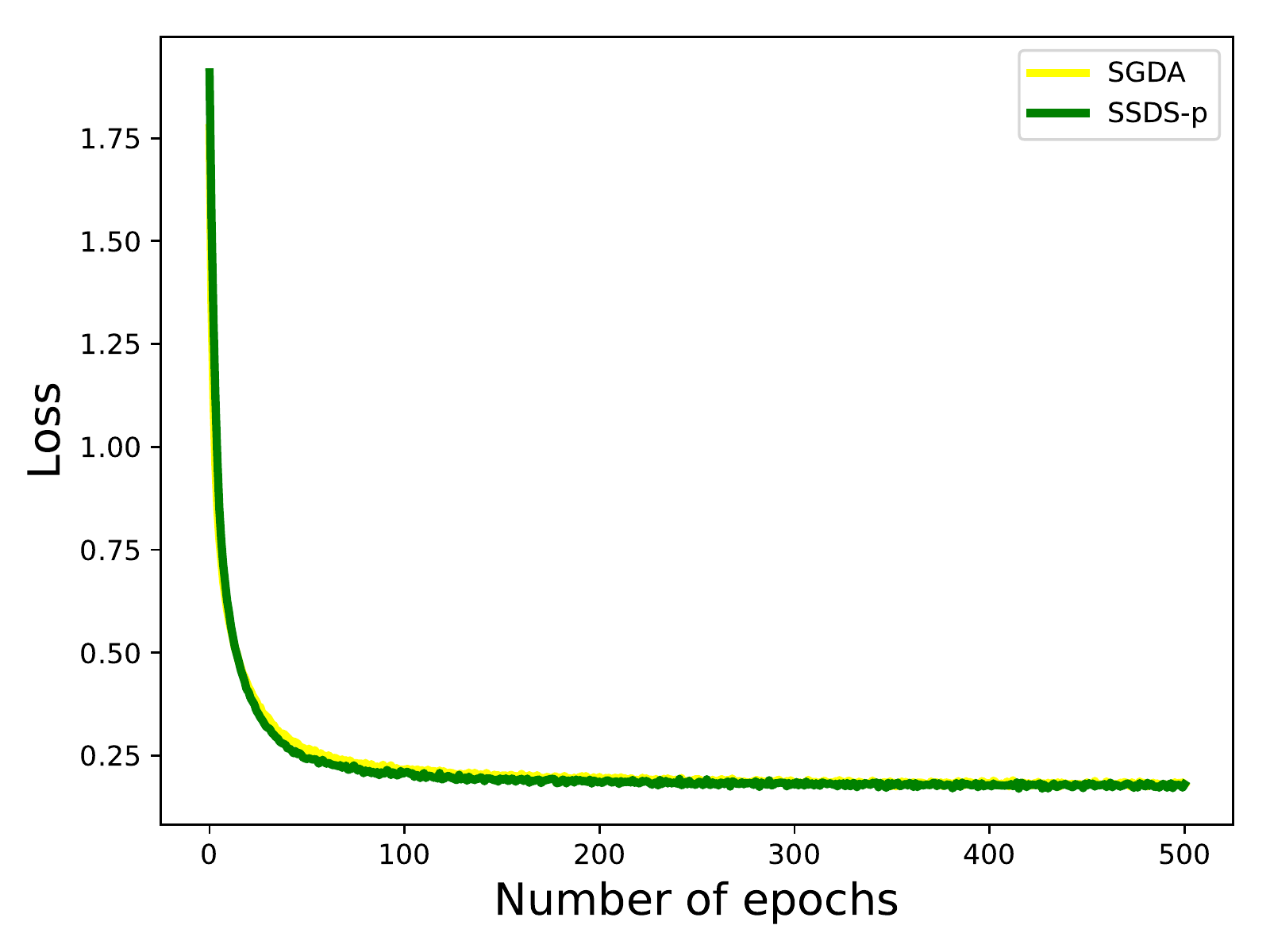}
        \caption{\label{fig:twoloss}}
    \end{subfigure}
    \caption{\textit{SGDA vs SSDS-p comparison on CIFAR-10 dataset, WideResNet model architecture}: (a) Accuracy (b) Loss value}\label{gdacomp}
\end{figure*}

\begin{figure*}[ht]
    \centering
    \begin{subfigure}{0.49\textwidth}
        \includegraphics[width=0.99\linewidth]{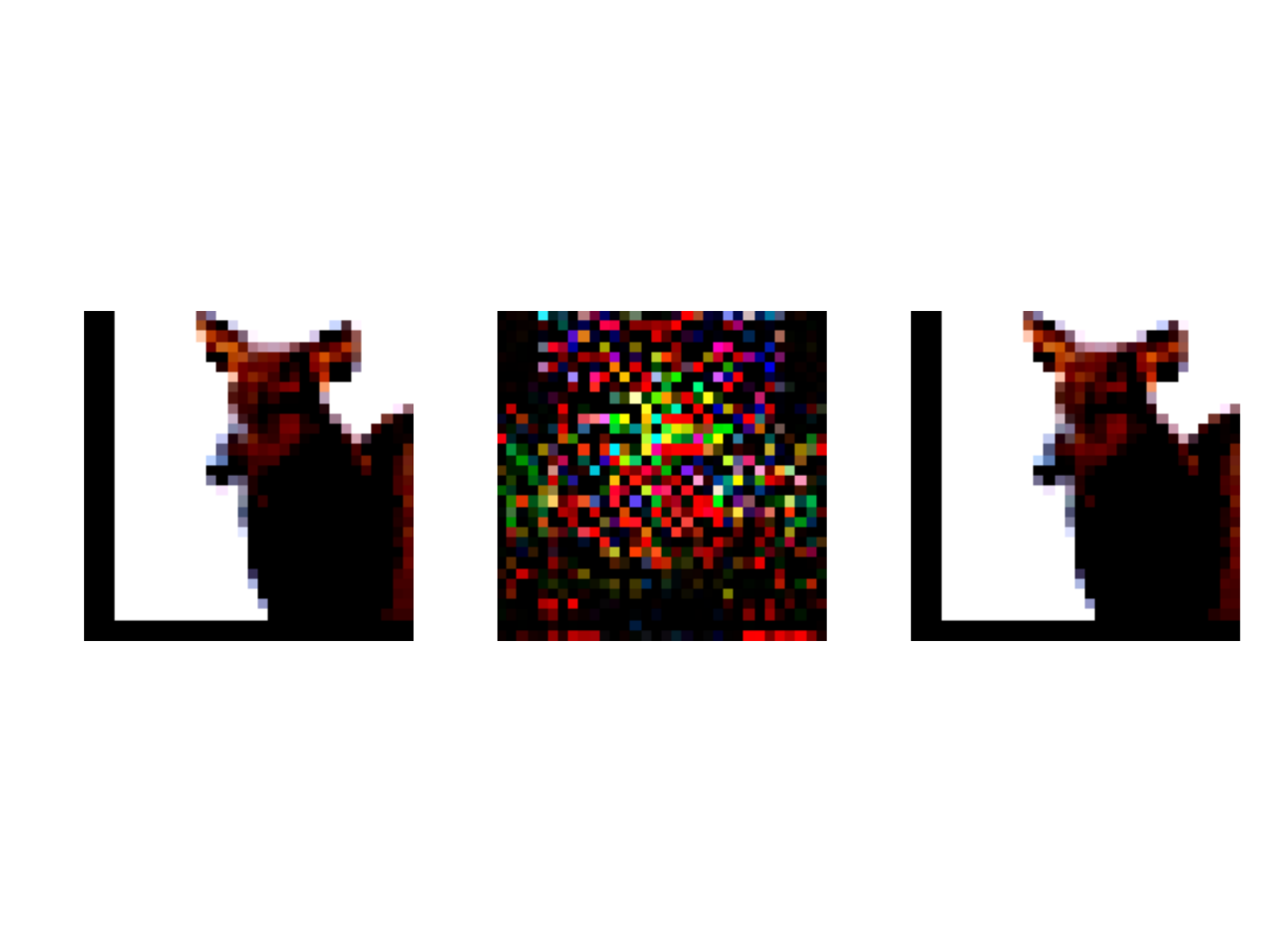}
    \end{subfigure}
    \begin{subfigure}{0.49\textwidth}
        \includegraphics[width=0.99\linewidth]{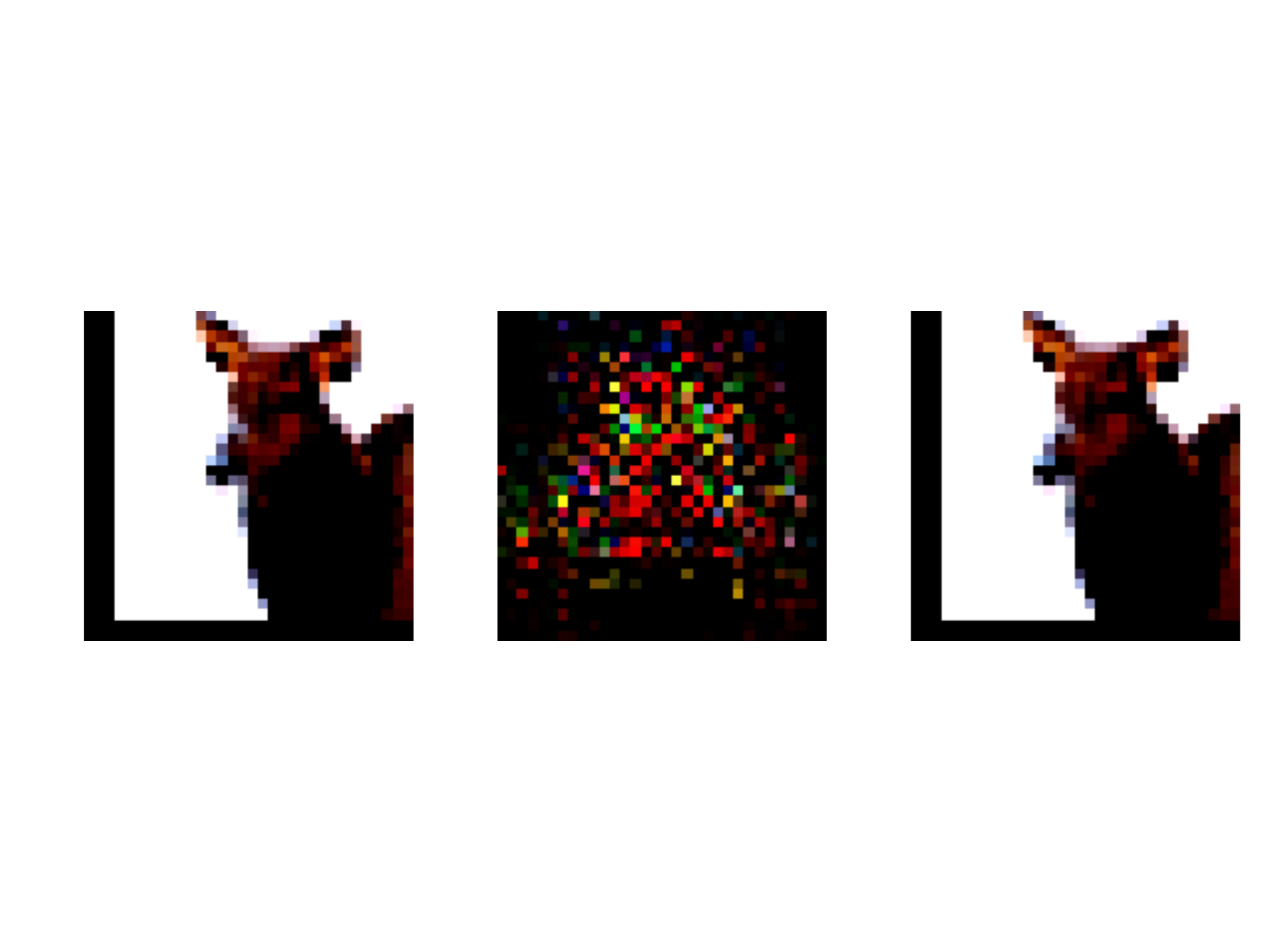}
    \end{subfigure}
    \vspace{-1pt}
    \begin{subfigure}{0.49\textwidth}
        \includegraphics[width=0.99\linewidth]{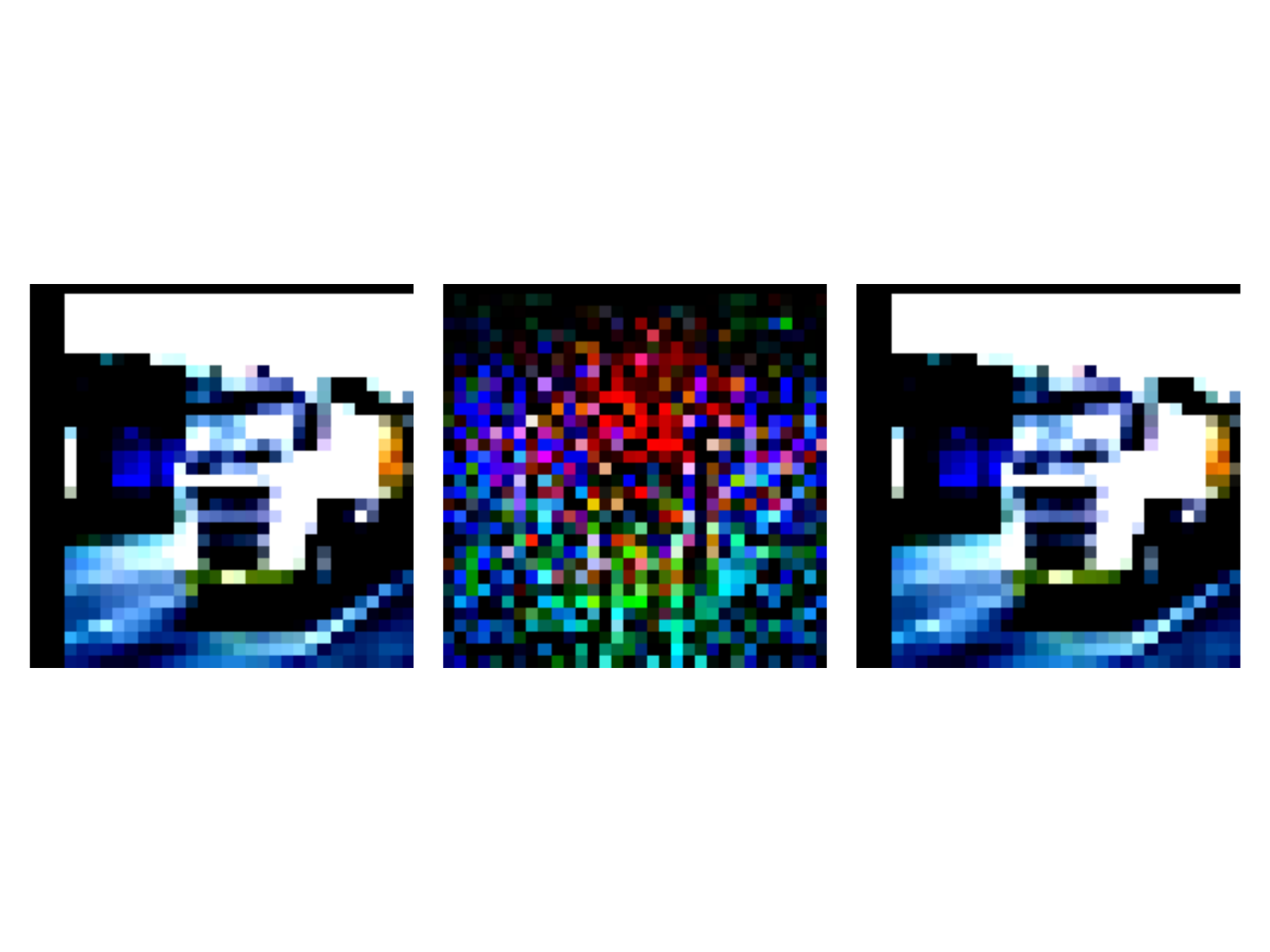}
    \end{subfigure}
    \begin{subfigure}{0.49\textwidth}
        \includegraphics[width=0.99\linewidth]{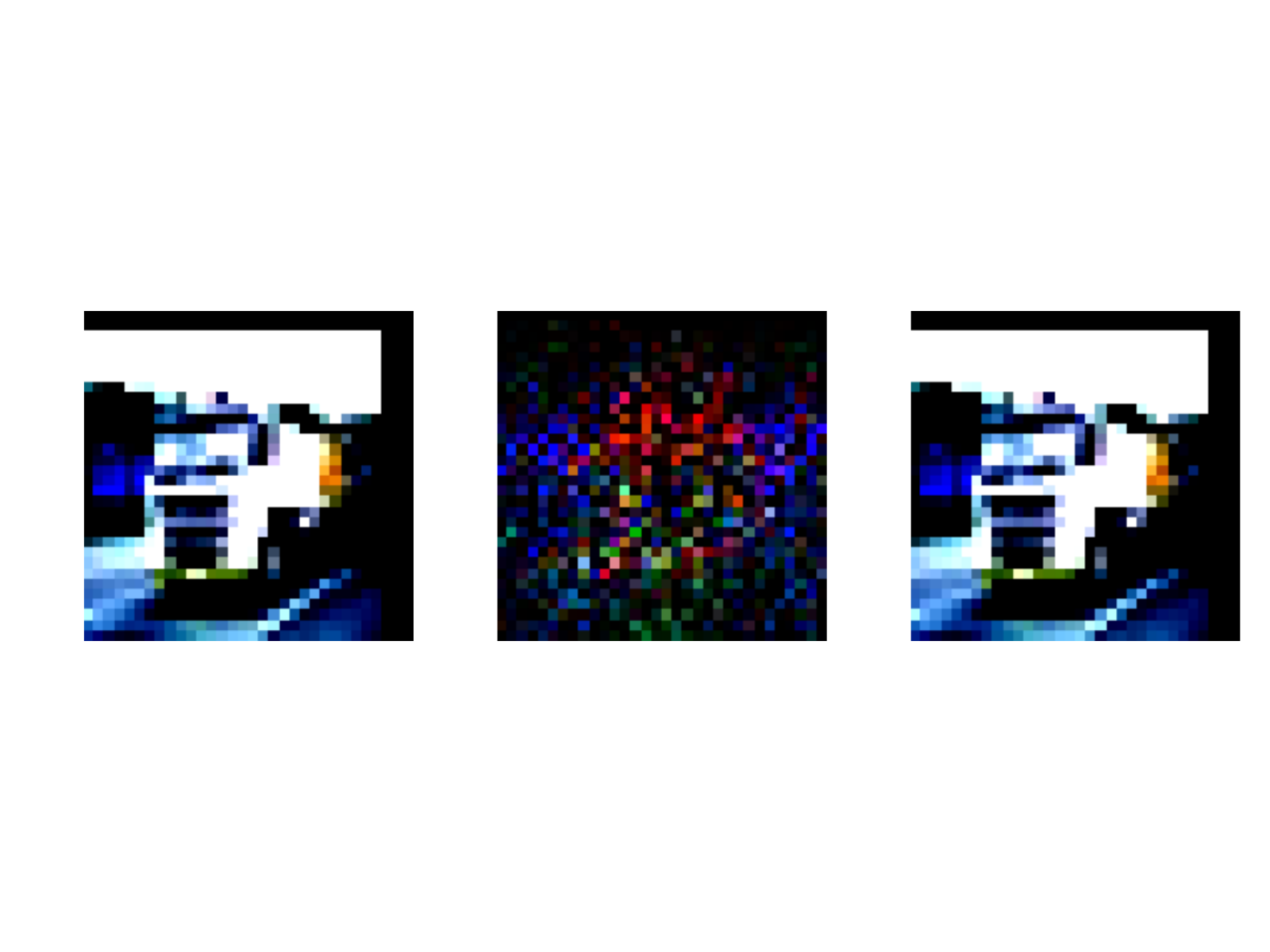}
    \end{subfigure}\\
    \begin{subfigure}{0.49\textwidth}
        \includegraphics[width=0.99\linewidth]{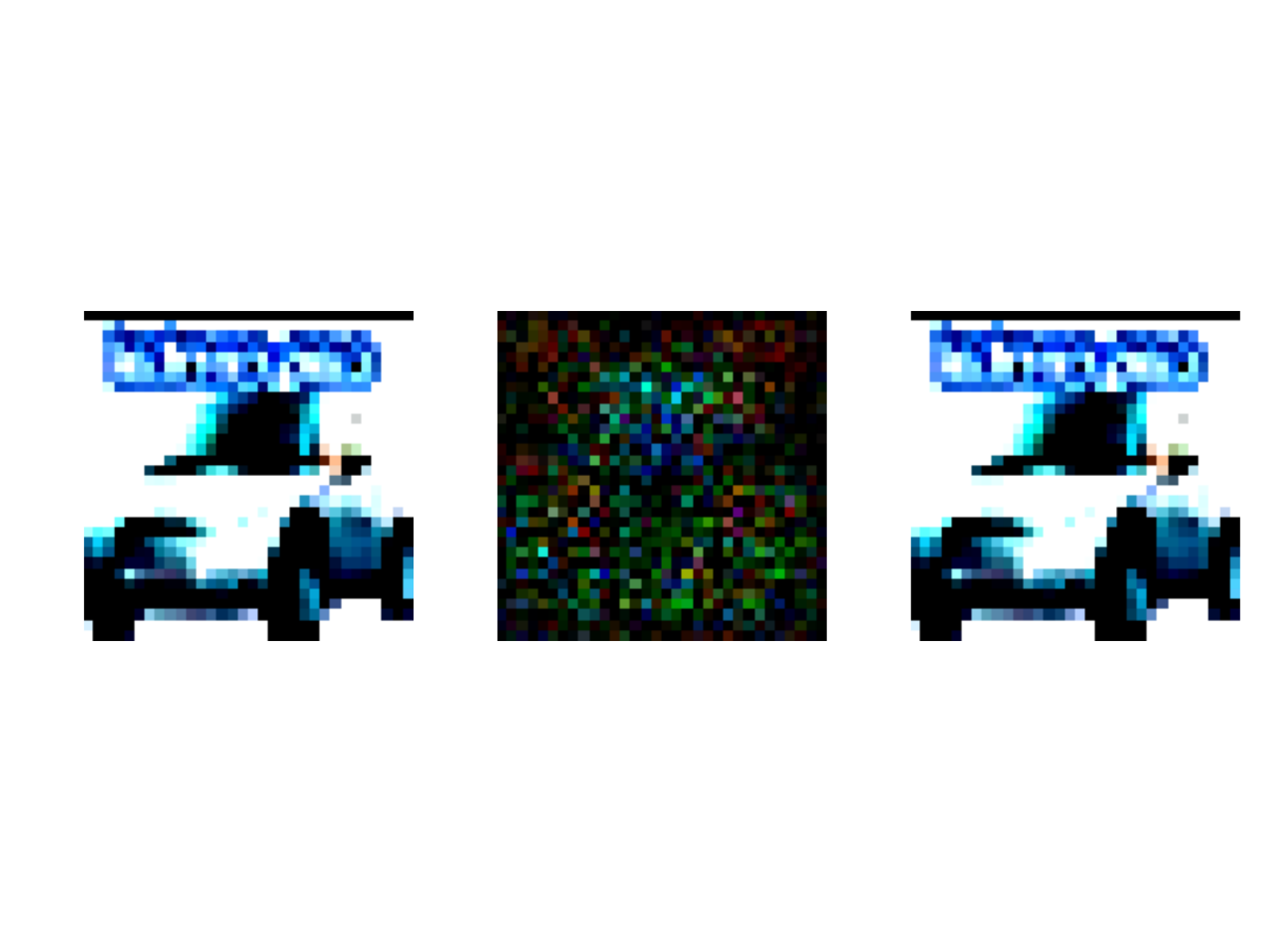}
        \caption{\label{fig:imgsgda}}
    \end{subfigure}
    \begin{subfigure}{0.49\textwidth}
        \includegraphics[width=0.99\linewidth]{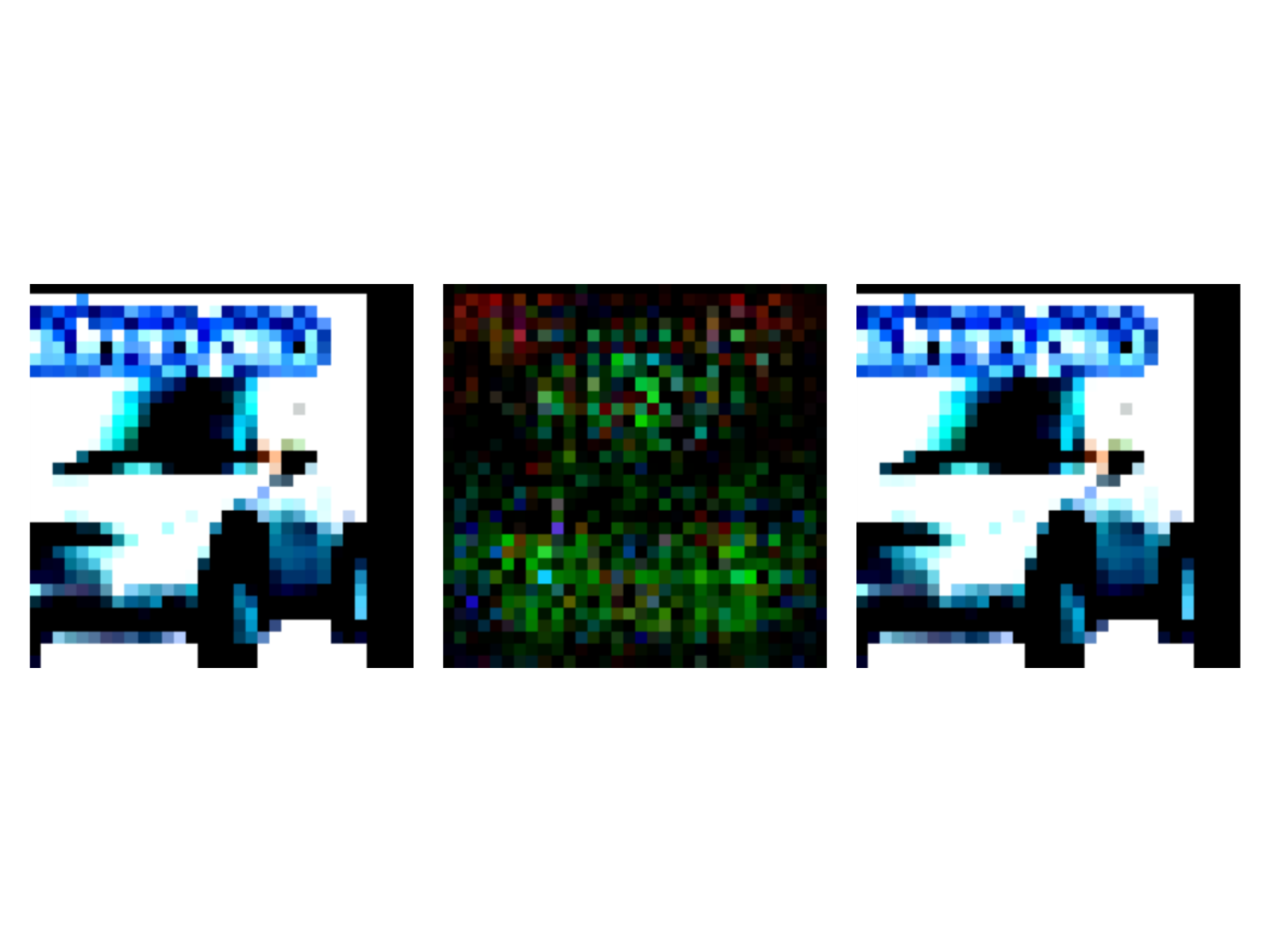}
        \caption{\label{fig:imgssdsp}}
    \end{subfigure}
    
    \caption{\textit{SGDA vs SSDS-p comparison on CIFAR-10 dataset, WideResNet model architecture}: (a) Randomly chosen image for SGDA training (left), with its corruption(center) and the corrupted image (right) (b) Randomly chosen image for SSDS-p training (left), with its corruption (center) and the corrupted image(right)}\label{viscomp}
\end{figure*}

\subsection{SSDS-p and SGDA attacks}
\vspace{10pt}
\begin{algorithm}
\caption{SGDA attack algorithm}
\begin{algorithmic}[1]\label{attacksgda}
     \STATE \textbf{Input}: $\varepsilon$, $p$, $w^\star$
     \STATE \textbf{Initialization}:$u_0$, $\alpha_0$
    \FOR{$k\in\{1,...,K\}$}
    \STATE $u_{k+1}^{(m_j)}=u_{k}^{(m_j)}+\alpha_k(\partial_{u_k}L(I^{(m_j)}+u_k^{(m_j)},y^{(m_j)},w^\star)$
    \STATE $\alpha_{k+1}=\alpha_k e^{-k\textit{p}}$
    \ENDFOR
\end{algorithmic}
\end{algorithm}

Throughout the paper, we discussed the performance of SSDS-p and SGDA algorithms as defense method against adversarial perturbations. However, SSDS-p and SGDA attacks can be generated through an iterative process by freezing the model ($w^\star$s) and following the same algorithm for updating $u$s. Note that, for generating SSDS-p attacks $\lambda$ is not getting updated too as it corresponds to the model parameters, whereas $v$s are being updated in the same fashion in SSDS-p training. Algorithms~\ref{attacksgda}, \ref{attackssds} are the corresponding attack algorithms for SGDA and SSDS-p methods.

\begin{algorithm}[ht]
\caption{SSDS attack algorithm}
\begin{algorithmic}[1]\label{attackssds}
     \STATE \textbf{Input}: $\varepsilon$, $p$, $C_1$, $w^\star$
     \STATE \textbf{Initialization}:$u_0$, $v_0$, $\alpha_0$
    \FOR{$k\in\{1,...,K\}$}
    \STATE $v_{k+1}^{(m_j)}=v_k^{(m_j)}+\alpha_k(\Vert u_k^{(m_j)}\Vert_\infty-\varepsilon)$
    \STATE $u_{k+1}^{(m_j)}=u_{k}^{(m_j)}+\alpha_k(\partial_{u_k} L(I^{(m_j)}+u_k^{(m_j)},y^{(m_j)},w^\star)-C_1 \; v_k^{(m_j)}\sgn(u_k^{(m_j)}))$
    \STATE $\alpha_{k+1}=\alpha_k e^{-k\textit{p}}$
    \ENDFOR
\end{algorithmic}
\end{algorithm}

Fig.~\ref{testaccplots} shows the performance of a well trained (SGDA) model on SSDS-p and SGDA attacks. The plots show that the model performs well and the testing accuracy is high in the beginning when the optimal attack is still not found, but as the final perturbation is crafted, the accuracy stays fixed and the test accuracies under SGDA and SSDS-p optimal attacks are achieved. 

\begin{figure*} [ht]
\centering
    \begin{subfigure}{0.49\textwidth}
        \includegraphics[width=0.99\linewidth]{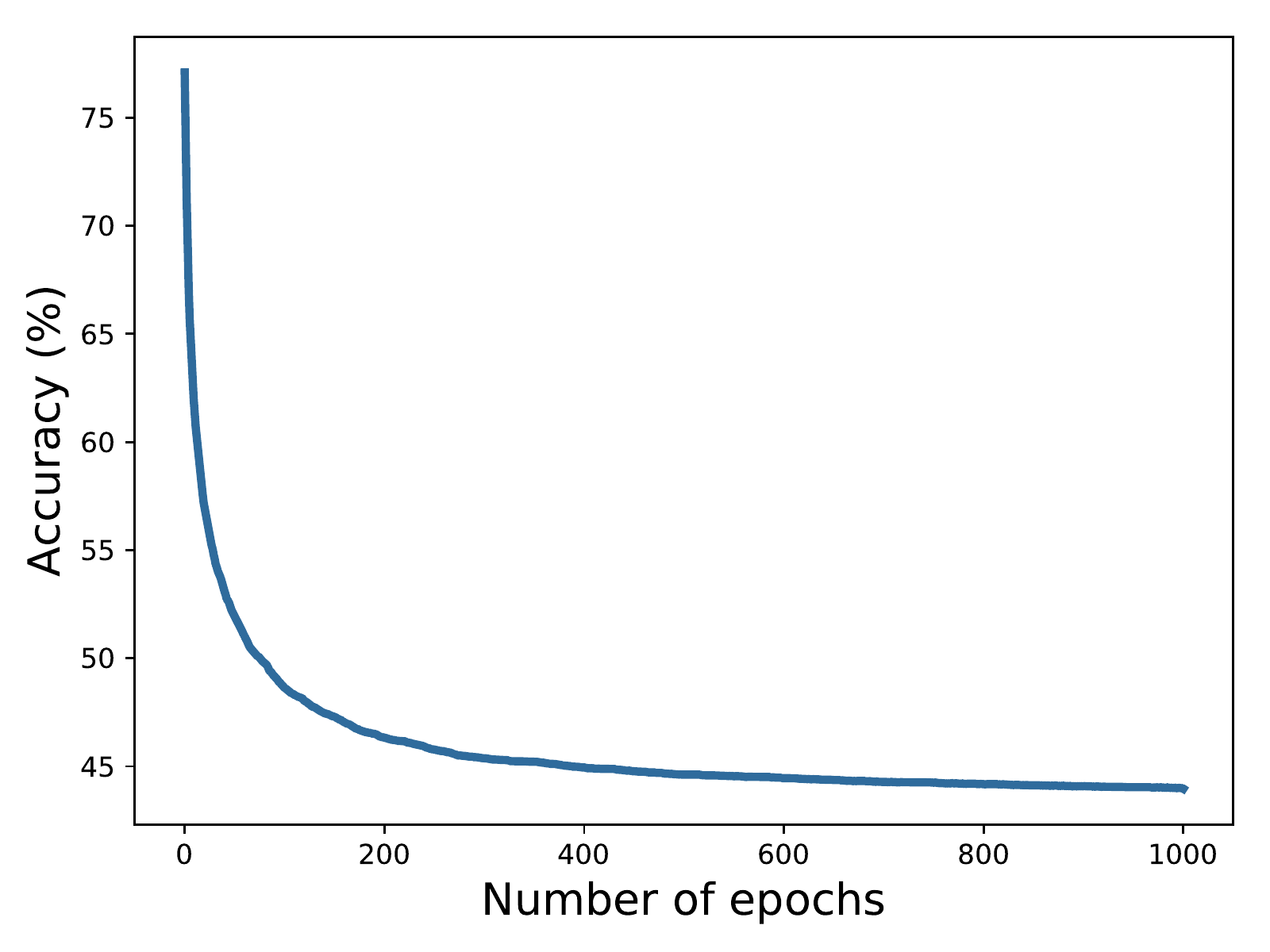}
        \caption{SGDA test accuracy}
        \label{fig:testsgda}
    \end{subfigure}
    \begin{subfigure}{0.49\textwidth}
        \includegraphics[width=0.99\linewidth]{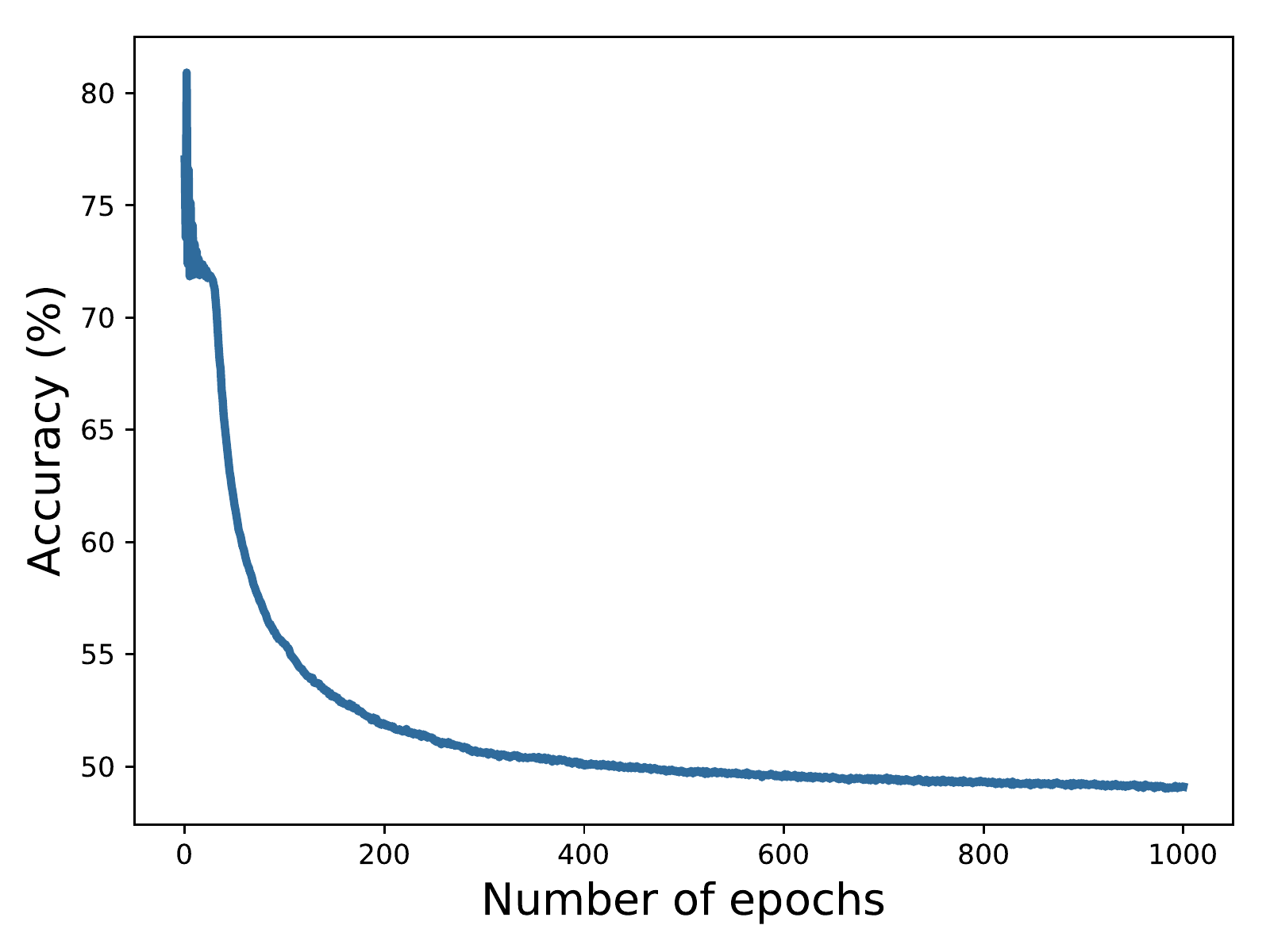}
        \caption{SSDS-p test accuracy}
        \label{fig:testssds}
    \end{subfigure}
    \caption{\textit{Testing accuracy plots  for SGDA robust model under (a) SGDA (b) SSDS-p attacks (dataset: CIFAR10, $\varepsilon = 0.03$, model architecture: VGG19)}}\label{testaccplots}
\end{figure*}

\subsection{SSDS-p performance evaluation and comparison}
In this section, we present a detailed performance evaluation of the proposed SSDS-p algorithm under both white-box and black-box attacks. We compare the performance of SSDS-p with the state-of-the-art robust learning algorithms. 

{\begin{table*} [ht]
\centering
    \caption{Performance comparison of several robust learning methods under white-box attacks using WideResNet on CIFAR-10 dataset}
    \resizebox{0.7\textwidth}{!}{{\begin{tabular}{c||c||c||c}
    Defence model & Clean & FGSM & PGD-20 \\
    \hline
    \hline
    Natural  & 94.82\% & 35.23\% & 5.93\% \\
    \hline
    FeatureScatter~\cite{zhang2019defense}  & 90.00\% & 78.40\% & 70.50\% \\
    \hline
    Free-8~\cite{shafahi2019adversarial} & 85.96\% & 53.57\%  & 46.82\% \\
    \hline
    PGD-7~\cite{madry2017towards} & 85.70\% & 54.90\% & 44.90\% \\
    \hline
    FGSM+DAWNBench~\cite{wong2020fast} & 83.12\% & 59.17\% & 45.18\%  \\
    \hline
    YOPO~\cite{zhang2019you} & 86.70\% & 55.23\% & 47.98\% \\
    \hline
    TRADES~\cite{DBLP:journals/corr/abs-1901-08573} & 84.92\% & 61.06\% & 56.61\%  \\
    \hline
    SGDA  & 81.97\% & 80.40\% & 44.32\% \\
    \hline
    SSDS-p & 82.91\%  & 81.21\% & 45.89\% \\
    \hline
    \end{tabular}}}
    \label{WhitetableWResNet10}
\end{table*}}

\textbf{CIFAR-10 dataset Under white-box attacks:} 

We present the performance of SSDS-p under white-box attacks (with FGSM and PGD attack models) in Table~\ref{WhitetableWResNet10} along with other state-of-the-art defense models. 
We also provide SGDA performance in the table as a baseline for our proposed approach. Based on the results presented, SSDS-p performs the best under FGSM attack, which is a computationally inexpensive attack.
Under computationally expensive attacks using the PGD-20 algorithm (i.e., PGD with 20 step iteration),  SSDS-p performance is comparable with the PGD-7, Free-8, FGSM+DAWNBench, and YOPO defence algorithms. However, as Table~\ref{Runningtime} suggests, while FeatureScatter, PGD, YOPO and TRADES training algorithms are significantly more computationally complex compared to SSDS-p, other algorithms (Free-8, FGSM+DAWNBench) have comparable accuracy and running time with SSDS-p. Comparing the performance of different algorithms in Table~\ref{WhitetableWResNet10}, we can see that only TRADES and FeatureScatter algorithm perform significantly better than all other methods under the PGD-20 attack. However, the methodology for generating the perturbations and using them eventually for training their robust model is different in FeatureScatter. In FeatureScatter~\cite{zhang2019defense}, adversarial examples are generated for training through feature scattering in the latent space, which is unsupervised in nature and computationally expensive. Similarly, while SSDS-p along with other methods listed in the table use cross entropy loss function, TRADES~\cite{DBLP:journals/corr/abs-1901-08573} differs in terms of the objective function and is still quite expensive computationally, similar to PGD training.


\textbf{CIFAR-10 dataset Under black-box attacks:}

We evaluate our proposed SSDS-p method under black-box attacks as well as compare head-to-head performance against PGD and TRADES approaches. As we wanted to use pre-trained PGD and TRADES models available online for this study, we had to use only ResNet50 architecture while comparing with the PGD-trained model and WideResNet architecture for comparison with the TRADES model. For the comparison with PGD training, we use the SSDS-p ResNet50 model for defending against $20$ step PGD attacks generated using a naturally trained model as well as using the pre-trained PGD model~\cite{robustness} available online as source models. We also evaluate the performance of the PGD above model under 20-step PGD attacks generated using the naturally trained and the SSDS-p models as source models. Results presented in Table~\ref{blacktable1} show that SSDS-p performs significantly better than the PGD model under black-box attacks. In order to compare with TRADES models, we use the pre-trained WideResNet TRADES model, which is available online. Based on similar experiments, it is evident that SSDS-p also performs better than the TRADES model under black-box attacks (as shown in Table~\ref{blacktable2}). 

\begin{table}[ht]
 \centering
 \caption{Black box performance: 20 step PGD attacks crafted by naturally trained model, SSDS-p trained model and PGD trained model~\cite{robustness} using ResNet50 architecture on CIFAR-10 dataset}
   \resizebox{0.6\textwidth}{!}{\begin{tabular}{c|c|c|c}
        \hline
         \textbf{Target} & \textbf{Source} & \textbf{Attack} & \textbf{Accuracy}\\\hline
                PGD-7  & naturally-trained & PGD-20 & 75.15\% \\ \hline
                SSDS-p & naturally-trained & PGD-20 &\textbf{78.59\%}\\ \hline
                PGD-7 & SSDS-p & PGD-20 & 71.62\% \\ \hline
                SSDS-p & PGD & PGD-20 &\textbf{78.53\%}\\\hline
    \end{tabular}}
    \label{blacktable1}
\end{table}

\begin{table}[ht]
 \centering
 \caption{Black box performance: 20 step PGD attacks crafted by naturally trained model, SSDS-p trained model and TRADES trained model~\cite{DBLP:journals/corr/abs-1901-08573} using WideResNet architecture on CIFAR-10 dataset}
  \resizebox{0.6\textwidth}{!}{\begin{tabular}{c|c|c|c}
        \hline
         \textbf{Target} & \textbf{Source} & \textbf{Attack} & \textbf{Accuracy}\\\hline
                TRADES & naturally-trained & PGD-20 & 66.97\% \\ \hline
                SSDS-p & naturally-trained & PGD-20 &\textbf{74.37\%}\\ \hline
                TRADES & SSDS-p & PGD-20 & 66.82\% \\ \hline
                SSDS-p & TRADES & PGD-20 &\textbf{80.63\%}\\\hline
    \end{tabular}}
    \label{blacktable2}

\end{table}


\textbf{CIFAR-100 dataset under white-box attacks:} 

We present the performance of SGDA, and SSDS-p under white-box attacks in Table~\ref{WhitetableWResNet100} and compare them with state-of-the-art methods. Similar to CIFAR-10 results, the performance of SSDS-p algorithm is better that Free-8 and PGD-7 robust models under FGSM attacks, and comparable with PGD-7 under PGD-20 attack. FeatureScatter still achieves the highest accuracy among all the methods but as we mentioned earlier it uses feature scattering techniques in the latent space to generate adversaries which makes it computationally expensive. 

\begin{table*} [ht]
\centering
    \caption{Performance comparison of several robust learning methods under white-box attacks using WideResNet on CIFAR-100 dataset}
    \resizebox{0.7\textwidth}{!}{{\begin{tabular}{c||c||c||c}
    Defence model & Clean & FGSM & PGD-20 \\
    \hline
    \hline
    Natural  & 74.00\% & 10.75\% & 0.00\% \\
    \hline
    FeatureScatter~\cite{zhang2019defense} & 73.90\% & 61.00\% & 47.20\% \\
    \hline
    Free-8~\cite{shafahi2019adversarial} & 62.13\% & 29.14\%  & 25.88\% \\
    \hline
    PGD-7~\cite{madry2017towards} & 59.90\% & 28.50\% & 22.60\% \\
    \hline
    SGDA  & 49.11\% & 48.01\% & 16.33\% \\
    \hline
    SSDS-p & 50.85\%  & 49.49\% & 19.01\% \\
    \hline
    \end{tabular}}}
    \label{WhitetableWResNet100}
\end{table*}

\textbf{CIFAR-100 dataset under black-box attacks:}

We evaluate our proposed SSDS-p method under black-box attacks and compare the performance with SGDA. We use the SSDS-p WideResNet model for defending against $20$ step PGD attacks generated using a naturally trained model as well as using the pre-trained SGDA model as source models. We also evaluate the performance of the above SGDA model under 20-step PGD attacks generated using the naturally trained and the SSDS-p models as source models. Results presented in Table~\ref{blacktable100} show that SSDS-p performs slightly better than the SGDA model under black-box attacks similar to the performance under white-box attacks.

\begin{table*}[ht]
 \centering
 \caption{Black box performance: 20 step PGD attacks crafted by naturally trained model, SSDS-p trained model and SGDA trained model using WideResNet model architecture on CIFAR-100 dataset}
   \resizebox{0.6\textwidth}{!}{\begin{tabular}{c|c|c|c}
        \hline
         \textbf{Target} & \textbf{Source} & \textbf{Attack} & \textbf{Accuracy}\\\hline
                SGDA  & naturally-trained & PGD-20 & 42.50\% \\ \hline
                SSDS-p & naturally-trained & PGD-20 &\textbf{44.08\%}\\ \hline
                SGDA & SSDS-p & PGD-20 & 41.12\% \\ \hline
                SSDS-p & SGDA & PGD-20 &\textbf{42.70\%}\\\hline
    \end{tabular}
    \label{blacktable100}}
\end{table*}

\textbf{SVHN dataset under white-box attacks:} 

We present the performance of SGDA, and SSDS-p under white-box attacks in Table~\ref{WhitetableWResNetSVHN} and compare them with stat-of-the-art methods. Results show that the performance of our SSDS-p algorithm is comparable to FeatureScatter under most of the attack schemes. We can provide more robust model compared to PGD-7 as well in significantly less time (refer to Table~\ref{Runningtime}).

\begin{table*} [ht]
\centering
    \caption{Performance comparison of several robust learning methods under white-box attacks using ResNet50 on SVHN dataset:}
    \resizebox{0.98\textwidth}{!}{{\begin{tabular}{c||c||c||c||c||c||c}
    Defence model & Clean & FGSM & PGD-20 & PGD-100 & CW-20 & CW-100 \\
    \hline
    \hline
    Natural & 97.20\% &  53.00\%  & 0.3\% & 0.1\% & 0.3\% & 0.1\%\\
    \hline
    PGD-7~\cite{madry2017towards} & 93.9\% & 68.4\% & 47.9\% &46.0\% & 48.7\% & 47.3\%\\
    \hline
    FeatureScatter~\cite{zhang2019defense} & 96.2\% & 83.5\% & 62.9\% & 52.0\% & 61.3\% & 50.8\% \\
    \hline
    SGDA & 93.97\% & 69.82\% & 52.52\%  &  52.34\% & 52.42\% & 52.35\% \\
    \hline
    SSDS-p & 94.32 \% & 73.53\% & 61.39\% & 61.33\% & 61.07\%& 60.94 \\
    \hline
    \hline
    \end{tabular}}}
    \label{WhitetableWResNetSVHN}
\end{table*}

\textbf{SSDS-p with different model architectures:}

To further analyse the algorithm, we have evaluated the performance of our model on other model architectures (e.g., ResNet50 and VGG19). Table~\ref{WhitetableRes5010} summarizes the performance of SSDS-p using ResNet50 model architecture along with other training methods on CIFAR-10 and CIFAR-100 datasets. Results align with Table~\ref{WhitetableWResNet10} where SSDS-p performs the best under FGSM attack and the the model is comparable to PGD-7 model under PGD-20 attacks. Similarly, results on CIFAR-100 dataset shows that SSDS-p performs the best compared to SGDA and naturally trained models using ResNet model architecture.

Additionally, as the model architectures play a significant role specially when the input data is large, we run SSDS-p on different model architectures to analyse the dependency of SSDS-p on the model capacity, gradients, etc. 
Based on Table~\ref{modelcomp}, we see that the model capacity and complexity definitely helps both natural and robust accuracy, although the effect is not as significant as its effect on PGD robust models reported by~\citet{madry2017towards}. 

\begin{table*} [ht]
\centering
    \caption{Performance comparison of several training methods under white-box attacks using ResNet50 on CIFAR-10 dataset}
     \resizebox{0.7\textwidth}{!}{\begin{tabular}{c||c||c||c||c}
    Dataset & Defence model & Clean & FGSM & PGD-20\\
    \hline
    \hline
    CIFAR-10 & Natural & 92.70\% &  27.50\%  & 0.82\%\\
    \hline
    CIFAR-10 & PGD-7 & 79.4\% &  51.7\%  & 43.7\%\\
    \hline
    CIFAR-10 & SGDA & 78.04\% & 76.42\% & 40.70\% \\
    \hline
    CIFAR-10 & SSDS-p & \textbf{80.1\%} & \textbf{79.32\%} & 42.11 \% \\
    \hline
    \hline
    CIFAR-100 & Natural & 47.96\% &  43.66\%  & 13.01\%\\
    \hline
    CIFAR-100 & SGDA & 47.07\% & 46.18\% & 18.16\% \\
    \hline
    CIFAR-100 & SSDS-p & \textbf{49.77\%} & \textbf{49.22\%} & \textbf{19.69 \%} \\
    \hline
    \hline
    \end{tabular}}
    \label{WhitetableRes5010}
\end{table*}


\begin{table*}[ht]
    \caption{SSDS-p defense with different model architectures under white-box attacks}
    \begin{subtable}{0.32\linewidth}
        \centering
        \resizebox{0.7\textwidth}{!}{
        \begin{tabular}{c|c}
            Attack & Accuracy\\
            \hline
            Clean & 77.13\% \\
            FGSM & 75.92\% \\
            PGD & 42.32\% \\
            SGDA & 43.49\% \\
            SSDS-p & 47.56\% \\
        \end{tabular}
        }
      \caption{VGG 19}
      \label{tab:VGG}
    \end{subtable}
    \hfill
    \begin{subtable}{0.32\linewidth}
        \centering
        \resizebox{0.7\textwidth}{!}{
      \begin{tabular}{c|c}
            Attack & Accuracy\\
            \hline
            Clean & 80.10 \% \\
            FGSM & 79.32 \% \\
            PGD & 42.11\% \\
            SGDA & 44.58 \% \\
            SSDS-p & 50.09 \% \\
        \end{tabular}
        }
        \caption{ResNet50}
        \label{tab:50}
     \end{subtable}
    \begin{subtable}{0.3\linewidth}
        \centering
        \resizebox{0.7\textwidth}{!}{
      \begin{tabular}{c|c}
            Attack & Accuracy\\
            \hline
            Clean & 82.91 \% \\
            FGSM & 81.21 \% \\
            PGD & 45.89\% \\
            SGDA & 49.18 \% \\
            SSDS-p & 53.53 \% \\
        \end{tabular}
        }
        \caption{WideResNet}
        \label{tab:wide}
     \end{subtable}
    \label{modelcomp}
\end{table*}

\textbf{Attacks with more number of steps} 

To further evaluate our algorithm, We present the performance of SGDA, and SSDS-p under PGD and CW white-box attacks with various steps. The results are summarized in Tables~\ref{WhitetableMoreAttacks10} for CIFAR-10 and CIFAR-100 datasets. Similar to other robust models, SSDS-p can preserve the performance when the number of steps increase (the generated attacks become more powerful).


\begin{table*} [ht]
\centering
    \caption{Performance comparison of several robust learning methods under white-box attacks using WideResNet}
    \resizebox{0.99\textwidth}{!}{{\begin{tabular}{c||c||c||c||c||c||c}
    Dataset & Defence model & PGD-10 & PGD-40 & PGD-100 & CW-20 & CW-100\\
    \hline
    \hline
 CIFAR-10 & Natural & 7.83\% & 4.18\% & 0\% & 4.73\% & 0.32\%  \\
    \hline
 CIFAR-10 & FeatureScatter~\cite{zhang2019defense}  & 70.90\% & 70.3\% & 68.60\% & 62.40\% & 60.6\%  \\
    \hline
 CIFAR-10 & PGD-7~\cite{madry2017towards} & 45.10\% & 44.80\% & 44.80\% & 45.70\% & 45.40\% \\
    \hline
 CIFAR-10 & SGDA  & 45.74\% & 44.05\% & 44.03\% & 44.82\% & 44.66\% \\
    \hline
 CIFAR-10 & SSDS-p & 47.35\% & 45.61\% & 46.44\% & 46.62\% & 46.40\% \\
    \hline
    \hline
 CIFAR-100 & Natural & 12.83\% & 9.61\% & 6.42\% & 12.13\% & 7.21\%  \\
    \hline
 CIFAR-100 & SGDA  & 18.85\% & 16.09\% & 15.76\% & 16.52\% & 16.37\% \\
    \hline
 CIFAR-100 & SSDS-p & 19.23\% & 16.15\% & 16.10\% & 16.78\% & 16.70\% \\
    \hline
    \end{tabular}}}
    \label{WhitetableMoreAttacks10}
\end{table*}


\subsection{Computational time comparison}
One of the key advantages of our proposed algorithm is the significantly lower computational overhead compared to the standard techniques for training robust models. To demonstrate this, we compare computation times with experiments run on one node of a GPU cluster with Intel Xeon Processor with 32 cores, 128GB RAM and with a Titan X (Pascal architecture) GPU having 12GB of GPU RAM. Fig.~\ref{fig:runtime} shows that although PGD-7 and SSDS-p achieve comparable robust classification accuracy for CIFAR-10 dataset, the training time per epoch for 7 step PGD-training algorithm is approximately $4$ times greater than that of the SSDS-p training algorithm. On the other hand, SSDS-p training takes around the same time as a $1$ step PGD training (or FGSM-based adversarial training).
Table~\ref{Runningtime} summarizes the running time corresponding to other state-of-the-art algorithms. SSDS-p (similar to Free-8 and FGSM+DAWNBench) can achieve comparable accuracy with PGD in significantly less time. FeatureScatter which is the most accurate compared to other algorithms is computationally very expensive. In summary, our approach is a computationally efficient framework for robust learning compared to most of the  state-of-the-art techniques.

\begin{figure}[ht]
\centering
        \includegraphics[width=0.5\linewidth,clip,trim={0in 0.1in 0.0in 0.5in}]{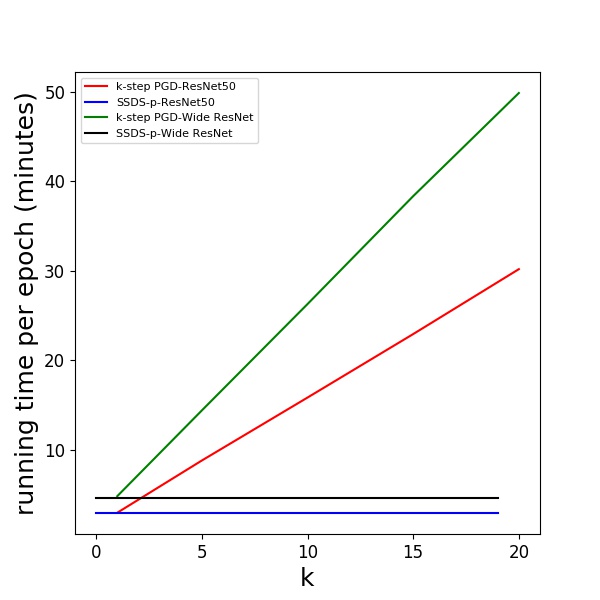}
        \caption{Training time per epoch for PGD vs. SSDS-p}
        \label{fig:runtime}
\end{figure}

\begin{table*} [ht]
\centering
    \caption{Running time comparison of several robust learning methods using WideResNet and Cifar-10 dataset:}
    \resizebox{0.7\textwidth}{!}{{\begin{tabular}{c||c}
    Defence model & Time (seconds per epoch) \\
    \hline
    \hline
    Free-8~\cite{shafahi2019adversarial} & \textbf{289}\\
    \hline
    FeatureScatter~\cite{zhang2019defense} & 3132 \\
    \hline
    FGSM+DAWNBench~\cite{wong2020fast} & \textbf{296}\\
    \hline
    YOPO~\cite{zhang2019you} & 790\\
    \hline
    FGSM~\cite{goodfellow2014explaining} & 295 \\
    \hline
    PGD-7~\cite{madry2017towards} & 1269 \\
    \hline
    PGD-20~\cite{madry2017towards} & 3011\\
    \hline
    TRADES~\cite{DBLP:journals/corr/abs-1901-08573} & 1006\\
    \hline
    SSDS-p & \textbf{291}\\
    \hline
    \hline
    \end{tabular}}}
    \label{Runningtime}
\end{table*}

\section{Conclusion}\label{sec6}
In this paper, we propose a new stochastic saddle-point dynamical systems approach to solve the robust learning problem. Under certain restrictive assumptions, we present a detailed convergence analysis of our algorithm. Our proposed algorithm involves a Lagrangian formulation to solve the robust optimization problem, where we introduce two Lagrangian multipliers for both model parameters and uncertainties. The multiplier for the uncertainties allows us to handle more complex uncertainty constraints where the uncertainties are assumed to belong to more general convex sets, for example ellipsoids or intersections of ellipsoids. In the absence of a multiplier to impose these constraints on the uncertainty set, one will have to resort to solving an optimization problem within another optimization problem, thereby making the rigorous convergence proof of such algorithm quite difficult. Our approach is useful even for the case when the uncertainty has a simple constraint. For example, an $\ell_\infty$ norm bounded uncertainty can be imposed by simple projection. However, obtaining rigorous convergence proof of an algorithm consisting of a multiplier to impose the parameter constraints and simple projection to impose the uncertainty constraints is difficult and does not exist. Similar justification applies to using Lagrangian multipliers for model parameters, especially since the constraints on the model parameters are more complicated than simple box constraints. In addition to these advantages, we also observe performance improvement over the baseline SGDA algorithm that does not use the Lagrangian multipliers. \color{black} Empirically, we show that the proposed scheme is a computationally inexpensive method that maintains a high level of performance for clean and corrupted input data, both for white-box and black-box attacks. We believe that this can be attributed to the adversarial training in SSDS also acts as a form of regularization. We can explain it based on the equivalence between the robust optimization problem and many regularization problems~\cite{sra2011}. Finally, we note that this is an early attempt to adopting a dynamical systems approach to robust learning. Future research will focus on relaxing some of the restrictive assumptions in the analysis for the loss function and uncertainties. Similarly, we will focus on developing the SSDS algorithm further by modifying the cost functions to better handle the highly non-convex nature of deep network loss functions, leading to performance improvement. 


\clearpage
\newpage
\bibliographystyle{unsrtnat}
\bibliography{ref}

\clearpage
\newpage

\section{Supplementary Material}
Additional theoretical analysis, convergence proofs and experimental results are discussed here; also more information about the material in the main body of the paper is presented.

\subsection{Additional Theorems and Derivations}

As we discussed in the main paper, the robust learning problem is:
\begin{align} \label{ROs}
{\cal RO}:=&\min_{w} \; \mathbb{E}_{(I,y) \sim {\cal D}} \; \Big[\underset{u \in {\cal U}}{\max}\;\; L(I+u,y,w)\Big]\;
\end{align}
and the total Lagrangian can be written as: 
\begin{align}
{\cal L}(x,\lambda,u,v):=&\; t+\lambda \;\Big( \textstyle \sum_{i=1}^N \big(L(I^{i}+u^{i},y^{i},w) \nonumber\\ 
& \; -v^{i} \; h^{i}(u^{i}) \big)-t \Big) \;
\label{lags}
\end{align}
Then we propose the following update rules for the parameters:
\begin{align}
&x_{k+1}=x_k-\alpha_k (\partial_x f(x_k)+\lambda_k \partial_x g(x_k,u_k,\xi_k))\label{sspd1s}\;,\\
&\lambda_{k+1}=\big[\lambda_k+\alpha_k \big(g (x_k,u_k,\xi_k)-\textstyle \sum_{i=1}^N v_k^{i} h^{i}(u^{i}_k)\big)\big]_+\label{sspd2s}\;,\\
&u^{i}_{k+1}=u^{i}_k+\alpha_k (\partial_{u^{i}} g(x_k,u_k)-v_k^{i} \partial_{u^{i}} h^{i}(u^{i}_k))\label{sspd3s}\;,\\
&v^{i}_{k+1}=[v_k^{i}+\alpha_k \lambda_k h^{i}(u^{i}_k)]_+\;\;i=1,\ldots,N.\label{sspd4s}
\end{align}
where, $[\cdot]_{+}$ is positive projection, $\xi_k$ is assumed to be an independent and identically distributed random process and $\alpha_k$ is the adaptive step-size with the following characteristics
\vspace{-10pt}
\begin{align}\label{alphas}
    &\alpha_k=\frac{\gamma_k}{\Vert T(z_k)\Vert_{2}}\;,{\rm with} \;\;\gamma_k>0\;,\nonumber\\
&\sum_{k=1}^{\infty}\gamma_k=\infty\;,\; \textstyle \sum_{k=1}^{\infty} \gamma_k^{2}<\infty\;.\vspace{-5pt}
\end{align}

Under the following assumption on $f$, $g$ and $h^{i}$.  
\begin{assumption}\label{assume_ssds2}
We assume that 
$f(x)$ is  convex in $x$ and each $h^{i}(u^{i})$ is convex in $u^{i}$.
Moreover, $g(x,u,\xi)$ is convex in $x$ and is strictly concave in $u$ for any fixed value of \;$\xi$.  
\end{assumption}

Then the following theorem is the main result for asymptotic convergence of the discrete-time saddle point algorithm with diminishing step-size. We show that the update rules in (\ref{sspd1s})-(\ref{sspd4s}) lead to convergence to the KKT point (equivalent to the saddle point as specified in the supplementary material) of the ${\cal RO}$ problem.

\begin{theorem}
Let Assumption~\ref{assume_ssds2} hold and we also assume that $\lambda^\star>0$ where $\lambda^\star$ is the saddle point of the Lagrangian~(for $\lambda$ in Eq.\ref{lags}), then, following is true for the SSDS algorithm with adaptive step-size $\alpha_k$ satisfying Eq.~\ref{alphas}.
\begin{align}
\lim_{k\to \infty}\mathbb{E}_{\xi_0^k}[x_k]=x^\star,&
\lim_{k\to \infty}\mathbb{E}_{\xi_0^k}[u_k]=u^\star\;,\\
\text{where}\quad \xi_0^k&=\{\xi_0,\ldots,\xi_k\}.\nonumber
\vspace{-20pt}
\end{align}
\end{theorem}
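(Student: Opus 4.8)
The plan is to treat the four update rules (\ref{sspd1s})--(\ref{sspd4s}) as a single stochastic, projected, normalized (sub)gradient iteration $z_{k+1}=\Pi[z_k-\alpha_k T(z_k,\xi_k)]$ for the stacked variable $z=(x,\lambda,u,v)$, where $\Pi$ projects the $(\lambda,v)$ block onto the nonnegative orthant and $T$ collects the right-hand sides, and then to run a squared-distance Lyapunov argument $V_k=\|z_k-z^\star\|_2^2$ toward a saddle point $z^\star=(x^\star,\lambda^\star,u^\star,v^\star)$. Before any dynamics I would first secure the \emph{existence} of such a saddle point, the step flagged as nontrivial, because $\mathcal{L}=f(x)+\lambda\,\psi(x,u,v)$ with $\psi=g-\sum_i v^i h^i$ is \emph{not} jointly concave in $(\lambda,u)$ (it is a product of the variable $\lambda$ and a function concave in $u$). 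I would build $z^\star$ from a KKT point of the convex $\mathcal{RO}$ problem (the inner maximum of losses convex in $w$ is again convex in $w$, so the outer problem is convex and admits a KKT point under the stated feasibility assumption), and then verify the two saddle inequalities \emph{block-wise}: the left inequality varies only the max-block $q=(\lambda,u)$ with $(x^\star,v^\star)$ fixed, so it reduces to showing $\max_{\lambda\ge0,\,u}\lambda\,\psi(x^\star,u,v^\star)$ is attained at $(\lambda^\star,u^\star)$, which holds since $\psi(x^\star,\cdot,v^\star)$ is concave with maximum value $0$ at $u^\star$ (inner stationarity and complementary slackness), whence $\lambda\psi\le0=\lambda^\star\psi(x^\star,u^\star,v^\star)$; the right inequality varies only the min-block $p=(x,v)$ with $(\lambda^\star,u^\star)$ fixed, where $\mathcal{L}$ \emph{is} jointly convex, and is handled by $x$-stationarity, $\lambda^\star>0$, convexity and complementary slackness. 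Non-joint-concavity thus never enters, because each inequality perturbs a single block.

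Second, for the deterministic mean field I would compute the drift $D(z)=\langle T(z),z-z^\star\rangle$ and lower-bound it through the first-order inequalities of Assumption~\ref{assume_ssds2}: convexity of $f$ and of $g(\cdot,u)$ for the $x$-block, (strict) concavity of $g(x,\cdot)$ and convexity of each $h^i$ for the $u$-block, and linearity of $\mathcal{L}$ in $v$ for the $v$-block. The delicate point is that $T_u=-\partial_u\psi$ carries \emph{no} factor $\lambda$ whereas $\partial_u\mathcal{L}=\lambda\,\partial_u\psi$, while $T_\lambda=-\psi$ and $T_v=-\lambda h$; so $T$ is the saddle operator of $\mathcal{L}$ only up to a $\lambda$-rescaling of the $u$-block, and the usual monotonicity identity $\langle F(z),z-z^\star\rangle\ge\mathcal{L}(p,q^\star)-\mathcal{L}(p^\star,q)$ breaks in the $(\lambda,u)$ block. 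I would therefore keep the product form $\lambda\psi$ explicit, assemble $D$ from the per-block inequalities, and cancel the cross terms in $\lambda$ and in $(u,v)$ using complementary slackness at $z^\star$ (which gives $\psi(x^\star,u^\star,v^\star)=0$ and $\partial_u\psi(x^\star,u^\star,v^\star)=0$) together with $\lambda^\star>0$; the surviving diagonal contributions are rendered sign-definite by the \emph{strict} convexity of $f$ in $x$ and \emph{strict} concavity of $g$ in $u$, so that $D(z)$ is bounded below by a continuous function that is strictly positive whenever $(x,u)\ne(x^\star,u^\star)$.

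Third, I would pass to the stochastic discrete iteration. Writing $\mathcal{F}_k$ for the history and noting that only the $x$- and $\lambda$-updates depend on the sample $\xi_k$ (the $u$- and $v$-updates use all samples), $\mathbb{E}[T(z_k,\xi_k)\mid\mathcal{F}_k]$ equals the full-gradient direction $\bar T(z_k)$, so the realized direction is $\bar T(z_k)$ plus a conditionally mean-zero martingale difference. Using nonexpansiveness of $\Pi$ (with $z^\star$ feasible) and the normalization $\alpha_k=\gamma_k/\|T(z_k)\|_2$, which forces $\alpha_k^2\|T(z_k,\xi_k)\|_2^2\le\gamma_k^2$ and hence $\|z_{k+1}-z_k\|\le\gamma_k$, I would obtain
\begin{align*}
\mathbb{E}[V_{k+1}\mid\mathcal{F}_k]\;\le\;V_k-2\alpha_k\,D(z_k)+\gamma_k^2+2\gamma_k\,\varepsilon_k,
\end{align*}
with $\varepsilon_k$ the noise term, keeping $\alpha_k$ measurable with respect to $\mathcal{F}_k$ so its conditional expectation vanishes. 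Since $\sum_k\gamma_k^2<\infty$ this is a Robbins--Siegmund supermartingale inequality, yielding almost-sure convergence of $V_k$ and $\sum_k\alpha_k D(z_k)<\infty$; as $\sum_k\gamma_k=\infty$ and the diagonal lower bound on $D$ is strictly positive in the $(x,u)$ directions, this forces $x_k\to x^\star$ and $u_k\to u^\star$, and taking expectations (Jensen and bounded convergence on the bounded iterates) gives $\lim_k\mathbb{E}_{\xi_0^k}[x_k]=x^\star$ and $\lim_k\mathbb{E}_{\xi_0^k}[u_k]=u^\star$.

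I expect the \textbf{main obstacle} to be the drift estimate in the $(\lambda,u)$ block: because $\mathcal{L}$ is not jointly concave there and the $u$-update is deliberately left unscaled by $\lambda$, the standard monotone-operator inequality is unavailable, and one must instead exploit the exact product structure $\lambda\psi$, the activeness assumption $\lambda^\star>0$, complementary slackness, and the strict concavity of $g$ to show the cross terms cancel and the residual drift is sign-definite. A secondary technical point is handling the normalized, sample-dependent step size $\alpha_k$ so that the martingale-difference noise stays conditionally mean-zero and square-summable after weighting.
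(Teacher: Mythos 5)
Your overall architecture (saddle-point existence via KKT, then a squared-distance Lyapunov argument with nonexpansive projections and step sizes $\sum\gamma_k=\infty$, $\sum\gamma_k^2<\infty$) matches the paper, and your block-wise saddle-point verification is essentially the paper's own KKT-equals-saddle argument. However, there is a genuine gap at exactly the point you flag as the main obstacle: your proposed resolution does not work. You take the \emph{unweighted} Lyapunov function $V_k=\Vert z_k-z^\star\Vert_2^2$ and drift $D(z)=\langle T(z),z-z^\star\rangle$, and claim the cross terms created by the missing $\lambda$-factor in the $u$-update can be cancelled ``using complementary slackness at $z^\star$.'' They cannot. Writing $\psi(x,u,v):=g(x,u,\xi)-\sum_i v^i h^i(u^i)$ and applying your per-block convexity/concavity inequalities to the unweighted pairing, one gets exactly
\begin{align*}
D(z)\;\geq\;\bigl[{\cal L}(x,\lambda^\star,u^\star,v,\xi)-{\cal L}(x^\star,\lambda,u,v^\star,\xi)\bigr]\;+\;(\lambda^\star-1)\bigl[\psi(x,u,v)-\psi(x,u^\star,v)\bigr]\;.
\end{align*}
The bracketed Lagrangian difference is nonnegative by the saddle inequalities, but the residual $(\lambda^\star-1)[\psi(x,u,v)-\psi(x,u^\star,v)]$ is evaluated at the \emph{iterate}, not at $z^\star$; both factors can take either sign, so the drift can be negative, and complementary slackness (which only constrains quantities at $z^\star$, e.g.\ $v^{i\star}h^i(u^{i\star})=0$, $\lambda^\star\psi(x^\star,u^\star,v^\star)=0$) gives you nothing to kill it with. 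Strict convexity/concavity cannot rescue this either, since the residual is not dominated by the diagonal terms in general. Your scheme only closes in the special case $\lambda^\star=1$.

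The missing idea — and the paper's key device — is to put the weight $\lambda^\star$ on the $u$-block of the Lyapunov function and of the pairing, i.e.\ to use
\begin{align*}
{\cal N}(\Vert z-z^\star\Vert_2^2)=\Vert x-x^\star\Vert_2^2+\Vert\lambda-\lambda^\star\Vert_2^2+\lambda^\star\Vert u-u^\star\Vert_2^2+\Vert v-v^\star\Vert_2^2\;,
\end{align*}
so that the $u$-block concavity inequality enters multiplied by $\lambda^\star$. With this weighting the cross terms cancel \emph{identically} (this is the chain of cancellations in the paper's proof), leaving the drift bounded below by ${\cal L}(x_k,\lambda^\star,u^\star,v_k,\xi_k)-{\cal L}(x^\star,\lambda_k,u_k,v^\star,\xi_k)\geq 0$, strict when $(x_k,u_k)\neq(x^\star,u^\star)$. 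This is also precisely where the hypothesis $\lambda^\star>0$ is used: with $\lambda^\star=0$ the weighted norm would no longer control the $u$-block at all. Two secondary discrepancies: your requirement that $\alpha_k$ be ${\cal F}_k$-measurable is inconsistent with the definition $\alpha_k=\gamma_k/\Vert T(z_k)\Vert_2$, since $T(z_k)$ depends on $\xi_k$ (the paper keeps $\alpha_k$ inside the expectations rather than conditioning it out); and the paper does not invoke Robbins--Siegmund or prove almost-sure convergence — it sums the expected inequalities directly and concludes only the convergence in expectation stated in the theorem, which is weaker than the pathwise limit your route would require.
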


To elaborate more on the above derivations, We consider the following definitions in this section:

\begin{align} \label{fgh}
x:=(t,w),\ f(x):=t,\ g(x,u):=L(I+u,y,w)-t\;.
\end{align}

\subsection*{Lagrangian function derivation}

First, we derive the Lagrangian function in (\ref{lags}). The lower level optimization problem in the ${\cal RO}$ problem (\ref{ROs}) can be written as an optimization problem parametrized by $x$ such that

\begin{align}
\theta(x):=\textstyle \sum_{i=1}^N \underset{u^{(i)} \in {\cal U}^{(i)}}{\max}\; L(I^{(i)}+u^{(i)},y^{(i)},w)-t\;.\label{lower_ro}
\end{align}

Denote $v^{(i)}$ s as Lagrangian multipliers for the lower level maximization problem and define $v:=[v_{1},\ldots, v_{N}]^\top$. The role of each $v^{(i)}$ multiplier is to satisfy the uncertainty set constraint associated with the perturbation $u^{(i)}$. Based on the Lagrangian theory, one can equivalently write

\begin{align} \label{lower_level}
\theta(x)=\textstyle \sum_{i=1}^N \underset{u^{(i)}}{\max}\;\underset{v^{(i)}\geq0}{\min} \; \big(L(I^{(i)}+u^{(i)},y^{(i)},w)\nonumber\\
-v^{(i)} \; h^{(i)}(u^{(i)})\big)-t \; .
\end{align}

Hence, ${\cal RO}$ problem can be written as

\begin{align}
{\cal RO}=&\;\underset{x=(w,t)}{\min}\;\underset{\lambda\geq0}\max\; \Big\{ \; t \; + \nonumber \\ & \lambda \;\Big(\textstyle \sum_{i=1}^N \underset{u^{(i)}}{\max}\;\underset{v^{(i)}\geq0}{\min}\;\big(L(I^{(i)}+u^{(i)},y^{(i)},w)
\;- \nonumber \\ &v^{(i)} \; h^{(i)}(u^{(i)})\big)-t \Big) \; \Big\}\\
=&\;\underset{x=(w,t)}{\min}\;\underset{\lambda\geq0}\max\;\underset{u^{(i)}}{\max}\;\underset{v^{(i)}\geq0}{\min}\; \Big\{ \; t \; + \nonumber \\ & \lambda \;\Big( \textstyle \sum_{i=1}^N \big(L(I^{(i)}+u^{(i)},y^{(i)},w)
-v^{(i)} \; h^{(i)}(u^{(i)}) \big)-t \Big)\; \Big\} \; .\nonumber
\end{align}

Therefore, one can derive the total Lagrangian as in (\ref{lags}).

\subsection*{Saddle and KKT point of the ${\cal RO}$ problem}
The following theorem can be stated for the saddle point of the optimization problem (\ref{ROs}).
\begin{theorem}\label{theorem_saddle}
Consider the Lagrangian function as defined in (\ref{lags}). Under Assumption \ref{assume_ssds2}, following statements are true for the optimization problem (\ref{ROs})

\begin{align*}
&\mu=\underset{x}{\min}\;\underset{\lambda\geq0}\max\;\underset{u^{(i)},\forall i}{\max}\;\underset{v^{(i)}\geq0,\forall i}{\min}\; {\cal L}=\underset{x}{\min}\;\underset{v_{i\geq0},\forall i}{\min}\;\underset{\lambda\geq0}\max\;\underset{u^{(i)},\forall i}{\max}\;{\cal L}\;,\\
&\mu=\underset{x, v^{(i)},\forall i}{\min}\;\underset{\lambda\geq0, u^{(i)},\forall i}{\max}\; {\cal L}=\underset{\lambda\geq0, u^{(i)},\forall i}{\max}\;\underset{x, v^{(i)},\forall i}{\min}\; {\cal L}\;,
\end{align*}

where $x\in \mathbb{R}^n,\lambda\geq 0, u^{(i)}\in \mathbb{R}^{m}$, and $v^{(i)}\geq 0$ for $i=1,\ldots, N$\;. Hence, the Lagrangian function (\ref{lags}) has a saddle point.
\end{theorem}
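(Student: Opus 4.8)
The plan is to reduce every interchange asserted in the theorem to a single convex--concave minimax equality, exploiting the fact that the multiplier and epigraph variables $\lambda$, $t$, and $v$ all enter ${\cal L}$ \emph{affinely}. First I would record the structural facts implied by Assumption~\ref{assume_ssds2}: since $f(x)=t$ is affine, each $L(I^{i}+u^{i},y^{i},w)$ is convex in $w$ and strictly concave in $u^{i}$, and each $h^{i}$ is convex, the Lagrangian ${\cal L}$ is affine in each of $\lambda$, $t$, $v$; jointly convex in $(x,v)$ for fixed $(\lambda\ge 0,u)$; and strictly concave in $u$ for fixed $(x,\lambda\ge0,v\ge0)$. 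Crucially, the product term $\lambda\sum_i v^{i}h^{i}(u^{i})$ destroys joint concavity in $(\lambda,u)$, so no single minimax theorem can be applied to the whole $(\lambda,u)$ block; this is exactly the difficulty flagged after the statement.

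Second, I would dispose of the innermost $(u^{i},v^{i})$ layer by Lagrangian strong duality for the uncertainty sets. For each $i$ the map $(u^{i},v^{i})\mapsto L(I^{i}+u^{i},y^{i},w)-v^{i}h^{i}(u^{i})$ is concave in $u^{i}$ and affine in $v^{i}$, so under a Slater condition for $\{u^{i}:h^{i}(u^{i})\le 0\}$ one has $\sup_{u^{i}}\inf_{v^{i}\ge0}(L-v^{i}h^{i})=\inf_{v^{i}\ge0}\sup_{u^{i}}(L-v^{i}h^{i})=\sup_{u^{i}:\,h^{i}(u^{i})\le0}L$. This lets me collapse the $v$-layer, in whichever order it appears, into the explicit budget constraint $h^{i}(u^{i})\le 0$. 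Next I would eliminate $\lambda$ and $t$ by their affine dependence: whenever $\max_{\lambda\ge0}$ acts on the affine-in-$\lambda$ expression, finiteness forces the coefficient of $\lambda$ to be nonpositive, i.e.\ the epigraph constraint $\sum_i\sup_{u^{i}:h^{i}\le0}L\le t$, while $\min_t$ drives $t$ to equality; symmetrically, in the max--min ordering, $\min_t[(1-\lambda)t]$ pins $\lambda=1$. In every ordering this reduces the grouped expression to the single base problem
\begin{align*}
\mu=\min_{w}\;\max_{u:\,h^{i}(u^{i})\le0,\,\forall i}\;\textstyle\sum_{i=1}^N L(I^{i}+u^{i},y^{i},w).
\end{align*}
The function $\Psi(w,u):=\sum_i L(I^{i}+u^{i},y^{i},w)$ is convex in $w$ and strictly concave in $u$ over the convex budget set, so Sion's minimax theorem — with a compactness/coercivity argument in which strict concavity supplies the unique attained maximizer $u^\star$ — gives $\min_w\max_u\Psi=\max_u\min_w\Psi=\mu$, both attained.

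Finally, I would assemble the claim by tracing this reduction through each ordering in the two displayed lines: each nested or grouped expression collapses to $\mu$ once the affine variables are optimized out, so all four quantities coincide. The equality $\min_{x,v}\max_{\lambda,u}{\cal L}=\max_{\lambda,u}\min_{x,v}{\cal L}=\mu$ together with attainment of every optimizer then produces a saddle point $(x^\star,\lambda^\star,u^\star,v^\star)$ with $\lambda^\star=1$. I expect the main obstacle to be precisely the non-joint-concavity in $(\lambda,u)$: the idea that makes the argument go through is that $\lambda$, $t$, and $v$ are affine and can be eliminated so that every ordering reduces to the \emph{same} convex--concave core $\Psi$, rather than trying to apply a minimax theorem to ${\cal L}$ directly. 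Secondary technical points I would verify carefully are the Slater-type constraint qualifications (strict feasibility of each uncertainty set, and robust feasibility for the epigraph problem) and the compactness/coercivity needed for Sion's theorem and for attainment of $(x^\star,v^\star,\lambda^\star)$.
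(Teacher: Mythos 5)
Your proposal is correct in substance, but it reaches the crucial max--min equality by a different mechanism than the paper, so the two are worth contrasting. The shared part: both you and the paper collapse the inner $(u^{(i)},v^{(i)})$ layer by Lagrangian strong duality for the uncertainty sets, and both handle the two $\min$--$\max$ orderings in the first displayed line by the same finiteness/case analysis in $\lambda$ (the coefficient of $\lambda$ must be nonpositive, which recovers the epigraph constraint, with the value jumping to $+\infty$ otherwise). Where you genuinely diverge is the second displayed line, $\min_{x,v\ge0}\max_{\lambda\ge0,u}{\cal L}=\max_{\lambda\ge0,u}\min_{x,v\ge0}{\cal L}$. The paper proves this for a \emph{general} convex $f$: it first invokes strong duality of the upper-level problem $\min_x f(x)$ subject to ${\cal G}(x)\le 0$ to swap $x$ and $\lambda$, and then, for each fixed $\lambda\ge 0$, swaps $\min_x$ with $\max_u$ using convexity--concavity of $f+\lambda g$ on the set $\{u:\lambda h(u)\le 0\}$. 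You instead exploit the epigraph structure $f(x)=t$: minimizing the affine term $(1-\lambda)t$ over $t\in\mathbb{R}$ pins $\lambda=1$, after which a single application of Sion's theorem to the core $\Psi(w,u)=\sum_i L(I^{i}+u^{i},y^{i},w)$ finishes the job. Your route is more economical, and it yields the sharp byproduct $\lambda^\star=1$, which incidentally discharges the standing assumption $\lambda^\star>0$ made in the paper's convergence theorem; you are also more explicit about the regularity needed (Slater-type constraint qualifications, compactness/coercivity for Sion and for attainment), conditions the paper invokes implicitly under the label ``strong duality.'' The trade-off is generality: pinning $\lambda=1$ relies on the affine epigraph variable entering with unit coefficient, so your argument does not extend verbatim to the abstract $(f,g,h)$ formulation for which the paper actually writes its proof, whereas the paper's two-stage duality argument does.
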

\begin{proof}
For the ease of notations, consider the ${\cal ROs}$ problem with single constraint and single uncertainty set as

\begin{align}
\mu:=&\;\underset{x}{\min}\;\; f(x) \;\;\text{s.t.} \;\; \underset{h(u) \leq 0}{\max}\;\; g(x,u)\leq 0\;.\label{RO_single}
\end{align}

The general case (\ref{ROs}) with multiple uncertainty sets can be proved along similar lines. The Lagrangian for upper level problem in (\ref{RO_single}) is 

\begin{align*}
f(x)+\lambda\left(\max_{h(u)\leq 0} g(x,u)\right)\;.
\end{align*}

We can write the total Lagrangian for (\ref{RO_single}) as
\begin{align*}
{\cal L}(x,v,\lambda,u)=f(x)+\lambda\left(g(x,u)-vh(u)\right)\;.
\end{align*}
Hence, we can write
\begin{align*}
\mu&= \underset{x}{\min}\; \underset{\lambda\geq 0}{\max}\; \underset{u}{\max}\;\underset{v\geq 0}{\min}\;{\cal L}(x,v,\lambda,u)\\
&=\underset{x}{\min}\max_{\lambda\geq 0}\;\underset{u}{\max}\;\underset{v\geq 0}{\min} \;\big(f(x)+\lambda(g(x,u)-vh(u))\big)\\
&=\underset{x}{\min} \big(f(x)+\max_{\lambda\geq 0} \max_{u}\;\underset{v\geq 0}{\min}\;\lambda(g(x,u)-vh(u))\big)\;.
\end{align*}
We now show that
\begin{align*}
\mu=\underset{x}{\min}\;\underset{v\geq 0}{\min}\max_{\lambda\geq 0}\;\underset{u}{\max} \;{\cal L}(x,v,\lambda,u)\;,
\end{align*}
noting the switch in the sequence of $\min$-$\max$. It is sufficient to show that for any $x$, 
\begin{align*}
\gamma &:=\max_{\lambda\geq 0}\;\underset{u}{\max}\;\underset{v\geq 0}{\min}\ \lambda(g(x,u)-vh(u)) \;
\nonumber\\
&=\underset{v\geq 0}{\min}\max_{\lambda\geq 0}\;\underset{u}{\max} \; \lambda (g(x,u)-v h(u))\;.
\end{align*}
Let 
\begin{align}
{\cal G}(x):=\max_{h(u)\leq 0} g(x,u)=\underset{u}{\max}\;\underset{v\geq 0}{\min}\; (g(x,u)-vh(u))\;. \label{G_x}
\end{align}
So,
\begin{align}
\gamma=\max_{\lambda\geq 0}{\cal G}(x)=\left\{\begin{array}{ll} 0 & {\cal G}(x)\leq 0\\ \infty & {\cal G}(x)>0\end{array}\right.\;.  \label{gamma_2}
\end{align}
From strong duality for the parametric optimization problem (\ref{G_x}), we have 
\begin{align*}
{\cal G}(x)=\underset{v\geq 0}{\min}\;\underset{u}{\max} (g(x,u)-vh(u))\;. 
\end{align*}
Now, consider the second part in (\ref{alphas}), that is
\begin{align*}
\underset{v\geq 0}{\min}\max_{\lambda\geq 0}\;\underset{u}{\max} \; \lambda(g(x,u)-vh(u))\;.
\end{align*}
Starting from the first $\underset{u}{\max}$ at right, we get

\begin{align}
&\underset{u}{\max} \; \lambda (g(x,u)-vh(u))= \nonumber \\ \quad 
    &\left\{
    \begin{array}{ll}
    0 &\lambda=0\\
    \underset{u}{\max} \; \lambda (g(x,u)-vh(u)) &\lambda>0,\; v\geq 0
    \end{array}\right.
\end{align}

Then, consider $\displaystyle\max_{\lambda\geq 0}$ as
\begin{align*}
\max_{\lambda\geq 0} & \left\{\begin{array}{ll}0 &\lambda=0\\ \underset{u}{\max} \; \lambda (g(x,u)-vh(u)) & \lambda>0,\; v\geq 0 \end{array}\right.= \nonumber \\ &\left\{\begin{array}{ll}
\infty & v\geq 0,\; \underset{u}{\max}\;(g(x,u)-vh(u))>0\\
0& v\geq 0,\; \underset{u}{\max}\;(g(x,u)-vh(u))=0\\
0& v\geq 0,\; \underset{u}{\max}\;(g(x,u)-vh(u))<0
\end{array}\right.\;.
\end{align*}
Lastly, consider $\textstyle\underset{v\geq 0}{\min}$ as
\begin{align*}
\underset{v\geq 0}{\min} & \left\{\begin{array}{ll}
\infty & v\geq 0,\; \underset{u}{\max}\;(g(x,u)-vh(u))>0\\
0& v\geq 0,\; \underset{u}{\max}\;(g(x,u)-vh(u))=0\\
0& v\geq 0,\; \underset{u}{\max}\;(g(x,u)-vh(u))<0
\end{array}\right.= \nonumber\\ &\left\{\begin{array}{ll}\infty & \underset{v\geq 0}{\min}\;\underset{u}{\max}\;(g(x,u)-vh(u))>0\\
0 &\underset{v\geq 0}{\min} \; \underset{u}{\max}\;(g(x,u)-vh(u)\leq 0 \end{array}\right.\;,
\end{align*}
which is equal to $\gamma$ as claimed in (\ref{gamma_2}). Since minimizations (maximizations) can always be combined, the above result shows that 
\begin{align*}
\mu=\min_{x,v\geq 0}\max_{\lambda\geq 0,u} \;{\cal L}(x,v,\lambda,u)\;.
\end{align*}
Note that ${\cal L}$ is (jointly) convex in $(x,v)$, but it is not (jointly) concave in $(\lambda, u)$; although it is concave in each of these variables. We next show that notwithstanding this issue, the optimal solution to $\mu$ is a saddle point. Specifically, we show that
\begin{align}
\mu=\min_{x,v\geq 0}\;\max_{\lambda\geq 0,u} \;{\cal L}(x,v,\lambda,u)=\max_{\lambda\geq 0,u}\;\min_{x,v\geq 0} \;{\cal L}(x,v,\lambda,u)\;. \label{mu_last}
\end{align}
To show this, note that strong duality in the upper level parametric optimization problem in (\ref{RO_single}) implies 
\begin{align*}
\mu&=\underset{x}{\min}\max_{\lambda\geq 0}\;(f(x)+\lambda {\cal G}(x))=\max_{\lambda\geq 0}\;\underset{x}{\min}\;(f(x)+\lambda\; {\cal G}(x))\\
&=\max_{\lambda\geq 0}\;\underset{x}{\min}\;\underset{u}{\max}\;\underset{v\geq 0}{\min}\;(f(x)+\lambda(g(x,u)-vh(u)))\;,
\end{align*}
where the last equality comes from the definition of ${\cal G}(x)$ in (\ref{G_x}). To obtain the result in (\ref{mu_last}), we need to show that for any $\lambda\geq 0$\;,
\begin{align*}
\eta&=\underset{x}{\min}\;\underset{u}{\max}\;\underset{v\geq 0}{\min}\;(f(x)+\lambda(g(x,u)-vh(u)))\nonumber \\ &=\underset{u}{\max}\;\underset{x}{\min}\;\underset{v\geq 0}{\min}\;(f(x)+\lambda(g(x,u)-vh(u)))\;.
\end{align*}
Note that
\begin{align*}
\underset{v\geq 0}{\min}\;(-v\lambda h(u))=\left\{\begin{array}{ll} 0&\lambda  h(u)\leq 0\\ -\infty & \lambda h(u)>0\end{array}\right.\;.
\end{align*}
So, we have 
\begin{align*}
\underset{u}{\max} &\; (f(x)+\lambda g(x,u)+\left\{\begin{array}{ll} 0& \lambda h(u)\leq 0\\ -\infty & \lambda h(u)>0\end{array}\right.)\\ = &f(x)+\max_{\lambda h(u)\leq 0}\lambda g(x,u)\;.
\end{align*}
Thus,
\begin{align*}
\eta=\underset{x}{\min}\max_{\lambda h(u)\leq 0}\;(f(x)+\lambda g(x,u))\;.
\end{align*}
Since $g(x,u)$ is convex in $x$ and concave in $u$ as for Assumption \ref{assume_ssds2}, so $f(x)+\lambda g(x,u)$ has the same properties for $\lambda \geq 0$. It follows that the result does not change if we swap the order of the optimizations. Hence,
\begin{align*}
\eta&=\underset{x}{\min}\max_{\lambda h(u)\leq 0}\;(f(x)+\lambda g(x,u))\\ &=\max_{\lambda h(u)\leq 0}\;\underset{x}{\min}\;(f(x)+\lambda g(x,u))\\
&=\underset{u}{\max}\;\underset{x}{\min}\;\underset{v\geq 0}{\min} (f(x)+\lambda g(x,u)-vh(u))\;,
\end{align*}
which completes the proof of Theorem \ref{theorem_saddle}.
\end{proof}

Let $z^\star=(x^\star,\lambda^\star,u^\star,v^\star)$ be the saddle point for the Lagrangian (\ref{lags}). Using the result of Theorem \ref{theorem_saddle}, it follows that $z^\star$ enjoys the saddle point property, namely 
\begin{align}
{\cal L}(x^\star,\lambda,u,v^\star)\leq {\cal L}(x^\star,\lambda^\star,u^\star,v^\star)\leq {\cal L}(x,\lambda^\star,u^\star,v)\;.
\label{saddle}
\end{align}
From the above discussion on the development of Lagrangian function ${\cal L}$, it follows that ${\cal RO}$ problem can be viewed as two connected optimization problems. The lower level optimization problem (\ref{lower_ro}) parameterized by $x$ involving maximization over uncertain variables $u^{(i)}$ s and the upper level optimization problem involving minimization over the decision variable $x$. This insight can be used to define the Karush-Kuhn-Tucker (KKT) conditions for the ${\cal RO}$ problem as follows.
\begin{definition} 
Recalling that $x=(w,t)$, the KKT point $(x^\star,\lambda^\star,u^\star,v^\star)$ for the ${\cal RO}$ problem (\ref{ROs}) can be defined as follows
\begin{align}
\partial_x {\cal L}(x^\star,\lambda^\star,u^\star,v^\star)&=0\;,
\nonumber \\ \partial_{u^{(i)}} {\cal L}^{(i)}(x^\star,u^{(i)\star},v^{(i)\star})&=0\;, \label{kkt1}\\
\lambda^\star&\geq0, \nonumber \\
\lambda^\star \big(L(I^{(i)}+u^{(i)\star},y^{(i)},w^\star)-&\nonumber\\
t^\star -v^{(i)\star} \; h^{(i)}(u^{(i)\star})\big)&=0\;,\label{kkt3}\\
v^{(i)\star} &\geq 0,\nonumber \\ \; v^{(i)\star} \; h^{(i)}(u^{(i)\star})&=0\;,\label{kkt3_1}\\
L(I^{(i)}+u^{(i)\star},y^{(i)},w^\star)-&\nonumber\\
t^\star -v^{(i)\star} \; h^{(i)}(u^{(i)\star}) &\leq 0\; \nonumber,\\
h^{(i)}(u^{(i)\star}) &\leq 0\;,\label{kkt4}
\end{align}
for $i=1,\ldots,N$\;, where $\partial_x f $ is the notation for the gradient of $f$ w.r.t. $x$, ${\cal L}$ defined in (\ref{lags}), and ${\cal L}^{(i)}(x,u^{(i)},v^{(i)}):=L(I^{(i)}+u^{(i)},y^{(i)},w)
-v^{(i)} \; h^{(i)}(u^{(i)})$ for $i=1\ldots,N$.
\end{definition}

We now propose the following fundamental theorem on establishing the connection between the KKT and saddle point of the ${\cal RO}$ problem.
\begin{theorem} \label{KKT_saddle}
The KKT point $(x^\star,\lambda^\star,u^\star,v^\star)$ satisfying conditions (\ref{kkt1})-(\ref{kkt4}) also satisfies saddle point inequalities in (\ref{saddle}) and vice versa.
\end{theorem}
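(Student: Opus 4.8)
The plan is to prove the two implications separately, exploiting that under Assumption~\ref{assume_ssds2} the Lagrangian ${\cal L}$ is linear in $\lambda$ and in each $v^{(i)}$, convex in $x$, and concave in $u$. Throughout I would abbreviate $\phi(u):=\sum_{i=1}^N\big(L(I^{(i)}+u^{(i)},y^{(i)},w^\star)-v^{(i)\star}h^{(i)}(u^{(i)})\big)$, so that ${\cal L}(x^\star,\lambda,u,v^\star)=t^\star+\lambda\,(\phi(u)-t^\star)$. This reduction is what lets me sidestep the lack of joint concavity.

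For the forward direction (KKT $\Rightarrow$ saddle) I would treat the two inequalities in (\ref{saddle}) in turn. The right inequality asserts that $(x^\star,v^\star)$ minimizes ${\cal L}(\cdot,\lambda^\star,u^\star,\cdot)$ over $x$ and $v\geq0$; since ${\cal L}$ is jointly convex in $(x,v)$ for $\lambda^\star\geq0$, it is enough to verify first-order conditions. Stationarity $\partial_x{\cal L}=0$ is exactly (\ref{kkt1}); minimizing the term $-\lambda^\star\sum_i v^{(i)}h^{(i)}(u^{(i)\star})$ over $v\geq0$ uses $h^{(i)}(u^{(i)\star})\leq0$ from (\ref{kkt4}), so each coefficient is nonnegative, and complementary slackness (\ref{kkt3_1}) shows that $v^\star$ attains the minimal value. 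The left inequality is the nontrivial part, precisely because ${\cal L}$ is not jointly concave in $(\lambda,u)$. Here I would decouple the maximization using the bilinear structure: $\phi$ is concave in $u$ and stationarity (\ref{kkt1}) makes $u^\star$ a global maximizer, so $\phi(u)\leq\phi(u^\star)$ for every $u$; feasibility (\ref{kkt4}) then forces $\phi(u^\star)\leq t^\star$, whence $\phi(u)-t^\star\leq0$ uniformly in $u$. Therefore $\lambda\,(\phi(u)-t^\star)\leq0$ for every $\lambda\geq0$, while complementary slackness (\ref{kkt3}) gives ${\cal L}(x^\star,\lambda^\star,u^\star,v^\star)=t^\star$, and combining these yields ${\cal L}(x^\star,\lambda,u,v^\star)=t^\star+\lambda\,(\phi(u)-t^\star)\leq t^\star$, as required.

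For the converse (saddle $\Rightarrow$ KKT) I would read off stationarity and complementary slackness from the two separate optimization problems encoded by (\ref{saddle}), noting that a joint optimizer is in particular a partial optimizer in each block. Since $(\lambda^\star,u^\star)$ maximizes ${\cal L}(x^\star,\cdot,\cdot,v^\star)$, optimality in the unconstrained $u$ gives $\partial_u{\cal L}=0$, equivalently $\partial_{u^{(i)}}{\cal L}^{(i)}=0$ because $\lambda^\star>0$; optimality in $\lambda\geq0$ for the objective linear in $\lambda$ forces the coefficient $\phi(u^\star)-t^\star\leq0$ and $\lambda^\star(\phi(u^\star)-t^\star)=0$, recovering (\ref{kkt3}) and the first part of (\ref{kkt4}). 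Symmetrically, minimality of $(x^\star,v^\star)$ gives $\partial_x{\cal L}=0$, and minimizing the linear-in-$v$ objective over $v\geq0$ yields $-\lambda^\star h^{(i)}(u^{(i)\star})\geq0$, hence $h^{(i)}(u^{(i)\star})\leq0$, together with $v^{(i)\star}h^{(i)}(u^{(i)\star})=0$ from (\ref{kkt3_1}).

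The main obstacle is the forward left inequality, where joint concavity in $(\lambda,u)$ fails; the resolution is to exploit that ${\cal L}$ is linear in $\lambda$ and concave in $u$, so the global concave maximization of $\phi$ over $u$ combined with the feasibility bound $\phi(u^\star)\leq t^\star$ controls the coefficient of $\lambda$ uniformly, after which the remaining conditions follow from routine first-order optimality along the linear directions.
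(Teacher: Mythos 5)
Your argument is correct, and the forward direction is in substance the paper's own proof, just reorganized: the paper bounds the difference ${\cal L}(x^\star,\lambda^\star,u^\star,v^\star)-{\cal L}(x^\star,\lambda,u,v^\star)$ by splitting it into a $\lambda\cdot[\phi(u^\star)-\phi(u)]$ term (nonnegative by maximality of $u^\star$ for the concave inner Lagrangian) and a $(\lambda^\star-\lambda)(\phi(u^\star)-t^\star)$ term (nonnegative by complementary slackness and feasibility), which is exactly your uniform bound $\phi(u)-t^\star\leq\phi(u^\star)-t^\star\leq 0$ combined with ${\cal L}(x^\star,\lambda^\star,u^\star,v^\star)=t^\star$; the right inequality is likewise the same decomposition into the $(x,v)$-convex part plus the sign argument on $\lambda^\star(v^{(i)\star}-v^{(i)})h^{(i)}(u^{(i)\star})$. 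Where you genuinely diverge is the converse: you extract feasibility and complementary slackness directly from the linearity of ${\cal L}$ in $\lambda$ and in each $v^{(i)}$ (a positive coefficient would make the sup infinite, a strictly negative one forces the multiplier to zero), whereas the paper instead invokes zero duality gap for the two nested parametric convex subproblems ($\max_u g(x^\star,u)$ s.t.\ $h^{(i)}\leq 0$, and $\min_x f(x)$ s.t.\ $g(x,u^\star)\leq 0$) and sandwiches $g(x^\star,u^\star)$ and $f(x^\star)$ between their dual values. Your route is more elementary and self-contained; the paper's avoids one weakness of yours: your derivation of $h^{(i)}(u^{(i)\star})\leq 0$ and $v^{(i)\star}h^{(i)}(u^{(i)\star})=0$ from the $v$-minimization only yields $\lambda^\star h^{(i)}(u^{(i)\star})\leq 0$ and $\lambda^\star v^{(i)\star}h^{(i)}(u^{(i)\star})=0$, so it needs $\lambda^\star>0$ (which Theorem 1 assumes but Theorem \ref{KKT_saddle} does not state), and the clause ``from (\ref{kkt3_1})'' should read ``which establishes (\ref{kkt3_1})'' --- in the converse that condition is a conclusion, not a hypothesis. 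Neither issue is fatal given the paper's standing assumption $\lambda^\star>0$, but you should state the dependence explicitly.
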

\begin{proof}
Considering the definitions in (\ref{fgh}), we first show that the KKT point satisfies the saddle point property. Note that
\begin{align*}
&{\cal L}(x^\star,\lambda^\star,u^\star,v^\star)-{\cal L}(x^\star,\lambda,u,v^\star)\\
=\;& \lambda^\star (g(x^\star,u^\star)-\textstyle \sum_{i=1}^N v^{(i)\star} h^{(i)}(u^{(i)\star}))\;- \\ & \lambda (g(x^\star,u)-\textstyle \sum_{i=1}^N v^{(i)\star} h^{(i)}(u^{(i)}))\\
=\;& (\lambda^\star+\lambda-\lambda) (g(x^\star,u^\star)-\textstyle \sum_{i=1}^N v^{(i)\star} h^{(i)}(u^{(i)\star}))\; -\\
&\lambda (g(x^\star,u)-\textstyle \sum_{i=1}^N v^{(i)\star} h^{(i)}(u^{(i)}))\\
=\;&\lambda (g(x^\star,u^\star)-\textstyle \sum_{i=1}^N v^{(i)\star} h^{(i)}(u^{(i)\star}))\;-\\
&\lambda (g(x^\star,u)-\textstyle \sum_{i=1}^N v^{(i)\star} h^{(i)}(u^{(i)}))\;+\\
&(\lambda^\star-\lambda)(g(x^\star,u^\star)-\textstyle \sum_{i=1}^N v^{(i)\star} h^{(i)}(u^{(i)\star}))\;.
\end{align*}

Since $u^\star$ is maximizing $ g(x^\star,u)-\textstyle \sum_{i=1}^N v^{(i)\star} h^{(i)}(u^{(i)})$, we have
\begin{align*}
(g(x^\star,u^\star)-\textstyle \sum_{i=1}^N v^{(i)\star} h^{(i)}(u^{(i)\star}))-&\\(g(x^\star,u)-\textstyle \sum_{i=1}^N v^{(i)\star} h^{(i)}(u^{(i)}))&\geq 0\;.
\end{align*}
By complimentary slackness property of the KKT point, we get $ \lambda^\star(g(x^\star,u^\star)-\textstyle \sum_{i=1}^N v^{(i)\star} h^{(i)}(u^{(i)\star}))=0$ and $g(x^\star,u^\star)\leq 0$. Combining all these implies
\begin{align*}
{\cal L}(x^\star,\lambda^\star,u^\star,v^\star)-{\cal L}(x^\star,\lambda,u,v^\star)\geq 0\;.
\end{align*}
We next show that ${\cal L}(x,\lambda^\star,u^\star,v)-{\cal L}(x^\star,\lambda^\star,u^\star,v^\star)$ is non-negative. Note that
\begin{align*}
&{\cal L}(x,\lambda^\star,u^\star,v)-{\cal L}(x^\star,\lambda^\star,u^\star,v^\star)\\
=& f(x)-f(x^\star)+\lambda^\star(g(x,u^\star)-\textstyle \sum_{i=1}^N  v^{(i)} h^{(i)}(u^{(i)\star}) \;-\\ &\lambda^\star(g(x^\star,u^\star)-\textstyle \sum_{i=1}^N v^{(i)\star} h^{(i)}(u^{(i)\star})))\\
=& f(x)-f(x^\star)+ \lambda^\star(g(x,u^\star)-g(x^\star,u^\star))\;+\\ &\sum_{i=1}^N\lambda^\star (v^{(i)^\star}-v^{(i)})h^{(i)}(u^{(i)\star})\;.
\end{align*}

Since $(x^\star,v^\star)$ minimizes the Lagrangian, we have 
\[f(x)-f(x^\star)+ \lambda^\star(g(x,u^\star)-g(x^\star,u^\star))\geq 0\;.\]
Similarly, using complimentary slackness condition and the fact that $h^{(i)}(u^{(i)\star})\leq 0$, $v^{(i)}\geq 0$, and $\lambda^\star\geq 0$, it follows that 
$ \textstyle \sum_{i=1}^N\lambda^\star (v^{(i)\star}-v^{(i)})h^{(i)}(u^{(i)\star})\geq 0\;.$\\
Now, we show that saddle point satisfies KKT conditions. Note that
\begin{align*}
&\min_{x,v} \;{\cal L}(x,\lambda^\star,u^\star,v)={\cal L}(x^\star,\lambda^\star,u^\star,v^\star)\leq {\cal L}(x,\lambda^\star,u^\star,v),\\
&\max_{u,\lambda} \;{\cal L}(x^\star,\lambda,u,v^\star)={\cal L}(x^\star,\lambda^\star,u^\star,v^\star)\geq {\cal L}(x^\star,\lambda,u,v^\star)\;.
\end{align*}
Hence,
\begin{align*}
\partial_xf(x^\star)+\lambda^\star \partial_{x}g(x^\star,u^\star)&=0\;,\\\partial_{u^{(i)}} g(x^\star,u)-\textstyle \sum_{i=1}^N v^{(i)\star} \partial_{u^{(i)}} h^{(i)}(u^{(i)\star})&=0\;.
\end{align*}
To show complimentary slackness, consider the optimization problem with fixed $x=x^\star$ as
\begin{align*}
\max_{u^{(i)},\forall i} \;\;g(x^\star,u)\nonumber\;
{\rm s.t.}\;\;h^{(i)}(u^{(i)})\leq 0\;,\;i=1,\ldots,N\;.
\end{align*}
With $g$ concave in $u$ and each $h^{(i)}$ convex in $u^{(i)}$, the above problem is convex with zero duality gap and hence, based on convex optimization theory \cite{boyd2004}, we have
\begin{align*}
g(x^\star,u^\star)=&\;G(x^\star,v^\star)\nonumber\\
=&\;\max_{u^{(i)},\forall i} \left(g(x^\star,u)-\textstyle \sum_{i=1}^N v^{(i)\star} h^{(i)}(u^{(i)})\right)\\
\geq& \; g(x^\star,u^\star)-\textstyle \sum_{i=1}^N v^{(i)\star} h^{(i)}(u^{(i)\star})\geq\; g(x^\star,u^\star)\;.
\end{align*}
The first inequality is true because $h^{(i)}(u^{(i)\star})\leq 0$ and $v^{(i)\star} \geq 0$. Hence, from the last inequality we get $v^{(i)\star} h^{(i)}(u^{(i)\star})=0$. We next show that $\lambda^\star g(x^\star,u^\star)=0$. For fixed $u^\star$, consider the optimization problem 
\begin{align*}
&\underset{x}{\min} \;\;f(x)\nonumber\;\;{\rm s.t.}\;\;g(x,u^\star)\leq 0\;.
\end{align*}
For fixed $u^\star$, above is a convex optimization problem and hence, we have zero duality gap. Then similarly,
\begin{align*}
f(x^\star)=&\;F(\lambda^\star, u^\star)=\;\inf_{x} f(x)+\lambda^\star g(x,u^\star)\nonumber\\
\leq &\; f(x^\star)+\lambda^\star g(x^\star,u^\star)
\leq \; f(x^\star)\;.
\end{align*}
The first inequality is true because $g(x^\star,u^\star)\leq 0$, and hence, the last inequality implies $\lambda^\star g(x^\star,u^\star)=0$. This completes the proof of Theorem \ref{KKT_saddle}.
\end{proof}

We can specify the equilibrium point $ (x^\star,\lambda^\star,u^\star,v^\star)$ of the dynamical system (\ref{sspd1s})-(\ref{sspd4s}) as
\begin{align*}
\partial_x f(x^\star)+\lambda^\star \partial_x g(x^\star,u^\star,\xi)&=0,\\
\partial_{u^{(i)}} g(x^\star,u^\star,\xi)-v^{(i)\star} \partial_{u^{(i)}} h^{(i)}(u^{(i)\star})&=0\;,\\
\lambda^\star g(x^\star,u^\star,\xi)&=0,\\
\lambda^\star &\geq 0,\\
g(x^\star,\lambda^\star,\xi) &\leq 0,\\
\lambda^\star v^{(i)\star} h^{(i)}(u^{(i)\star}) &=0,\\ v^{(i)\star}&\geq 0,\\
h^{(i)}(u^{(i)\star})&\leq 0.\label{kkt_ssds}
\end{align*}
for $i=1,\ldots,N$. The above conditions can also be viewed as the generalization of the KKT conditions from the deterministic setting to stochastic setting. Furthermore, by defining the Lagrangian function, ${\cal L}(x,\lambda,u,v,\xi):=f(x)+\lambda(g(x,u,\xi)-\textstyle \sum_{i=1}^N v^{(i)} h^{(i)}(u^{(i)}))$, following generalization of saddle point condition from deterministic setting (\ref{saddle}) to stochastic setting can be considered
\begin{align}
{\cal L}(x^\star,\lambda,u,v^\star,\xi)\leq {\cal L}(x^\star,\lambda^\star,u^\star,v^\star,\xi)\leq {\cal L}(x,\lambda^\star,u^\star,v,\xi)\;.
\end{align}

\subsection*{Convergence Proof of Stochastic Version of the Algorithm}

For the convergence proof of SSDS algorithm in (\ref{sspd1s})-(\ref{sspd4s}), we will assume $\lambda^\star>0$ and that numbers $R$ and $R_\lambda \leq R$ are known satisfying
\begin{align*}
\Vert z_{1}\Vert_{2} \leq R,\ \Vert z^\star\Vert_{2} \leq R,\ \Vert \lambda^\star\Vert_{2} \leq R_\lambda \ .
\end{align*}
We will also assume that the norm of the subgradients of $f$, $g$ and $h^{(i)}$ s, and the values of $f$, $g$ and $h^{(i)}$ s are bounded on compact sets based on Assumption \ref{assume_ssds2}. Let us start by defining the compact notations
\begin{align*}
{\cal N}(\Vert z_{k+1}-z^{\star}\Vert_{2}^{2}):=\;&\Vert x_{k+1}-x^{\star}\Vert_{2}^{2}+\Vert \lambda_{k+1}-\lambda^{\star}\Vert_{2}^{2}\\+&\lambda^\star \Vert u_{k+1}-u^{\star}\Vert_{2}^{2}+\Vert v_{k+1}-v^{\star}\Vert_{2}^{2} \ ,\\
(z_{k+1}-z^{\star})_{\lambda^\star}:=\;&(x_{k+1}-x^{\star})+( \lambda_{k+1}-\lambda^{\star})\\ +&\lambda^\star (u_{k+1}-u^{\star})+( v_{k+1}-v^{\star}) \ ,\\
\Vert T \Vert_{2,\lambda^\star}^2:=\;&\Vert T^{(x)} \Vert_2^2+\Vert T^{(\lambda)} \Vert_2^2 \\ +&\lambda^\star \Vert T^{(u)} \Vert_2^2+\Vert T^{(v)} \Vert_2^2 \ .
\end{align*}
By using the non-expansive property of positive projection operations for $\lambda$ and $v$ iterations, we write out the following basic equations

\begin{align*}
&\;E_{\xi_k}[{\cal N}(\Vert z_{k+1}-z^{\star}\Vert_{2}^{2})]\\
=&\;E_{\xi_k}[\Vert x_{k}-\alpha_{k} (\partial_x f(x_{k},\xi_k)+\lambda_k \partial_x g(x_k,u_k,\xi_k)-x^{\star}\Vert_{2}^{2}]\\
&+E_{\xi_k}[\Vert [\lambda_k+\alpha_k g(x_k,u_k,\xi_k)-v_k h(u_k)]_+-\lambda^{\star}\Vert_{2}^{2}]\\
&+E_{\xi_k}[\lambda^\star \Vert u_k+\alpha_k (\partial_u g(x_k,u_k,\xi_k)-v_k \partial_u h(u_k)-u^{\star}\Vert_{2}^{2}]\\
&+E_{\xi_k}[\Vert [v_k+\alpha_k (\lambda_k h(u_k))]_+-v^{\star}\Vert_{2}^{2}]\\
\leq &\;E_{\xi_k}[\Vert x_k-x^{\star}-\alpha_k (\partial_x f(x_k,\xi_k)+\lambda_k \partial_x g(x_k,u_k,\xi_k))\Vert_{2}^{2}]\\
&+E_{\xi_k}[\Vert \lambda_k-\lambda^{\star}+\alpha_k (g(x_k,u_k,\xi_k)-v_k h(u_k))\Vert_{2}^{2}]\\
&+E_{\xi_k}\lambda^\star \Vert u_k-u^{\star}+\alpha_k (\partial_u g(x_k,u_k,\xi_k)-v_k \partial_u h(u_k))\Vert_{2}^{2}]\\
&+E_{\xi_k}[\Vert v_k-v^{\star}+\alpha_k (\lambda_k h(u_k))\Vert_{2}^{2}]\\
=&\;\Vert x_k-x^{\star}\Vert_{2}^{2}+\Vert \lambda_k-\lambda^{\star}\Vert_{2}^{2}+\lambda^\star \Vert u_k-u^{\star}\Vert_{2}^{2}+\Vert v_k-v^{\star}\Vert_{2}^{2}\\
&-2E_{\xi_k}[\alpha_k(\partial_x f(x_k,\xi_k)+\lambda_k \partial_x g(x_k,u_k,\xi_k)) ^\top(x_k-x^{\star})]\\
&+2E_{\xi_k}[\alpha_k(g(x_k,u_k,\xi_k)-v_k h(u_k))^\top(\lambda_k-\lambda^{\star})]\\
&+2E_{\xi_k}[\alpha_k\lambda^\star (\partial_u g(x_k,u_k,\xi_k)-v_k \partial_u h(u_k)) ^\top(u_k-u^{\star})]\\
&+E_{\xi_k}[2\alpha_k(\lambda_k h(u_k))^\top(v_k-v^{\star})]\\
&+E_{\xi_k}[(\alpha_k)^{2}\Vert \partial_x f(x_k,\xi_k)+\lambda_k \partial_x g(x_k,u_k,\xi_k)\Vert_{2}^{2}]\\
&+E_{\xi_k}[(\alpha_k)^{2}\Vert g(x_k,u_k,\xi_k)-v_k h(u_k)\Vert_{2}^{2}]\\
&+E_{\xi_k}[(\alpha_k)^{2} \lambda^\star \Vert \partial_u g(x_k,u_k,\xi_k)-v_k \partial_u h(u_k) \Vert_{2}^{2}]\\
&+E_{\xi_k}[(\alpha_k)^{2}\Vert \lambda_k h(u_k) \Vert_{2}^{2}] \ .
\end{align*}

Using the compact notation, this reads to be
\begin{align*}
&E_{\xi_k}[{\cal N}(\Vert z_{k+1}-z^{\star}\Vert_{2}^{2})] \leq {\cal N}(\Vert z_k-z^{\star}\Vert_{2}^{2})\\&-2E_{\xi_k}[\alpha_k T_k^\top{\cal N}(z_{k+1}-z^{\star})]+E_{\xi_k}[\alpha_k^2 \; \Vert T_k \Vert_{2,\lambda^\star}^2] \;.
\end{align*}

Considering the upper bound $R_\lambda$ for two-norm of $\lambda^\star$, we can write
\begin{align*}
&E_{\xi_k}[{\cal N}(\Vert z_{k+1}-z^{\star}\Vert_{2}^{2})] \leq {\cal N}(\Vert z_k-z^{\star}\Vert_{2}^{2})\\& -2E_{\xi_k}[\alpha_k T_k^\top{\cal N}(z_{k+1}-z^{\star})]+C \; \gamma_k^2 \; .
\end{align*}
Taking expectation on both the sides with respect to $E_{\xi_0^{k-1}}$ on both the sides  and using the fact that $\xi_0^{k-1}$ is independent of $\xi_k$, we obtain
\begin{align*}
&E_{\xi_0^k}[{\cal N}(\Vert z_{k+1}-z^{\star}\Vert_{2}^{2})] \leq E_{\xi_0^{k-1}}{\cal N}(\Vert z_k-z^{\star}\Vert_{2}^{2})\\&-2E_{\xi_0^k}[\alpha_k T_k^\top{\cal N}(z_{k+1}-z^{\star})]+C \; \gamma_k^2 \;,
\end{align*}
\begin{align}
&E_{\xi_0^{k-1}}[{\cal N}(\Vert z_{k}-z^{\star}\Vert_{2}^{2})] \leq E_{\xi_0^{k-2}}{\cal N}(\Vert z_{k-1}-z^{\star}\Vert_{2}^{2})\\&-2E_{\xi_0^{k-1}}[\alpha_{k-1} T_{k-1}^\top{\cal N}(z_{k}-z^{\star})]+C \; \gamma_{k-1}^2,
\end{align}
where $C$ is defined as $\max\{1,R_\lambda\}$. 
Substituting  inequality (9) into (8)  we obtain
\begin{align*}
&E_{\xi_0^k}[{\cal N}(\Vert z_{k+1}-z^{\star}\Vert_{2}^{2})] \leq E_{\xi_0^{k-2}}{\cal N}(\Vert z_{k-1}-z^{\star}\Vert_{2}^{2})\\&-2E_{\xi_0^{k-1}}[\alpha_{k-1} T_{k-1}^\top{\cal N}(z_{k}-z^{\star})]\\&-2E_{\xi_0^k}[\alpha_k T_k^\top{\cal N}(z_{k+1}-z^{\star})]\\
&+C (\gamma_k^2+\gamma_{k-1}^2) \;.
\end{align*}
Using recursion, we obtain
\begin{align*}
&E_{\xi_0^k}[{\cal N}(\Vert z_{k+1}-z^{\star}\Vert_{2}^{2})] \leq E_{\xi_0}{\cal N}(\Vert z_{1}-z^{\star}\Vert_{2}^{2})\\&-2\sum_{i=1}^k E_{\xi_0^{i}}[\alpha^{(i)} T^{(i)\top}{\cal N}(z_{i+1}-z^{\star})]+C\sum_{i=1}^k{\gamma^{(i)}}^2 \;,
\end{align*}
\begin{align*}
&E_{\xi_0^k}[{\cal N}(\Vert z_{k+1}-z^{\star}\Vert_{2}^{2})]+2\sum_{i=1}^k E_{\xi_0^{i}}[\alpha^{(i)} T^{(i)\top}{\cal N}(z_{i+1}-z^{\star})] \\
\leq&\; E_{\xi_0}{\cal N}(\Vert z_{1}-z^{\star}\Vert_{2}^{2})+C\sum_{i=1}^k{\gamma^{(i)}}^2\leq C(4R^2+S) \;
\end{align*}

\begin{align}
&E_{\xi_0^k}[{\cal N}(\Vert z_{k+1}-z^{\star}\Vert_{2}^{2})]+2\Big( E_{\xi_0^{1}}[\alpha_{1} T_{1}^\top{\cal N}(z_{2}-z^{\star})]\nonumber\\&+E_{\xi_0^{2}}[\alpha_{2} T_{2}^\top{\cal N}(z_{3}-z^{\star})]+\ldots\nonumber\\
&+E_{\xi_0^{k}}[\alpha_{k} T_{k}^\top{\cal N}(z_{k+1}-z^{\star})]\Big) \leq E_{\xi_0}{\cal N}(\Vert z_{1}-z^{\star}\Vert_{2}^{2})\nonumber\\&+C\sum_{i=1}^k{\gamma^{(i)}}^2\leq C(4R^2+S) \;\label{compact_form11},
\end{align}

where the last inequality comes from the bounds on $\Vert z_{1}\Vert_{2}$, $\Vert z^\star\Vert_{2}$, $\Vert \lambda^\star\Vert_{2}$ and $\sum_{k=1}^{\infty}(\gamma_k)^{2}$.

We argue that the sum on the left-hand side of (\ref{compact_form11}) is non-negative.

\begin{align*}
E_{\xi_0^k}[\alpha_k T_k^\top {\cal N}(z_{k+1}-z^{\star})] =E_{\xi_0^{k-1}}[ [E_{\xi_k}[\alpha_k T_k^\top {\cal N}(z_{k+1}-z^\star)]]] 
\end{align*}
Where we have use the fact that $\xi_0^{k-1}$ is independent of $\xi_k$.
\begin{align*}
&E_{\xi_k}[ \alpha_k T_k^\top {\cal N}(z_{k+1}-z^{\star})] \\
=&\;E_{\xi_k}[ \alpha_k \partial_x f(x_k,\xi_k)+\alpha_k \lambda_k \partial_x g(x_k,u_k,\xi_k) ^\top(x_k-x^{\star})] \\
&-E_{\xi_k}[\alpha_k(g(x_k,u_k,\xi_k)-v_k h(u_k))^\top(\lambda_k-\lambda^{\star})] \\
&-E_{\xi_k}[\alpha_k\lambda^\star (\partial_u g(x_k,u_k,\xi_k)-v_k \partial_u h(u_k)) ^\top(u_k-u^{\star})]\\
&+E_{\xi_k}[-\alpha_k \lambda_k h(u_k)^\top(v_k-v^{\star})]\\
\geq &\; E_{\xi_k}[\alpha_k(f(x_k,\xi_k)-f(x^\star,\xi_k))]+E_{\xi_k}[\Ccancel[red]{\alpha_k \lambda_k g(x_k,u_k,\xi_k)}\\&-\alpha_k\lambda_k g(x^\star,u_k,\xi_k)] -E_{\xi_k}[\Ccancel[red]{\alpha_k \lambda_k g(x_k,u_k,\xi_k)}\\&+\Ccancel[blue]{\alpha_k\lambda^\star g(x_k,u_k,\xi_k)}]+E_{\xi_k}[\Ccancel[green]{\alpha_k \lambda_k v_k h(u_k)}-\Ccancel[magenta]{\alpha_k\lambda^\star v_k h(u_k)}]\\
&+E_{\xi_k}[\alpha_k \lambda^\star g(x_k,u^\star,\xi_k)-\Ccancel[blue]{\alpha_k\lambda^\star g(x_k,u_k,\xi_k)}]\\&+E_{\xi_k}[\Ccancel[magenta]{\alpha_k \lambda^\star v_k h(u_k)}] -E_{\xi_k}[\alpha_k\lambda^\star v_k h(u^\star)-\Ccancel[green]{\alpha_k \lambda_k v_k h(u_k)}\\&+\alpha_k \lambda_k v^\star h(u_k)]\\
=&\;E_{\xi_k}[\alpha_k(\big(f(x_k,\xi_k)+\lambda^\star g(x_k,u^\star,\xi_k)-\lambda^\star v_k h(u^\star)\big)]\\
&-E_{\xi_k}[\alpha_k\big(f(x^\star,\xi_k)+\lambda_k g(x^\star,u_k,\xi_k)-\lambda_k v^\star h(u_k)\big)]\\
=&\;E_{\xi_k}[\alpha_k{\mathcal L}(x_k,\lambda^\star,u^\star,v_k,\xi_k)]-E_{\xi_k}[\alpha_k{\mathcal L}(x^\star,\lambda_k,u_k,v^\star,\xi_k)]\\
\geq & \;E_{\xi_k}[ \alpha_k {\mathcal L}(x_k,\lambda^\star,u^\star,v_k,\xi_k)-\alpha_k{\mathcal L}(x^\star,\lambda^\star,u^\star,v^\star,\xi_k)] \geq 0 \ .
\end{align*}

Since $\alpha_k\geq 0$, we have from above that
\[E_{\xi_0^k}[\alpha_k T_k^\top {\cal N}(z_{k+1}-z^{\star})]\geq 0\]

\begin{remark} \label{strict_convexity_concavity}
Since $f$ is assumed to be strictly convex in $x$ and $g$  is strictly concave in $u$, 

the above inequality

is strict whenever  $x\neq x^\star$ and $u \neq u^\star$. Moreover, if the inequality   becomes an equality, we get $x=x^\star$ and  $u=u^\star$.
\end{remark}

We have
\begin{align*}
&E_{\xi_0^k}[{\cal N}(\Vert z_{k+1}-z^{\star}\Vert_{2}^{2})]\leq C (4R^{2}+S),\\
&2\sum_{i=1}^{k} \gamma^{(i)} E_{\xi_0^i}\left[ \frac{T^{(i)\top}}{\Vert T^{(i)} \Vert} (z^{(i)}-z^{\star})_{\lambda^\star}\right]
\leq C(4R^{2}+S)
\end{align*}
By assumption, the norm of Subgradients on the set $\Vert z_k\Vert_{2,\lambda^\star}\leq D$ is bounded, so it follows that $\Vert T_k\Vert_{2}$ is bounded. Because the sum of $\gamma_k$ diverges, for the sum
\begin{align*}
\sum_{i=1}^{k} \gamma^{(i)} E_{\xi_0^i}\left[\frac{T^{(i)^\top}}{\Vert T^{(i)} \Vert} (z_{i+1}-z^{\star})_{\lambda^\star}\right]
\end{align*}
to be bounded, we need
\begin{align*}
\lim_{k\rightarrow\infty}E_{\xi_0^k}\left[\frac{T_k^\top}{\Vert T_k \Vert_{2}}{\cal N}(z_{k+1}-z^{\star})\right]=0.
\end{align*}
Since $\Vert T_k\Vert_{2}$ is bounded, the numerator $E_{\xi_0^k}[T_k^\top{\cal N}(z_{k+1}-z^{\star})]$ has to go to zero in the limit. From Remark \ref{strict_convexity_concavity}, we conclude that
\begin{align*}
\lim_{k\rightarrow\infty} E_{\xi_0^{k-1}}[x_k]=x^\star,\;\;\;
\lim_{k\rightarrow\infty} E_{\xi_0^{k-1}}[u_k]=u^\star.
\end{align*}

\end{document}